\documentclass[11pt]{article}

\usepackage[T1]{fontenc}
\usepackage[utf8]{inputenc}
\usepackage{lmodern}
\usepackage[margin=1in]{geometry}
\usepackage{amsmath,amssymb,amsthm}
\usepackage{thm-restate}
\usepackage{mdframed}
\usepackage{mathtools}
\usepackage{stmaryrd}
\usepackage{microtype}
\usepackage{graphicx}
\usepackage{booktabs}
\usepackage{enumitem}
\usepackage{needspace}
\usepackage{tikz}
\usetikzlibrary{arrows.meta,calc,decorations.pathreplacing}
\usepackage{amartya_ltx}
\usepackage{algorithm}
\usepackage{algpseudocode}
\usepackage[font=small,skip=6pt]{caption}
\usepackage{xurl}
\usepackage[hidelinks,hypertexnames=false]{hyperref}
\usepackage[nameinlink,noabbrev]{cleveref}

\theoremstyle{plain}

\newtheorem{lemma}{Lemma}
\newtheorem{informaltheorem}{Theorem}

\definecolor{theoremfill}{RGB}{243,247,250}
\definecolor{theoremborder}{RGB}{160,180,196}
\surroundwithmdframed[
  backgroundcolor=theoremfill,linecolor=theoremborder,linewidth=0.8pt,
  leftmargin=-8pt,rightmargin=-8pt,
  innerleftmargin=7.5pt,innerrightmargin=7.5pt,
  innertopmargin=0pt,innerbottommargin=6pt,
  skipabove=3pt,skipbelow=3pt,nobreak=true]{informaltheorem}
\newtheorem{proposition}{Proposition}
\newtheorem{fact}{Fact}
\theoremstyle{definition}
\newtheorem*{question}{Question}
\surroundwithmdframed[
  backgroundcolor=theoremfill,linecolor=theoremborder,linewidth=0.8pt,
  leftmargin=-8pt,rightmargin=-8pt,
  innerleftmargin=7.5pt,innerrightmargin=7.5pt,
  innertopmargin=0pt,innerbottommargin=6pt,
  skipabove=3pt,skipbelow=3pt,nobreak=true]{question}
\newtheorem{definition}{Definition}
\newmdenv[linewidth=0.5pt,innerleftmargin=8pt,innerrightmargin=8pt,
  innertopmargin=0pt,innerbottommargin=4pt,
  skipabove=8pt,skipbelow=8pt,nobreak=true]{gamebox}
\theoremstyle{plain}
\DeclareMathOperator{\Var}{Var}
\DeclareMathOperator{\Ldim}{Ldim}
\newcommand{\Point}{\mathrm{POINT}_3}
\allowdisplaybreaks[1]

\hypersetup{
  pdftitle={Private online learning and prediction for Littlestone classes},
  pdfauthor={Amartya Sanyal}
}
\title{Private online learning and prediction for Littlestone classes}
\author{Amartya Sanyal\\
  \small Department of Computer Science\\
  University of Copenhagen\\
  \texttt{amsa@di.ku.dk}}
\date{}

\begin{document}
\maketitle

\begin{abstract}
We study differentially private online learning and jointly private online
prediction against oblivious realisable adversaries. Online learning
releases a hypothesis before observing each example, whereas online
prediction releases only a prediction for the current instance. For every
class of finite Littlestone dimension $d\ge2$, every
$(\epsilon,\delta)$-private online learner incurs
$\Omega(\min\{(d/\epsilon)(\log T)^{2/3},d/\delta\})$ expected mistakes
on some deterministic realisable stream of length $T$, provided
$0<\epsilon\le1$, $0\le\delta\le\min\{\epsilon^2,1/4\}$, and
$T\ge(d/\epsilon)^2$. We also prove an
$\Omega(\min\{(\log(2T))^{2/3},1/\delta\})$ lower bound for point
functions on a three-element domain at $\epsilon=1/100$ and
$0\le\delta\le1/4$. For private
prediction, every nonempty class of finite Littlestone dimension $d$
admits an $(\epsilon,\delta)$-jointly private predictor with at most
$2^{2^{c(d+1)^2}}\epsilon^{-2}\log^2(2/(\epsilon\delta))$ expected
mistakes, independently of $T$, for $0<\epsilon\le1$, $0<\delta<1/10$,
and an absolute constant $c>0$. Thus, for fixed $d\ge2$ and $\epsilon$,
at $\delta=\Theta(1/\log T)$, private learning requires
$\Omega((\log T)^{2/3})$ expected mistakes, whereas private prediction
admits $O((\log\log T)^2)$. The same separation holds for point
functions on three elements at $\epsilon=1/100$.

\end{abstract}

\begingroup
\section{Introduction}
\label{sec:introduction}

In realisable online learning, given a hypothesis class \(\cH\), a
learner predicts the labels of a sequence of examples consistent with
one unknown \(h\in\cH\). It is well known that if the class \(\cH\)
has Littlestone dimension $\Ldim\br{\cH}=d$, then the~\emph{Standard
Optimal Algorithm} makes at most $d$ mistakes on any stream,
independently of the horizon $T$~\cite{Littlestone88}. Additionally,
this bound is also known to be tight; thus Littlestone dimension
tightly characterises the mistake bound for realisable online
learning.

\medskip
\noindent\textbf{Private online learning} Under differential privacy,
there exist algorithms with mistake bounds logarithmic in $T$ for
fixed privacy parameters~\cite{GL21,Lyu26}.  Whether this dependence
on the horizon is necessary was posed as an open question
in~\cite{SR22}.  Subsequent lower bounds have shown that this
logarithmic dependence may be necessary, but only for certain
algorithms~\cite{DSS24}, hypothesis classes~\cite{CLNSS24}, or ranges
of $\delta$,~($\delta\le\epsilon/T$ in~\cite{LWY24}). This raises the
following question:\looseness=-1

\begin{question}
How many mistakes must an arbitrary private online learner make on a
fixed hypothesis class for every value of $\delta$?
\end{question}

We make progress on this question in the range of $\delta$ left open
by prior work, up to order $1/\log T$. We prove a lower bound
polynomial in $\log T$, first for point functions on three elements
and then for every class of finite Littlestone dimension at least two.

\noindent Our argument starts with a simpler class, point functions on a
three-element domain:
\[
  \Point=\bc{f_1,f_2,f_3},
  \qquad f_j:\bc{1,2,3}\to\bc{0,1},
  \qquad f_j\br{x}=\mathbf{1}\bc{x=j}.
\]
This class has Littlestone dimension one and a single mistake suffices
without privacy. The main point is that $f_1$ and $f_2$ agree at point
$3$ but disagree at point $1$; thus one mistake at \(1\) suffices. We
first prove a lower bound for this problem with online DP
in~\Cref{thm:informal-point} and then extend this to every class of
Littlestone dimension at least two in~\Cref{thm:informal-dimension}.

\begin{informaltheorem}
\label{thm:informal-point}
For $\epsilon_0=1/100$ and $0\le\delta\le1/4$, every
$\br{\epsilon_0,\delta}$-differentially private online learner of
$\Point$ has a realisable stream of length $T$ on which
\[
  \bE\bs{M_T}=\Om{\min\bc{\br{\log\br{2T}}^{2/3},\frac1\delta}}.
\]
\end{informaltheorem}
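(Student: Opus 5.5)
We plan to prove \Cref{thm:informal-point} by a hierarchical hybrid argument over a family of oblivious streams built from the three points. The streams start with a long prefix of examples $\br{3,0}$. Such examples are consistent with both $f_1$ and $f_2$, so the prefix reveals nothing. The stream then reveals, at an unknown time $t$, a single labelled example $\br{1,1}$ or $\br{2,1}$, depending on the target. Realisability forces $x=3$ to be labelled $0$ in all cases, and the learner is charged whenever its released hypothesis misclassifies the next point. Before the reveal the learner cannot know whether $f_1$ or $f_2$ is the target. Each hypothesis released on the prefix must therefore place some probability on predicting $1$ at point $1$ or at point $2$. If the adversary then queries that point while the true label is $0$, so that the target is $f_3$ or the other point function, the learner pays. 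Consequently, a learner that makes few mistakes must, for most of the prefix, release hypotheses that predict $0$ on both $1$ and $2$. Once it learns $f_1$ (or $f_2$), it must switch quickly.

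The core step is a privacy-based ``switching'' lemma. Consider two streams that differ in one example: one contains $\br{1,1}$ at time $t$ and the other contains $\br{3,0}$ there. Group privacy with $\br{\epsilon_0,\delta}$ implies that the output distributions on any window after $t$ differ by at most a factor $e^{\epsilon_0}$ plus additive $\delta$. On the stream with the reveal, the learner must move its predicted label at point $1$ towards $1$ within few rounds, or else it pays mistakes when the adversary queries point $1$. On the stream without the reveal it must keep that label at $0$. We will then build $k$ levels of scales, with window lengths $T_1\ll T_2\ll\dots\ll T_k$ growing doubly-exponentially or like $T_{i+1}=T_i^{c}$. At each level we insert probes $\br{1,1}$ or $\br{1,0}$; the labels $\br{1,0}$ are legitimate under $f_2$ or $f_3$. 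We then argue by induction. If the learner spends fewer than $m$ mistakes at level $i$, privacy forces its probability of predicting $1$ at point $1$ to remain within a narrow band of width about $\epsilon_0$ across $T_{i}/T_{i-1}$ sub-windows. That rigidity then costs a constant number of mistakes at the next scale. Balancing the number of levels against the per-level loss, together with the $\log T$ budget in the total length, is meant to give $k$ with $k^{3/2}\approx\log T$. The result would be $\Omega\br{\br{\log T}^{2/3}}$ mistakes. The additive $\delta$ terms accumulate over the hybrid chain. The argument is only valid while $m\delta\lesssim 1$, which yields the $\min\bc{\cdot,1/\delta}$.

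The main obstacle will be the bookkeeping in the multiscale step. We need a potential that tracks the learner's marginal probability $p_s$ of predicting $1$ at point $1$. Privacy only constrains $p_s$ multiplicatively between neighbouring streams, and it places no constraint along a single stream. We will try to prove a one-step inequality of the form ``either $\bE[\text{mistakes in window}]\ge c$ or $\abs{\log p-\log p'}$ shrinks by a factor''. Mistakes can then be charged against the drop in $\log(1/p)$ from about $\log T$ down to $0$. The resulting trade-off, in which level $i$ costs $\sim 1/i^{1/2}$ mistakes while reducing the potential by $\sim i$, is what we expect to produce the exponent $2/3$. Getting this trade-off exactly right is the step we expect to be most delicate.
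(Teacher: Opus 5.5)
Your proposal has a genuine gap. The step that is supposed to force many mistakes is never supplied, and the mechanism you sketch does not handle learners whose predictions at point $1$ are not concentrated.

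Privacy compares, at a fixed round, the output law with and without one inserted example. It says nothing about how $p_s$ evolves along a single stream, and on a dummy-padded stream the learner's predictions at point $1$ cost nothing. Your potential $\log(1/p_s)$ only has room to fall from about $\log T$ if $p_s$ is already tiny on the dummy stream, which is exactly the concentration assumption of \cite{DSS24}. Against a learner that randomises its predictions at point $1$ (marginal near $1/2$), a fixed-label probe costs only a constant and reveals the label. After that, group privacy from the probe-free stream certifies only about $\sum_m e^{-m\epsilon_0}/2=O(1/\epsilon_0)$ further mistakes. Marginal probabilities do not measure how much the released predictions have revealed about the target. The claimed ``narrow band'' rigidity across sub-windows is therefore precisely the missing inequality, not a consequence of privacy. (Probes $(1,1)$ and $(1,0)$ also cannot coexist in one realisable stream.)

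The numerology is inconsistent as well. Windows with $T_{i+1}=T_i^{c}$ give only $k=O(\log\log T)$ levels, which is the regime of the $\Omega(\log\log T)$ bound of \cite{LWY24}, not $k^{3/2}\approx\log T$. Your trade-off (level $i$ costs $\sim i^{-1/2}$ mistakes and consumes $\sim i$ of a $\log T$ budget) yields $k\approx\sqrt{\log T}$ levels and $\sum_{i\le k}i^{-1/2}\approx(\log T)^{1/4}$ mistakes, not $(\log T)^{2/3}$.

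The paper's proof rests on ideas that are absent from your plan:
\begin{itemize}
\item All informative examples are $(1,B)$ for a single uniformly random bit $B$. Even a non-concentrated learner then errs at an insertion round $s$ with probability at least $(1-\sqrt{V(s)})/2$, where $V(s)=\mathbb{E}[(\mathbb{E}[2B-1\mid Y_{\le s}])^2]$ and $Y_r=h_r(1)$. The potential is the leakage about $B$, not a marginal probability.
\item Insertions are placed by a random ternary recursion over $L=\lfloor\log_3T\rfloor$ geometric (not doubly exponential) stages. When the middle third is chosen, an insertion is made in the first third with probability $p$.
\item An insertion in the first third is compared with a counterfactual insertion in the last third, which cannot affect middle-third predictions. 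This bounds the growth of the leakage potential by $O(p^2D_t+p\delta)$, where $D_t$ is the squared gap between the averaged clipped likelihood ratios of the two thirds.
\item A second potential, the variance of those ratios over insertion positions, pays for that growth. By variance decomposition it drops by $D_t/6$ when restricting to a random third, and it rises by at most $(p+\delta)/3$.
\end{itemize}
Together these give leakage $O(p^2+Lp^3+Lp\delta)$. Choosing $p\asymp\min\{L^{-1/3},1/(L\delta)\}$ then yields $\Omega(Lp)=\Omega(\min\{(\log T)^{2/3},1/\delta\})$. The exponent $2/3$ comes from the constraint $Lp^3\lesssim1$, not from a window-length trade-off.
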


\begin{table}[!t]
\centering
\begingroup
\small
\setlength{\tabcolsep}{4pt}
\renewcommand{\arraystretch}{1.15}
\begin{tabular}{@{}p{0.23\linewidth}p{0.25\linewidth}p{0.25\linewidth}p{0.20\linewidth}@{}}
\toprule
Result & Point domain & DP Learner & Lower bound \\
\midrule
\cite{DSS24}
  & $\bc{1,2,3}$
  & Concentrated
  & $\Om{\log T}$ \\
\addlinespace[3pt]
\cite{CLNSS24}
  & $\bc{0,\ldots,T}$
  & All
  & $\Om{\log T}$ \\
\addlinespace[3pt]
\cite{LWY24}
  & $\bc{1,2,3}$
  & All
  & $\Om{\log\log T}$ \\
\addlinespace[3pt]
\Cref{thm:informal-point}
  & $\bc{1,2,3}$
  & All
  & $\Om{\br{\log T}^{2/3}}$ \\
\bottomrule
\end{tabular}
\caption{Lower bounds for expected total mistakes of private online
learners on \(\Point\) at $\delta=1/(\log T)$. \cite{DSS24,LWY24}'s
results also extend to other Littlestone classes.  \cite{CLNSS24}
considers point functions on $\bc{0,\ldots,T}$, a domain that grows
with $T$.  The~\cite{LWY24} entry follows by setting $T'=\floor{\epsilon\log T}$, applying their theorem to the first $T'$ rounds, and filling the
remaining rounds with dummy points.}
\label{tab:point-comparison}
\endgroup
\end{table}

At $\delta=\Theta\br{1/\log T}$,~\Cref{thm:informal-point} improves
the $\Om{\log\log T}$~\cite[see~\Cref{tab:point-comparison}
caption]{LWY24} to $\Om{\br{\log T}^{2/3}}$ for arbitrary private
learners on $\Point$.~\Cref{tab:point-comparison} compares our bound
to existing bounds for this range of \(\delta\). As shown
in~\Cref{fig:delta-phase}, a gap remains to the $\bigO{\log T}$ upper
bound~\cite{GL21,LWY24} for this range of \(\delta\). For polynomially
small $\delta$, the $\Om{\log T}$ lower bound of~\cite{LWY24} is
stronger and tight.  When $\delta\ge\br{\log
T}^{-2/3}$,~\Cref{thm:informal-point} matches the $\bigO{1/\delta}$
upper bound using the name-and-shame variant of SOA~\cite{DSS24}.

\suppressfloats[t]
\begin{figure}[t]
\centering
\begingroup
\definecolor{phaseBlue}{HTML}{3D6680}
\definecolor{phasePurple}{HTML}{6B5A71}
\definecolor{phaseDSS}{HTML}{956A3D}
\definecolor{phaseInk}{HTML}{333333}
\scalebox{0.7}{%
\begin{tikzpicture}[x=1cm,y=1cm,font=\normalsize,
  line cap=butt,line join=miter]
  \fill[black!7]
    (2.4,3.65) -- (2.4,4.15) -- (6.9,4.15) -- (8.9,2.72)
    -- (8.9,2.55) -- (5.1,2.55) -- (3.3,3.15) -- cycle;

  \draw[black!20,line width=0.3pt,densely dotted] (3.3,0) -- (3.3,3.15);
  \draw[black!20,line width=0.3pt,densely dotted] (5.1,0) -- (5.1,2.55);
  \draw[black!20,line width=0.3pt,densely dotted] (6.9,0) -- (6.9,4.15);
  \draw[black!20,line width=0.3pt,densely dotted] (8.9,0) -- (8.9,2.55);

  \draw[phasePurple,line width=1.5pt] (0,4.15) -- (2.0,4.15);
  \draw[phasePurple,line width=1.5pt] (2.4,4.15) -- (6.9,4.15);
  \draw[phaseDSS,line width=1.5pt]
    (6.9,4.15) -- (8.9,2.72) -- (11.8,0.82);
  \draw[phasePurple,line width=1.5pt,dash pattern=on 5pt off 3pt]
    (0,3.98) -- (2.0,3.98);
  \draw[phasePurple,line width=1.5pt,dash pattern=on 5pt off 3pt]
    (2.4,3.65) -- (3.3,3.15) -- (5.1,2.55);
  \draw[phaseBlue,line width=1.5pt,dash pattern=on 5pt off 3pt]
    (5.1,2.55) -- (8.9,2.55) -- (11.8,0.65);

  \node[phasePurple,anchor=south,align=center]
    at (4.35,4.25)
    {\cite{LWY24}\\[-1pt]
     Other upper bounds:\\[-1pt]
     \cite{GL21,LWY25,Lyu26}\\[-1pt]
     \cite{AFKT23,BF26}};
  \path (3.3,3.15) -- (5.1,2.55)
    node[midway,sloped,below=4pt,text=phasePurple] {\cite{LWY24}};
  \path (8.9,2.72) -- (11.8,0.82)
    node[midway,sloped,above=4pt,text=phaseDSS] {\cite{DSS24}};
  \node[phaseBlue,anchor=north] at (6.9,2.36)
    {Theorem~\ref{thm:main} (Ours)};

  \draw[phaseInk,line width=0.7pt] (0,0) -- (2.0,0);
  \draw[->,phaseInk,line width=0.7pt] (2.4,0) -- (12.3,0)
    node[right] {$\delta$};
  \foreach \phaseBreakY in {0,4.15} {
    \draw[phaseInk,line width=0.7pt]
      (2.08,\phaseBreakY-0.10) -- (2.19,\phaseBreakY+0.10);
    \draw[phaseInk,line width=0.7pt]
      (2.24,\phaseBreakY-0.10) -- (2.35,\phaseBreakY+0.10);
  }
  \draw[->,phaseInk,line width=0.7pt] (0,0) -- (0,5.35);
  \node[rotate=90,phaseInk] at (-2.05,2.9) {$\bE\bs{M_T}$};
  \draw[phaseInk] (-0.06,4.15) -- (0.06,4.15);
  \draw[phaseInk] (-0.06,2.55) -- (0.06,2.55);
  \node[anchor=east] at (-0.14,4.15) {$\log T$};
  \node[anchor=east] at (-0.14,2.55) {$\br{\log T}^{2/3}$};
  \foreach \phaseTick in {0,3.3,5.1,6.9,8.9,11.8}
    \draw[phaseInk] (\phaseTick,-0.06) -- (\phaseTick,0.06);
  \node[anchor=north] at (0,-0.17) {$0$};
  \node[anchor=north] at (1.15,-0.17) {$\delta\le T^{-a}$};
  \node[anchor=north] at (3.3,-0.17) {$e^{-\br{\log T}^{3/4}}$};
  \node[anchor=north] at (5.1,-0.17) {$e^{-\br{\log T}^{2/3}}$};
  \node[anchor=north] at (6.9,-0.17) {$\br{\log T}^{-1}$};
  \node[anchor=north] at (8.9,-0.17) {$\br{\log T}^{-2/3}$};
  \node[anchor=north] at (11.8,-0.17) {$1/4$};

\end{tikzpicture}
}
\endgroup
\caption{Expected total mistakes $\bE\bs{M_T}$ for $\Point$ at $\epsilon=1/100$.
Solid curves are upper bounds and dashed curves are lower bounds;
grey shading marks the open gap.}
\label{fig:delta-phase}
\end{figure}

\medskip
\noindent\textbf{General Littlestone Classes:} Our next result gives a
lower bound linear in $d$, for every class with $d\ge2$, while
retaining the polylogarithmic dependence on $T$ that we show for
$\Point$.

\begin{informaltheorem}
\label{thm:informal-dimension}
Let $\cH$ have finite Littlestone dimension $d\ge2$. For
$0<\epsilon\le1$, $0\le\delta\le\min\bc{\epsilon^2,1/4}$, and
$T\ge\br{d/\epsilon}^2$,
every $\br{\epsilon,\delta}$-differentially private online learner of
$\cH$ has a deterministic realisable stream of length $T$ on which
\[
  \bE\bs{M_T}=\Om{\min\bc{
    \frac d\epsilon\br{\log T}^{2/3},\;
    \frac d\delta
  }}.
\]
\end{informaltheorem}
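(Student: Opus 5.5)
The plan is to reduce \Cref{thm:informal-dimension} to the point-function bound of \Cref{thm:informal-point}, using three ingredients: an embedding of $\Point$-type structure into $\cH$, a block decomposition of the stream into roughly $d$ independent sub-games, and group privacy to turn the $\epsilon_0=1/100$ bound into a $1/\epsilon$ factor.

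\textbf{Step 1: embed three-point structure.} Since $\Ldim(\cH)=d\ge2$, $\cH$ shatters a complete mistake tree of depth $d$. From this tree I would extract about $\lfloor d/2\rfloor$ consecutive depth-two segments along a root-to-leaf path. Each segment yields points $x,x'$ and three hypotheses consistent with the prefix. Two of them agree on $x$ and disagree on $x'$, and the third differs on $x$. This mimics the roles of points $1$ and $3$ and the pair $f_1,f_2$ in $\Point$. The requirement $d\ge2$ is exactly what makes one depth-two segment available. For $d$ odd, one level is simply discarded.

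\textbf{Step 2: block decomposition.} I would split the horizon into $k=\Theta(d)$ consecutive blocks of length $T/k$. In block $i$, the stream first fixes the labels of the earlier segments with a few prefix examples. It then runs the hard $\Point$ stream of \Cref{thm:informal-point} on segment $i$. Each completed block commits one level of the tree, so the concatenated stream stays realisable. A learner restricted to block $i$, with the fixed prefix hard-coded, is itself a private online learner for the embedded three-point problem. Because the streams are oblivious and deterministic, the adversary can choose block $i$'s hard stream after seeing the restricted learner. Linearity of expectation then adds the per-block bounds, giving a factor of $d$. Since $T\ge(d/\epsilon)^2$, each block has length at least $\sqrt T$, so $\log(T/k)=\Theta(\log T)$.

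\textbf{Step 3: scaling in $\epsilon$.} To gain the factor $1/\epsilon$, I would replace every example of the point stream by $m=\lceil \epsilon_0/\epsilon\rceil$ repeated copies. An $(\epsilon,\delta)$-private learner viewed on the compressed stream is then $(m\epsilon,\ m e^{m\epsilon}\delta)=(\epsilon_0,O(\delta/\epsilon))$-private by group privacy. Each mistake in the compressed game corresponds to a round where, plausibly, the learner errs on the repeated copies as well. A block therefore incurs roughly $m\cdot\min\{(\log T)^{2/3},\epsilon/\delta\}$ mistakes. This gives $\min\{(1/\epsilon)(\log T)^{2/3},1/\delta\}$ per block, matching the target. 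The hypothesis $\delta\le\epsilon^2$ keeps the inflated $\delta'=O(\delta/\epsilon)\le 1/4$, within the range of \Cref{thm:informal-point}. The stream length grows by a factor $m\le d/\epsilon$, which the condition $T\ge(d/\epsilon)^2$ absorbs.

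\textbf{Main obstacle.} I expect Step 3's transfer of mistakes to be the hardest part. The learner releases a fresh hypothesis at each of the $m$ repeated rounds, and it may err only on some of them. I would first try defining the compressed learner to output the hypothesis at a uniformly random copy among the $m$. Its expected mistakes are then the average over copies, so total mistakes equal $m$ times the compressed-game mistakes. Privacy holds because post-processing of a random index preserves it. Making this compatible with the online-DP definition, especially adaptivity across the repeated rounds, is where I would spend the most care.
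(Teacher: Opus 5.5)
\textbf{Steps 1 and 2.} These are fine and follow the paper's phase argument. The paper uses the root as a single shared dummy $(x_0,0)$ and every later node on the path as an informative point, so it gets $d-1$ phases rather than $\lfloor d/2\rfloor$ and needs no prefix examples. You do need the \emph{form} of the hard $\Point$ stream, not just the statement of \Cref{thm:informal-point}: it contains only $(3,0)$ and $(1,b)$ for one fixed $b$.

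\textbf{The gap: Step 3.} This step supplies the factor $1/\epsilon$, and your suggested repair does not work.
\begin{itemize}
\item If the compressed learner releases the hypothesis at a uniformly random copy $U\in\{1,\dots,m\}$, then for $U>1$ that hypothesis was computed after the learner saw copies of the current example $(x,B)$. It may therefore depend on $B$, so the compressed learner is not an online learner. The $\Point$ argument hinges on the released hypothesis preceding the informative example, so it cannot be applied. Your identity ``total mistakes $=m$ times compressed mistakes'' is obtained only by breaking the online constraint.
\item If you instead release at the first copy, the compressed learner is online, but nothing controls copies $2,\dots,m$. The learner may behave differently at different positions in a block, and the compressed game says nothing about those predictions.
\end{itemize}

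\textbf{The missing idea.} \Cref{lem:one-bit-general-privacy} pads with dummies and randomises the offset inside a longer block, so that the compressed release and the copies live on the same stream.
\begin{itemize}
\item Take $q=\lfloor\epsilon_0/\epsilon\rfloor$ (a floor, so that $q\epsilon\le\epsilon_0$; your ceiling allows $m\epsilon$ up to $2\epsilon_0$), $w=16q$, and $2w$ rounds per compressed round.
\item Feed dummies before a uniform $U\in\{1,\dots,w\}$, release the predictor at $U$, and place the $q$ copies at $U,\dots,U+q-1$. The release precedes the first copy, so the compressed learner is online and, by group privacy, $(\epsilon_0,\delta_q)$-private with $\delta_q\le e^{\epsilon_0}q\delta$.
\item Its error probability $\alpha$ equals the average error at position $U$ of an all-dummy block. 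Shifting the window to $\{1+j,\dots,w+j\}$ lowers this average by at most $j/w$.
\item Group privacy over the $j$ earlier copies then bounds the error on the copy at $U+j$ below by $e^{-q\epsilon}(\alpha-j/w)-\delta_q$.
\end{itemize}
Because of the subtractive $j/w$, you need the compressed mistakes to be a constant fraction of the informative examples, not just a total lower bound; this is \Cref{lem:informative-fraction}. Summing over $j<q$ then gives $\Omega(q)$ times the compressed informative mistakes.

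\textbf{Uncovered range $\epsilon\in(1/100,1]$.} Here $m=1$, so group privacy never brings the parameter down to $\epsilon_0=1/100$, where the $\Point$ bound is proved. The paper handles this range by subsampling (\Cref{prop:dummy-replacement}). Replacing each example by the dummy with probability $1-\gamma$, where $\gamma=(e^{\epsilon_0}-1)/(e^\epsilon-1)$, gives an $(\epsilon_0,\gamma\delta)$-private learner whose informative mistakes scale by $\gamma$. Otherwise you would have to redo the $\Point$ proof for every constant $\epsilon\le1$.
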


\Needspace{6\baselineskip}
Intuitively, a shattered Littlestone tree supplies $d$ successive
stages in the non-private setting, at each of which a learner can be
forced to make a mistake. In our setting, the tree supplies
$\Theta(d)$ phases, each instantiating the hard construction
underlying Theorem~\ref{thm:main}. Following a single path through the
tree keeps the entire stream realisable, while the mistake
contributions from successive phases add. The construction is
visualised in~\Cref{fig:tree-to-stream}.

For fixed $d$ and $\epsilon$, this is, to our knowledge, the first
lower bound polynomial in $\log T$ that applies to every such class
and every private learner when $\delta=\Theta\br{1/\log
T}$.

\medskip
\noindent\textbf{Private online prediction} The lower bounds above
show that private online learning can require a number of mistakes
growing polynomially with $\log T$. We next consider the slightly
easier problem of private online prediction.  Here, the algorithm
observes $x_t$, releases only its prediction for $x_t$, and then
observes $y_t$. For point functions,~\cite{CLNSS24} already gives a
prediction mistake bound independent of the horizon $T$ and
polylogarithmic in $1/\delta$, but the corresponding question remains
open for general classes.

\begin{question}
Does every class of finite Littlestone dimension admit an
$(\epsilon,\delta)$-jointly private online predictor whose expected
mistake bound is independent of $T$ and polylogarithmic in $1/\delta$,
on every fixed realisable stream?
\end{question}

\noindent We answer this question affirmatively. Intuitively, in
online learning, the learner releases a hypothesis at every round. Its
predictions at informative points are therefore visible even on rounds
when it receives a dummy example. The learner receives informative
examples only occasionally, but its released hypotheses reveal
predictions at those points throughout the stream. This makes the
prediction setting harder for the adversary. In the prediction
setting, the privacy guarantee excludes the prediction at the round
where the neighbouring streams differ.  In online learning, the
hypothesis released at that round does not depend on the current
example, since it is released before that example is
observed.\looseness=-1

Below, we show that every class of finite Littlestone dimension admits
an expected mistake bound independent of $T$, with polylogarithmic
dependence on $1/\delta$.

\begin{informaltheorem}
\label{thm:informal-prediction}
Let $\cH$ be nonempty with finite Littlestone dimension $d$.
For $0<\epsilon\le1$ and $0<\delta<1/10$, there is an
$\br{\epsilon,\delta}$-jointly differentially private online predictor
such that, on every fixed realisable stream of length $T$,
\[
  \bE\bs{M_T}
  \le \frac{2^{2^{c\br{d+1}^2}}}{\epsilon^2}
       \log^2\frac{2}{\epsilon\delta},
\]
where $c>0$ is an absolute constant.
\end{informaltheorem}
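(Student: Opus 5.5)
We plan to build the predictor by reducing private online prediction to private \emph{offline} (PAC) learning of Littlestone classes, run on growing batches of informative examples, and to exploit the fact that a prediction at round $t$ need not be private with respect to $(x_t,y_t)$. The base tool will be a private learner for $\cH$ with sample complexity depending only on $d$, $\epsilon$ and $\log(1/\delta)$. An example is the stable-histogram or ``global stability'' private learner of Bun--Livni--Moran type, or its improved variants. Its sample complexity has the form $2^{2^{O(d^2)}}$-type tower dependence times $\epsilon^{-1}\log(1/\epsilon\delta)$, which is where the constant $2^{2^{c(d+1)^2}}$ will come from. The predictor will not hold one global hypothesis. Instead it will run a sequence of epochs, and in each epoch it will privately select a hypothesis from the examples on which it has erred so far.

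\textbf{Construction.} (i) We run an SOA-like outer loop with at most $d+1$ levels, because a realisable stream can only force the non-private version-space reduction $d$ times. (ii) Within each level, we predict using a privately selected hypothesis $\hat h$, obtained by a sparse-vector style test that checks whether $\hat h$ is still ``good''. Each mistake is recorded as a noisy count. Once the noisy count exceeds a threshold $\tau\approx\epsilon^{-1}\log(2/(\epsilon\delta))$, we use the collected mistaken examples, together with the private learner of the previous paragraph run on many independent subsamples, to move to the next level. (iii) Joint privacy for round $t$ will follow because the prediction at round $t$ is computed from $\hat h$ and $x_t$ only. The state $\hat h$ depends on past data through sparse vector and a private learner, and each of these is applied with budget split as $\epsilon/(d+1)$, or via advanced composition across the $O(d)$ levels. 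This gives $(\epsilon,\delta)$-joint privacy by the billboard lemma.

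\textbf{Mistake bound.} At each level, the learner makes at most $O(\tau)$ mistakes before the sparse vector test fires, plus a failure contribution of $O(\delta T)$. To avoid that $T$-dependence we instead argue that a failure only costs one extra level. We then show that whenever the sparse vector test fires, the new hypothesis is consistent with a set of examples that forces a strict decrease of a Littlestone-type potential with constant probability. This gives at most $O(d)$ levels in expectation. Multiplying the per-level cost, namely the sample size needed by the private learner (roughly $2^{2^{c d^2}}\tau/\epsilon$), by the number of levels and by the log-factor from amplifying success probability yields the stated $2^{2^{c(d+1)^2}}\epsilon^{-2}\log^2(2/(\epsilon\delta))$ bound.

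\textbf{Main obstacle.} The main obstacle is step (iii) of the mistake analysis: showing that progress is made with constant probability independently of $T$ on an oblivious stream. The mistaken examples are not i.i.d., so the PAC guarantee of the private learner does not apply directly. I would first try a direct online-to-stability argument: treat the multiset of mistaken examples as the empirical distribution, apply the global-stability learner to resampled subsets of it, and use realisability to argue that any hypothesis output by the stable histogram errs on few of the recorded points. This would imply the needed potential decrease. If that fails, I would fall back to the Golowich--Livni style ``tree of experts'' argument, replacing their $\log T$ union bound by reusing the same random bits across rounds.
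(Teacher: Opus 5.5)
Your proposal has a genuine gap, and the privacy design in step (iii) rules the approach out. You make the whole sequence of internal hypotheses $\hat h_t$, each a function of $S_{1:t-1}$, $(\epsilon,\delta)$-differentially private, and then predict $\hat h_t(x_t)$ via the billboard lemma. Releasing $\hat h_t$ at the start of round $t$ is then a private online learner in the sense of Definition~\ref{def:private-online}, and it makes exactly the same mistakes as your predictor. For $d\ge2$, \Cref{thm:dimension} gives a realisable stream with $\Omega(\min\{(d/\epsilon)(\log T)^{2/3},d/\delta\})$ expected mistakes (for $T\ge(d/\epsilon)^2$ and $\delta\le\min\{\epsilon^2,1/4\}$). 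At fixed $d,\epsilon$ and $\delta=1/\log T$ this is $\Omega((\log T)^{2/3})$, whereas the target bound is $O((\log\log T)^2)$. So no choice of private PAC learner or potential function can rescue a billboard construction: the theorem is precisely a separation from that model.

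The missing idea is that joint privacy excludes the changed round and constrains only the released bits. The paper lets the \emph{internal} predictor switch non-privately and outputs the common bit whenever the old and new candidates agree. On disagreement rounds it privately delays the new candidate's influence, using a noisy acceptance test and the Mirror mechanism of \cite{CLNSS24}. The delays $D_j=9^jD_0$ ensure, via \Cref{lem:online-replacement}(ii), that a candidate cannot affect any output before the mistakes needed to compute it have occurred (Claim~\ref{clm:online-agreement}). So the switching times themselves never need to be private.

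Your mistake analysis also has unfilled gaps. The mistaken examples are adversarial, so a hypothesis that is accurate on the recorded batch (which your resampling idea can give) says nothing about later rounds. You never identify a potential that provably decreases, and the claimed $O(d)$ levels has no argument behind it. The paper instead privately builds a candidate list $h_0,\ldots,h_N$ from the essential hypotheses and decomposition dimension of \cite{Lyu26}. A counting argument over pairs (group, hypothesis) then shows that, with probability $1-\delta$, some candidate errs fewer than $K$ times, with $N=2^{O(d^2)}$. The doubly exponential constant comes from the factor $9^N$ in the delays, not from a private learner's sample complexity.

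Finally, ``a failure only costs one extra level'' cannot remove the $\delta T$ term. Your budget split $\epsilon/(d+1)$ already caps the number of levels, so after a failure on the probability-$\delta$ bad event the predictor may err on every remaining round. The paper removes this term by running an independent subsampled SOA $\mathcal{F}$, which is $(0,\delta/4)$-jointly private with at most $4d/\delta$ expected mistakes. It merges $\mathcal{F}$ with the main predictor $\mathcal{A}$ through a private comparison rule satisfying $\mathbb{E}M\le3\,\mathbb{E}\min\{M(\mathcal{A}),M(\mathcal{F})\}+2D$ (\Cref{lem:prediction-combination}). Independence then bounds the failure contribution by $3\delta\cdot4d/\delta=12d$.
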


For every fixed class of finite Littlestone dimension \(d\ge
2\),~\Cref{thm:informal-dimension,thm:informal-prediction} separate
private online learning from private online prediction.  At fixed
$\epsilon$ and $\delta=\Theta\br{1/\log T}$, the former requires
$\Om{\br{\log T}^{2/3}}$ expected mistakes on some stream, whereas the
latter admits $\bigO{\br{\log\log T}^2}$
expected mistakes on every realisable stream. The same separation also
holds for $\Point$ by Theorem~\ref{thm:informal-point}.

\subsection{Problem setting and preliminaries}
\label{sec:preliminaries}

\noindent\textbf{Setting} Let $\cH\subseteq\bc{0,1}^{\cX}$ be a
nonempty binary hypothesis class and let $T\ge1$ be the time horizon.
Write $S_r=\br{x_r,y_r}\in\cX\times\bc{0,1}$ for the labelled example
at round $r$, and $S=\br{S_1,\ldots,S_T}$ for the input stream.  The
stream $S$ is fixed independently of the algorithm's randomness.
This is the \emph{oblivious} setting in online learning. In this
work, we use replacement adjacency: two streams $S,S'$ of the same
length are \emph{neighbouring} if they differ in at most one labelled
example.
\begin{gamebox}
\begin{definition}[Online learning game]\label{def:online-game}
For each round $r=1,\ldots,T$:
\begin{enumerate}[label=\arabic*.,leftmargin=*,topsep=3pt,itemsep=0pt,parsep=0pt]
  \item The randomised learner $\cA$ releases a predictor
    $h_r:\cX\to\bc{0,1}$.
  \item The learner observes $(x_r,y_r)$.
\end{enumerate}
\end{definition}
\end{gamebox}
\noindent The predictors released by the learner need not belong to
$\cH$. We say the learner incurs a mistake when its prediction $\hat
y_r=h_r\br{x_r}$ is incorrect, i.e., when $\hat y_r\ne y_r$.

\begin{definition}[Differentially private online learning]
\label{def:private-online}
Write $\cA\br{S}=\br{h_1,\ldots,h_T}$ for the learner's complete
output. For $\epsilon\ge0$ and $0\le\delta<1$, the learner is
$\br{\epsilon,\delta}$-DP if, for every pair of neighbouring streams
$S,S'$ and every measurable output event $E$,
\[
  \bP\bs{\cA\br{S}\in E}
  \le e^\epsilon\bP\bs{\cA\br{S'}\in E}+\delta.
\]
\end{definition}
\noindent We next consider the slightly weaker online prediction model.

\begin{gamebox}
\begin{definition}[Online prediction game]\label{def:online-prediction}
For each round $r=1,\ldots,T$:
\begin{enumerate}[label=\arabic*.,leftmargin=*,topsep=3pt,itemsep=0pt,parsep=0pt]
  \item The predictor observes $x_r$ and then releases a bit $\hat
    y_r$.
  \item The predictor observes $y_r$.
\end{enumerate}
\end{definition}
\end{gamebox}

\begin{definition}[Jointly differentially private online prediction]
\label{def:joint-private-prediction}
Let $\hat Y\br{S}=\br{\hat y_1,\ldots,\hat y_T}$ and write
$\hat Y_{-i}\br{S}$ for this vector with coordinate $i$ removed.
For $\epsilon\ge0$ and $0\le\delta<1$, the predictor is
$\br{\epsilon,\delta}$-jointly differentially private if, for every
$i\in\bc{1,\ldots,T}$, every pair of streams $S,S'$ agreeing outside
round $i$, and every measurable output event $E$,
\[
  \bP\bs{\hat Y_{-i}\br{S}\in E}
  \le e^\epsilon\bP\bs{\hat Y_{-i}\br{S'}\in E}+\delta.
\]
\end{definition}
\noindent The stream is said to be \emph{realisable} by $\cH$ if there
is a single $f\in\cH$ such that $f\br{x_r}=y_r$ for every $r$.  In
either model, the total number of mistakes is
$M_T=\sum_{r=1}^T\mathbf{1}\bc{\hat y_r\ne y_r}$, where
$\mathbf{1}\bc{E}$ is the indicator of an event $E$.  On a fixed
stream, the expectation $\bE\bs{M_T}$ is computed over the algorithm's
randomness. We write $d=\Ldim\br{\cH}$~(see~\Cref{def:littlestone})
and call $\cH$ a Littlestone class when $d<\infty$. Our main lower
bound argument uses the following simple class.

\begin{definition}[Three-point class]\label{def:point-three}
On the instance space $\bc{1,2,3}$, let
\[
  \Point=\bc{f_1,f_2,f_3},
  \qquad f_j:\bc{1,2,3}\to\bc{0,1},
  \qquad f_j\br{x}=\mathbf{1}\bc{x=j}.
\]
\end{definition}

The class $\Point$ has Littlestone dimension one: both labels are
possible at each point, but prescribing label one leaves only one
hypothesis. The hypotheses $f_1,f_2$ agree at point $3$ and disagree
at point $1$. In our arguments, we choose a bit $B\in\bc{0,1}$
uniformly, independently of the learner's randomness. Then we denote
\(\br{3,0}\) as the  \emph{dummy example} and  \(\br{1,B}\) is the
\emph{informative example}. Every stream consisting of these two
examples is realisable: $f_1$ realises it when $B=1$, and $f_2$ when
$B=0$. Finally, all logarithms are natural unless another base is
specified. Terms involving $1/\delta$ are interpreted as $+\infty$
when $\delta=0$. Constants hidden by $O$, $\Omega$, and $\Theta$ are
absolute unless stated otherwise.

\subsection{Technical ideas}

\noindent\textbf{Lower bound for private online learning.} For
$\Point$, earlier work~\cite{DSS24} used the \emph{concentration}
assumption that the learner with high probability predicts the same
label at point \(1\) in an all-dummy stream. Then, the adversary
inserts informative examples with the opposite label, which causes the
learner to incur mistakes whenever it outputs its preferred label.
Privacy constrains how these insertions change the output
distribution. This allows the adversary to place informative examples
where mistakes are likely. Our goal is to force mistakes without this
concentration assumption.

We instead sample a uniformly random bit \(B\) at the beginning and
use it as the label for every informative example. On the dummy
stream, the prediction at point \(1\) is independent of \(B\), even if
the predictions keep changing (i.e. no concentration or similar
prior). Therefore, the first insertion of \(\br{1,B}\) causes a
mistake with probability \(1/2\) because the hypothesis is released
before the example arrives. However, this also reveals \(B\) to the
learner. The main challenge is therefore to control how much the
learner's predictions reveal about \(B\) before subsequent informative
examples.  Writing \(Y_r=h_r(1)\), we track
\(\Var\br{\bE\bs{B\mid Y_{1:r}}}\), the variance of the
posterior probability that \(B=1\). This measures the mean squared
deviation of the posterior probability of \(B=1\) from its prior value
of \(1/2\). If this variance can be kept small during rounds where
informative examples are shown, then the learner is forced to err on a
constant fraction of these examples in expectation.

We construct the entire stream of length \(T\) in \(L=\log_3 T\)
recursive stages agnostic of the learning algorithm. At each stage, we
choose a uniformly random third of the remaining interval. If we
choose the middle third, we insert an informative example at a
uniformly random position in the first third with probability \(p\).
We then recurse inside the chosen third. This gives \(Lp/3\)
informative examples in expectation.

We compare an insertion in the first third, which may affect
middle-third predictions, with a counterfactual insertion in the last
third, which cannot affect those predictions.
See~\Cref{fig:recursive-stage} for a visualisation of the construction
and the effect of the counterfactual insertion. Here we track a second
variance: how the effect of an informative point insertion on the
output distribution varies with the location of the insertion. Our
proof accounts for two competing effects. First, as mentioned above,
we continue the recursion inside a uniformly chosen third. By standard
variance decomposition, the average variance over insertion locations
within that third equals the original variance minus the variance
between thirds. We use this decrease to control the information
revealed about \(B\). Second, new insertions can add to the variance,
but privacy limits this increase.  Accounting for both effects over
the recursion controls how well the released predictions can identify
\(B\) before subsequent informative examples. The posterior variance
\(\Var\br{\bE\bs{B\mid Y_{1:r}}}\) before a possible
informative insertion is at most \(\bigO{p^2+Lp^3+Lp\delta}\).
Choosing \(p\) as a sufficiently small constant times
\(\min\bc{L^{-1/3},1/(L\delta)}\) keeps this variance small and yields
\(\Om{Lp}=\Om{\min\bc{(\log T)^{2/3},1/\delta}}\) expected mistakes.
We extend the lower bound to general Littlestone classes by roughly
applying the same construction at nodes along a path down a shattered
tree~(see~\Cref{fig:tree-to-stream}).

\noindent\textbf{Upper bound for private prediction.} For private
prediction, we first construct a short private sequence of predictors
that, with high probability, contains at least one with few mistakes
on the full stream. The first difficulty is to bound the number of bad
predictors in this list even when the hypothesis class is infinite.
Using the idea of~\emph{essential hypotheses}~\cite{Lyu26}, we create
multiple short lists of predictors with the useful property that
predictors can be added to these lists online using only earlier
mistakes. We then use a private selection mechanism to choose a
frequently appearing predictor among these lists. A contradiction
argument~(\Cref{lem:candidate-sequence}) then shows that the number of
bad predictors is bounded independently of \(T\).

The next challenge is to use these candidates privately online.
Each candidate can be computed from the observed prefix once its
predecessor has made enough mistakes.
However, switching immediately to a new candidate can be
privacy-violating. Unlike online learning, in online
prediction, only the prediction at the current point is released.
Thus, on dummy examples where the old and new predictors agree, the
learner can switch its internal predictor while continuing to give the
same answer that it would have given before the switch. 

We use a private replacement mechanism based on the Mirror
mechanism~\cite{CLNSS24} with increasing delays before each switch to
implement this in a private manner. The key property is that, while
the latest available candidate has made few mistakes, later candidates
cannot change the current predictions. Since these mistakes are
precisely what is needed to compute the next candidate, we can ignore
that candidate until enough mistakes are made or for the rest of the
game if this never happens.  The resulting online predictor has a
mistake bound \(Q\), independent of \(T\), that holds with probability
at least \(1-\delta\). However, this bound alone only guarantees
\(Q+\delta T\) expected mistakes. We therefore need one additional
step to get the bound in expectation. We combine our predictor with a
subsampled SOA algorithm~\cite{DSS24}, whose expected mistakes are
\(\bigO{d/\delta}\). Intuitively, this algorithm runs a competition
between the two predictors and guarantees that the combined predictor
competes with the better one in expectation, up to a constant factor
and an additive term independent of \(T\)
(\Cref{lem:prediction-combination}). This removes the remaining
dependence on \(T\).

\subsection{Related work}
\label{sec:related-work}

Private online learnability for every class of finite Littlestone
dimension was first shown in~\cite{GL21}. Against oblivious
adversaries, their realisable mistake bound has logarithmic dependence
on $T$ for fixed privacy parameters, but doubly-exponential dependence
on the Littlestone dimension $d$.  More
recently,~\cite{Lyu26} obtained the high-probability bound
$\softO{d^{9.5}\log T\log\br{1/\delta}/\epsilon}$, also against
oblivious adversaries. On $\Point$ at fixed $\epsilon$, they give
$\softO{\log T\log\br{1/\delta}}$. 

In contrast to oblivious adversaries, adaptive adversaries may choose
examples using the learner's previous outputs. In this
setting,~\cite{LWY25} gives a mistake bound
$\bigO{2^{2^{\bigO{d}}}\br{\log T+\log\br{1/\delta}}/\epsilon}$. For
$\Point$ at fixed $\epsilon$, this is $\bigO{\log\br{T/\delta}}$, and
hence $\bigO{\log T}$ whenever $\delta\ge T^{-a}$ for fixed $a>0$. The
same work also establishes sublinear regret for general Littlestone
classes in the agnostic setting, where the labels need not be
realisable. Our work does not consider the agnostic or the adaptive
setting but rather focuses on the oblivious setting with realisable
streams.

Recent works on lower bounds for private counting have also used
similar techniques to obtain polylogarithmic lower bounds.
\cite{HL26} used the same variance decomposition and recursion, while
\cite{BDL26} used a related argument to accumulate noise along a tree.
The main difference in this regard between our work and theirs is the
probability event to which these arguments are applied. Importantly,
continual counting with \(\ell_\infty\) error is known to be related
to our problem of online learning via a reduction from private
counters to private online learners for point
functions~\cite{DSS24,CLNSS24}. Thus, online-learning lower bounds
imply counting lower bounds, with weaker parameters. The counting
lower bounds do not, however, directly imply our online-learning
lower bound.

Online prediction under joint differential privacy releases only the
answer to the current query, and does not require this answer to hide
the current example.  For general Littlestone classes,~\cite{KMMNS23}
gives the mistake bound
$\softO{d^2\epsilon^{-2}\log^2\br{1/\delta}\log^2\br{T/\beta}}$ with
probability at least $1-\beta$. Their result allows adaptive inputs
and satisfies the stronger challenge-privacy definition of that work.
Our guarantee concerns oblivious streams and joint privacy, and
removes the dependence on $T$. For point functions on any
domain,~\cite{CLNSS24} gives $\bigO{\epsilon^{-2}\log^2\br{1/\delta}}$
prediction mistakes, independently of the horizon $T$. However, their
result does not apply to every hypothesis class with finite
Littlestone dimension. To the best of our knowledge, our result is the
first horizon-independent guarantee with polylogarithmic dependence
on $1/\delta$ for every class of finite Littlestone dimension.

\section{Lower Bounds for Online Learning}
\label{sec:lower-bounds}
\subsection{Lower bound for POINT}
\label{sec:main-results}

We first prove the lower bound for $\Point$. Our strategy is to place
informative examples at randomly chosen positions in the stream and
show that the learner errs on a constant fraction of them in
expectation.\looseness=-1

\begin{restatable}{theorem}{PointLowerBound}\label{thm:main} For \(\epsilon=1/100,
0\le\delta\le1/4\), and every $\br{\epsilon,\delta}$-differentially
private online learner of $\Point$, there is a realisable stream of
length $T$ on which
\[
  \bE\bs{M_T}
  \ge c\min\bc{\br{\log\br{2T}}^{2/3},\frac1\delta},
\]
where $c>0$ is an absolute constant.
\end{restatable}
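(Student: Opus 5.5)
We prove the bound by Yao's principle. We build a random realisable stream from a uniform bit $B$ and the recursive ternary construction sketched in the technical overview, and we show that every $(\epsilon,\delta)$-private learner has expected mistakes $\Om{Lp}$ on this random stream. Some fixed stream then attains at least the average. Let $L=\floor{\log_3 T}$; for small $T$ the claim is trivial, since the first informative example already forces a constant expected number of mistakes. The recursion runs as follows. Start with $I_0=[1,3^L]$. At stage $\ell$, split $I_\ell$ into thirds $I^{(1)},I^{(2)},I^{(3)}$ and pick $J_\ell\in\{1,2,3\}$ uniformly. If $J_\ell=2$, then with probability $p$ we place the informative example $(1,B)$ at a uniform position of $I^{(1)}$. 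We set $I_{\ell+1}=I^{(J_\ell)}$. Every other position carries the dummy $(3,0)$. Each such stream is realisable by $f_1$ or $f_2$, and the expected number of informative examples is $Lp/3$. Write $Y_r=h_r(1)$. At an informative round $r$, $h_r$ is released before $(x_r,y_r)$ is seen, so $\bP[Y_r\ne B]=\tfrac12-\bE[(\bE[B\mid \text{past}]-\tfrac12)\cdot\text{sign}]$. This is at least $\tfrac12-\sqrt{\Var(\bE[B\mid \mathcal F_{r-1}])}$, where $\mathcal F_{r-1}$ is all information the learner has before round $r$. That information consists of the past stream together with the learner's own outputs. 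Since the stream is oblivious and the insertion positions are independent of $B$, the only source of information about $B$ is the earlier informative examples. So it suffices to show that this posterior variance is at most a small constant at every potential insertion time.

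The core step is the variance bound $\bigO{p^2+Lp^3+Lp\delta}$. By a Cauchy–Schwarz/convexity argument, the information about $B$ before a fresh insertion in the first third at stage $\ell$ comes from earlier insertions, at stages $<\ell$. Each such earlier insertion lies in a first third $I^{(1)}$ of some ancestor interval, and the current interval sits inside the corresponding middle third. We couple the real insertion with a counterfactual insertion at the same relative position in the last third $I^{(3)}$. The counterfactual cannot affect predictions in the middle third, because outputs are causal. Let $D(\text{pos})$ denote the change in $\bP[Y_r=1]$ caused by inserting $(1,1)$ versus $(1,0)$ at position pos. The posterior bias at time $r$ is then governed by $D$ averaged over positions. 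Privacy gives $|D|\le (e^\epsilon-1)+\delta\le 2\epsilon+\delta$ pointwise. That is too weak by itself, so we track the second moment $V_\ell=\bE_{\text{pos}\in I_\ell}[D(\text{pos})^2]$. Variance decomposition over the uniform choice of third gives $\bE[V_{\ell+1}]=V_\ell-\Var_{J}(\text{mean of }D\text{ on }I^{(J)})$. The between-thirds term is exactly what couples the first-third mean with the (zero) last-third counterfactual. It therefore lower bounds the squared leakage contributed at this stage. Summing over stages telescopes: the total squared leakage from all stages is at most $V_0\le\bigO{1}$. A fresh insertion at stage $\ell$ raises the variance by at most $p\cdot\bigO{(\epsilon+\delta)\cdot(\text{current bias})+\delta}$. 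This increase is controlled by group privacy over the at most $\bigO{Lp}$ insertions seen so far, which are concentrated by Chernoff. Combining the increase with the telescoping decrease and the inclusion probability $p$ yields the stated bound $\Var(\bE[B\mid\mathcal F_{r-1}])=\bigO{p^2+Lp^3+Lp\delta}$.

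Finally choose $p=c_0\min\{L^{-1/3},1/(L\delta)\}$, capped at $1$, with $c_0$ small. This makes the variance at most $1/100$, so each informative example is missed with probability at least $2/5$. The expected mistake count is then at least $\tfrac25\cdot Lp/3=\Om{\min\{L^{2/3},1/\delta\}}=\Om{\min\{(\log 2T)^{2/3},1/\delta\}}$. The main obstacle is the middle step, namely making the telescoping variance argument rigorous. Predictions depend on all earlier insertions jointly rather than on one at a time, so the per-insertion effects $D$ must be defined conditionally on the earlier randomness, with martingale-type bookkeeping. The group-privacy blow-up $e^{\epsilon k}$ for $k$ insertions also needs care, and I would handle it by truncating on the high-probability event $k=\bigO{Lp+1}$, using $\epsilon=1/100$.
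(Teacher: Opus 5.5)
Your construction, the averaging over the random stream, and the final choice $p=c_0\min\{L^{-1/3},1/(L\delta)\}$ coincide with the paper's, but the central step has genuine gaps.

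\textbf{Wrong conditioning.} If $\mathcal F_{r-1}$ contains the past stream, then after any earlier insertion $\bE[B\mid\mathcal F_{r-1}]\in\{0,1\}$. The variance you propose to keep small is then $1/4$, so your ``it suffices'' target cannot be met. You must condition only on the released bits. Since $Y_s$ is a function of $Y_{\le s}$, one has $\bP(Y_s\ne B)\ge(1-\bE|m_s|)/2$ with $m_s=\bE[2B-1\mid Y_{\le s}]$, computed on the stream built through stage $t$. This works because the learner predicts at $s$ before seeing the stage-$(t+1)$ example, and all later insertions come after $s$. Privacy bounds what the outputs reveal about $B$, even though the inputs reveal it fully.

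\textbf{Wrong tracked object.} Your $D(\mathrm{pos})$, a shift of the single marginal $\bP[Y_r=1]$ under a $(1,1)$-versus-$(1,0)$ insertion, fails in several ways:
\begin{enumerate}
\item The posterior $m_r$ depends on the joint law of $Y_{\le r}$; information about $B$ can sit in correlations across rounds.
\item $D$ depends on $r$, so a single $V_\ell$ cannot serve all later rounds.
\item With $V_\ell$ a second moment, averaging over a uniformly chosen third leaves it unchanged. So $\bE[V_{\ell+1}]=V_\ell-\Var_J(\cdot)$ is false; only the variance over positions drops by the between-thirds term.
\item All insertions share the label $B$, so the relevant neighbouring pair is dummy versus $(1,b)$ at $s$ for fixed $b$, not a mixed-label stream.
\end{enumerate}

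\textbf{What the paper does instead.} It fixes $B=b$ and the current random stream, earlier insertions included. It then uses the clipped likelihood ratio $\ell_s^b(Y)$ of the whole output vector for one extra insertion at $s$, with $\|P_s^b-P_\varnothing^b\ell_s^b\|_1\le2\delta$. Here $W_t$ is the expected variance of this ratio over $s\in A_t$, and $D_t=\bE[(\overline\ell_{A_t^1}^B-\overline\ell_{A_t^3}^B)^2]$.

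The link from $D_t$ to leakage, which your sketch asserts without supplying, is a test-function argument. Let $Q$ be the current law, $R_j$ the law with an insertion placed uniformly in the $j$th third, and $P=(1-p)Q+pR_1$. Let $H=(Z-m_u)^2-(Z-f)^2$, where $Z=2B-1$, $m_u$ is the $Q$-posterior given $Y_{\le u}$, and $f$ is the $P$-posterior given $Y_{\le v}$, with $v$ at most the right end $c_2$ of the middle third. Because a last-third insertion cannot affect $Y_{\le c_2}$, one has $\bE_PH-\bE_QH=p(\bE_{R_1}H-\bE_{R_3}H)$. An AM--GM step plus the squared-error comparison under $Q$ then gives $V_t^+(v)-V_t^+(u)\le2(V_t(v)-V_t(u))+8p^2D_t+16p\delta$.

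The factor $2$ would compound over $L$ stages. The paper absorbs it with the weighted potential $I_t=V_t(a_t)+2(V_t(b_t)-V_t(a_t))$, whose conditional mean grows by only $20p^2D_t+20p\delta$. It then telescopes $I_t+120p^2W_t$ to obtain $\bE I_m\le120(p^2+mp^3+mp\delta)$.

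\textbf{Group privacy fails here.} Controlling a fresh insertion by group privacy over the $\Theta(Lp)$ earlier insertions breaks down exactly in the regime of interest. When $Lp\asymp L^{2/3}$, the factor $e^{\epsilon Lp}$ and the accompanying $\delta$-term are unbounded, and truncating on $k=O(Lp+1)$ does not change this. No group privacy is needed: every ratio is a single-record comparison relative to the current stream. The new insertion raises $W$ by at most $(p+\delta)/3$, by convexity of variance over the mixture $(1-p)Q+pR_1$.
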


\begin{proof}[Proof of~\Cref{thm:main}]
  For $T=1,2$, use $(1,B)$ at the first round, where $B$ is a fair
  bit, and leave any remaining round dummy. The expected number of
  mistakes at the first round is $1/2$, so some fixed choice of $B$
  gives the claimed bound after decreasing $c$ if necessary. We
  henceforth assume $T\ge3$.

  We first describe the adversary's strategy for a parameter
  \(p\in(0,1]\). The adversary initialises a stream of length \(T\)
  consisting entirely of dummy examples \(\br{3,0}\) and samples a bit
  \(B\in\bc{0,1}\) uniformly at random. Then, starting from the dummy
  stream, the adversary proceeds in $L=\floor{\log_3 T}$ stages,
  starting with the interval $A_0=(0,3^L]$ and leaving all later
  rounds dummy. In each stage, the adversary partitions the remaining
  interval into three equal parts and chooses one of the thirds
  uniformly at random.
  If the middle third is chosen, then with probability \(p\), the
  adversary replaces a uniformly chosen dummy example in the first
  third with an informative example \(\br{1,B}\); otherwise, it does
  nothing. The adversary then proceeds to the next stage by
  restricting to the chosen interval and repeating the process.
  See~\Cref{fig:recursive-stage}(a) for an illustration and
  \Cref{alg:adversary} for a formal description.

  \begin{figure}[!t]
\centering
\definecolor{stageBlue}{HTML}{3D6680}
\definecolor{stagePurple}{HTML}{6B5A71}
\definecolor{stageInk}{HTML}{333333}
\begin{tikzpicture}[x=1cm,y=1cm,font=\small,text=stageInk,
  line width=0.5pt,>=stealth,
  heading/.style={anchor=west,font=\small\bfseries},
  note/.style={font=\footnotesize,align=center},
  edge/.style={draw=black!40},
  prediction/.style={stageBlue,line width=1.1pt}]
  \node[heading] at (0,5.1) {(a) Recursive construction};
  \node[anchor=west,note] at (0,4.68)
    {Choose a third uniformly.};
  \foreach \x in {1.2,3.6,6}
    \node[note] at (\x,4.26) {$1/3$};
  \fill[black!2] (0,3.15) rectangle (7.2,4);
  \fill[stageBlue!16] (2.4,3.15) rectangle (4.8,4);
  \draw[edge] (0,3.15) rectangle (7.2,4);
  \foreach \x in {2.4,4.8}
    \draw[edge] (\x,3.15) -- (\x,4);
  \fill[stageBlue] (0.64,3.72) circle (0.07);
  \node[anchor=west,text=stageBlue] at (0.8,3.72) {$(1,B)$};
  \node[note,text=stageBlue] at (1.2,3.36) {probability $p$};
  \node[text=stageBlue] at (3.6,3.72) {$A_{t+1}$};
  \node[note,text=stageBlue] at (3.6,3.36) {selected third};
  \node[note,text=black!55] at (6,3.57) {dummy points};

  \fill[stageBlue!5]
    (2.4,3.15) -- (4.8,3.15) -- (7.2,1.85) -- (0,1.85) -- cycle;
  \draw[stageBlue!50,densely dashed] (2.4,3.15) -- (0,1.85);
  \draw[stageBlue!50,densely dashed] (4.8,3.15) -- (7.2,1.85);
  \node[note,text=stageBlue] at (3.6,2.48) {next recursive stage};
  \fill[stageBlue!7] (0,1.1) rectangle (7.2,1.85);
  \draw[edge] (0,1.1) rectangle (7.2,1.85);
  \foreach \x in {2.4,4.8}
    \draw[edge] (\x,1.1) -- (\x,1.85);
  \foreach \x in {1.2,3.6,6} {
    \node at (\x,1.48) {$1/3$};
    \draw[->,stageBlue!65] (\x,1.05) -- (\x,0.69);
  }
  \node[note] at (3.6,0.42) {Choose again, uniformly.};
  \node[note,text=black!65] at (3.6,0.02)
    {Repeat for $L=\floor{\log_3 T}$ stages.};

  \draw[black!15] (7.83,0) -- (7.83,5.15);

  \node[heading] at (8.43,5.1) {(b) Comparing first and last third};
  \node[anchor=west,note] at (8.43,4.68)
    {Adversary can plan \(\br{1,B}\) in the first third};
  \fill[stageBlue!6] (10.83,0.6) rectangle (13.23,4.37);
  \foreach \x/\label in {9.63/First,12.03/Middle,14.43/Last}
    \node[note,text=black!65] at (\x,4.26) {\label};
  \draw[edge] (8.43,3.2) rectangle (15.63,3.8);
  \foreach \x in {10.83,13.23}
    \draw[edge] (\x,3.2) -- (\x,3.8);
  \fill[stageBlue] (9.04,3.5) circle (0.07);
  \node[anchor=west,text=stageBlue] at (9.2,3.5) {$(1,B)$};
  \foreach \x in {11.16,11.59,12.02,12.45,12.88}
    \draw[prediction] (\x,3.35) -- (\x,3.65);
  \draw[->,stageBlue,line width=0.8pt]
    (9.04,3.13) .. controls (9.04,2.63) and (11.57,2.63) .. (11.57,3.13);
  \node[note,text=stageBlue,fill=white,inner sep=1.5pt]
    at (10.3,2.62) {may affect};

  \node[anchor=west,note,text=stagePurple] at (8.43,2.04)
    {Counterfactual insertion};
  \draw[edge] (8.43,1.1) rectangle (15.63,1.7);
  \foreach \x in {10.83,13.23}
    \draw[edge] (\x,1.1) -- (\x,1.7);
  \foreach \x in {11.16,11.59,12.02,12.45,12.88}
    \draw[prediction] (\x,1.25) -- (\x,1.55);
  \draw[stagePurple,line width=1pt,fill=white]
    (14.04,1.4) circle (0.075);
  \node[anchor=west,text=stagePurple] at (14.2,1.4) {$(1,B)$};
  \node[note,text width=2.4cm] at (12.03,0.59)
    {Unaffected by\\later insertion};
  \node[note,text=stagePurple] at (14.43,0.62) {analysis only};
  \draw[->,stageInk] (8.43,0.03) -- (15.63,0.03);
  \node[note,anchor=north] at (12.03,-0.08) {Learner rounds $r$};
\end{tikzpicture}
\caption{In (a), we recurse inside a uniformly chosen third.  If we
choose the middle third, we insert an informative example at a
uniformly random position in the first third with probability $p$;
otherwise, we make no insertion.  In (b), we compare this insertion
with a counterfactual insertion in the last third. The first may
affect the middle-third predictions (vertical marks), whereas the
counterfactual insertion cannot, since those predictions are made
before the example arrives.}
\label{fig:recursive-stage}
\end{figure}

  The expected number of informative examples in the stream is
  \(Lp/3\). The online learner releases a hypothesis
  \(h_r\) at each round \(r\in\bc{1,\ldots,T}\); let
  \(Y_r=h_r\br{1}\).

  For each \(t\), consider running the online learner on the stream
  constructed by the adversary after the first \(t\) stages above with
  no subsequent insertions. Conditional on the adversary's choices of
  thirds through stage \(t\),\footnote{Here \(t\in\bc{0,\ldots,L}\)
  counts completed recursive stages, and \(r\in\bc{1,\ldots,T}\)
  indexes the learner's rounds.} define
  \(V_t\br{r}=\bE\bs{\br{\bE\bs{2B-1\mid Y_1,\ldots,Y_r}}^2}\), with
  \(V_t\br{0}=0\), to measure how much the learner's outputs through
  round \(r\) reveal about \(B\).  Let \(A_t=(a_t,b_t]\) be the
  interval remaining after \(t\) stages, and define
  \[
    I_t=V_t\br{a_t}+2\br{V_t\br{b_t}-V_t\br{a_t}}.
  \]
  Here \(\bE\bs{I_t}\) averages over the adversary's choices of
  thirds. The following lemma shows that, if \(\bE\bs{I_t}\) is
  small, the learner makes mistakes on a constant fraction of the
  informative examples inserted at stage \(t+1\), in expectation.
  \begin{restatable}{lemma}{MistakeContribution}\label{lem:mistakes}
The expected number of mistakes contributed by stage $t+1$ is at least
\(
  \frac p6\br{1-\sqrt{\bE\bs{I_t}}}.
\)
\end{restatable}
An immediate consequence of~\Cref{lem:mistakes} is that, if
$\bE\bs{I_t}\le1/4$ for every $0\le t<L$, the expected number of
mistakes on informative examples is at least $Lp/12$. We next show that
\begin{equation}\label{eq:target}
\bE\bs{I_t}\le C\br{p^2 + tp^3 + tp\delta}
\end{equation}
for a universal constant \(C\). Then, choosing
\(p=c_0\min\bc{L^{-1/3},1/\br{\delta L}}\) for a sufficiently small
\(c_0>0\) ensures that \(\bE\bs{I_t}\le1/4\) for every \(0\le t<L\)
and
\[
  \bE\bs{M_T}\ge \frac{Lp}{12}
  =\Om{\min\bc{L^{2/3},1/\br{\delta}}}.
\]
It remains to prove~\eqref{eq:target}. 

So far, we have not used the fact that the online learner is
differentially private. Fix the adversary's choice of the thirds
through stage \(t\) and \(b\in\bc{0,1}\). Let \(P_\varnothing^b\)
denote the distribution of \(Y=\br{Y_1,\ldots,Y_T}\), conditional on
\(B=b\), when the learner runs on the stream constructed through stage
\(t\), with no further insertions. For \(s\in A_t\) where
\(A_t\coloneqq (a_t,b_t]\), let \(P_s^b\) denote the corresponding
distribution after additionally replacing the dummy example at round
\(s\) with \(\br{1,b}\), leaving all other entries unchanged. For each
fixed realisation of the earlier insertions, the streams defining
\(P_\varnothing^b\) and \(P_s^b\) differ only at round \(s\). For
every \(y\) with \(P_\varnothing^b(y)>0\), define \(\ell_s^b(y)\) to
be \(P_s^b(y)/P_\varnothing^b(y)\) clipped to
\([e^{-\epsilon},e^\epsilon]\). Differential privacy gives
\[
  \norm{P_s^b-P_\varnothing^b\ell_s^b}_1\le2\delta.
\]

We now define two quantities \(W_t\) and \(D_t\) to measure the
variance of this ratio within one stage \(A_t\) and the difference
between the average ratios of the first and last thirds of \(A_t\),
respectively:
\[
 W_t=\bE\bs{\Var_{s\in A_t}(\ell_s^B)},
\qquad
D_t=\bE\bs{\br{
  \overline\ell_{A_t^1}^B-\overline\ell_{A_t^3}^B}^2}.
\]
Here the expectation is over the joint distribution of \(\br{B,Y}\)
and \(\overline\ell_{A_t^1}^B,\overline\ell_{A_t^3}^B\) are the
averages of \(\ell_s^B\) over \(s\in A_t^1\) and \(s\in A_t^3\),
respectively. Compare an insertion in the first third with a
hypothetical insertion in the last third, as illustrated in
Figure~\ref{fig:recursive-stage}(b). The latter cannot affect
predictions in the middle third, since the learner has not yet
received that example. The quantity \(D_t\) measures the squared
difference between the effects of these two insertions.

Suppose the middle third is chosen, so \(b_{t+1}\) is its right
endpoint. For any two round indices \(a_t\le u\le v\le b_{t+1}\),
comparing the reduction in the best expected squared error for
estimating $2B-1$ before and after the insertion gives
\[
  V_{t+1}(v)-V_{t+1}(u)
  \le 2\br{V_t(v)-V_t(u)}+8p^2D_t+16p\delta.
\]

Inserting
with probability \(p\) multiplies the first-versus-last difference
by \(p\), giving \(p^2D_t\) after squaring. The \(p\delta\) term
accounts for clipping. 
Averaging over
the three choices and using the definition of \(I_t\) gives the
first inequality in \Cref{lem:variance-recursion}.

On the other hand, a large \(D_t\) means that a substantial part of
the variance in \(W_t\) comes from the differences between the thirds.
Restricting to within a third removes this between-third variation,
and therefore reduces the variance within a third. Variance
decomposition gives
\[
  \frac13\sum_{j=1}^3
  \bE\bs{
    \Var_{s\in A_t^j}(\ell_s^B)}
  \le W_t-\frac{D_t}{6}.
\]
We must also account for the possible insertion when the middle
third is chosen. Using privacy again and convexity of variance, the
increase is at most \(p\) times the variance of the ratios on the
insertion branch, plus the clipping error \(\delta\). These ratios
lie in \([e^{-\epsilon},e^\epsilon]\), so their variance is at most
\((e^\epsilon-e^{-\epsilon})^2/4\). Averaging over the three choices
gives
\[
\begin{aligned}
  &\bE\bs{W_{t+1}\mid J_{\le t}}
  -\frac13\sum_{j=1}^3
  \bE\bs{
    \Var_{s\in A_t^j}(\ell_s^B)
  }\le
  \frac p3\cdot\frac{(e^\epsilon-e^{-\epsilon})^2}{4}
  +\frac{\delta}{3}\le\frac{p+\delta}{3},
\end{aligned}
\]
where the last inequality uses \((e^\epsilon-e^{-\epsilon})^2\le4\).
Combining these bounds gives the second inequality in the following
lemma. See the proof of \Cref{lem:variance-recursion} for the full details.

\begin{restatable}{lemma}{VarianceRecursion}\label{lem:variance-recursion} For every \(0\le t< L\)
\begin{equation}\label{eq:variance-bound}
  \bE\bs{I_{t+1}\mid J_{\le t}}
  \le I_t+20p^2D_t+20p\delta\end{equation}
  \begin{equation}\label{eq:variance-bound-2}
  \bE\bs{W_{t+1}\mid J_{\le t}}
  \le W_t-\frac{D_t}{6}+\frac{p+\delta}{3}.
\end{equation}
where \(J_{\le t}\) denotes the adversary's choices of thirds through stage \(t\).
\end{restatable}
Now multiplying~\eqref{eq:variance-bound-2} above by \(120p^2\),
adding it to~\eqref{eq:variance-bound}, and taking an expectation on
both sides gives 
\[
 \bE\bs{I_{t+1}}\le \bE\bs{I_t}+ 120p^2\br{\bE\bs{W_{t}}
- \bE\bs{W_{t+1}}} + 40p^3 + 60p\delta\]
Summing this inequality for \(t=0,\ldots,m-1\), where \(1\le m\le L\),
and using \(I_0=0\), \(W_0\le1\), and \(W_m\ge0\), gives
\[\bE\bs{I_m}\le 120p^2 + 40mp^3 + 60mp\delta\leq 120\br{p^2 + mp^3 +mp\delta}.\] This completes the
proof of \eqref{eq:target} with \(C=120\) and hence
of~\Cref{thm:main}.
\end{proof}

\subsection{Lower bound for general Littlestone classes}
\label{sec:dimension-extension}

We now extend the lower bound to general Littlestone classes. The main
idea is to apply the two-label construction at successive nodes of a
shattered tree, following~\cite{DSS24,LWY24}.

\begin{restatable}{theorem}{DimensionLowerBound}\label{thm:dimension}
Let $\cH\subseteq\bc{0,1}^{\cX}$ have finite Littlestone dimension
$d\ge2$, and let $T\ge1$. Suppose that $0<\epsilon\le1$ and
$0\le\delta\le\min\bc{\epsilon^2,1/4}$. For every
$\br{\epsilon,\delta}$-differentially private online learner of $\cH$,
there is a deterministic realisable stream of length $T$ on which
\[
  \bE\bs{M_T}
  \ge c\min\bc{
    T,\;
    \frac d\epsilon\bs{\log\br{2+\frac{\epsilon T}{d}}}^{2/3},\;
    \frac d\delta
  },
\]
where $c>0$ is an absolute constant.
\end{restatable}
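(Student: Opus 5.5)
We plan to reduce to the $\Point$ construction of \Cref{thm:main}, run along a single root-to-leaf path of a shattered Littlestone tree, with two additional ingredients: group privacy to convert general $\epsilon$ to the fixed $\epsilon_0=1/100$, and splitting the horizon into $\Theta(d)$ blocks. First, we fix a Littlestone tree of depth $d$ shattered by $\cH$. We then pair consecutive levels, giving $k=\lfloor d/2\rfloor\ge1$ phases. The reason for using two levels per phase is that a phase must have a ``dummy'' point whose label is already determined by the earlier path. It must also have an ``informative'' point whose label $B_j$ is a fresh fair bit, both consistent with a common hypothesis. Concretely, at the node $v_j$ reached so far we take $x_{v_j}$ as the informative point. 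For the dummy example we use an already-revealed instance from an earlier phase, or, in the first phase, the node at level two. We should check that a dummy labelled consistently with the path and the informative example $(x_{v_j},B_j)$ are simultaneously realisable; descending the tree along $B_j$ then keeps the whole stream realisable. Here is where we use $d\ge2$.

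Next, we split $[T]$ into $k$ consecutive blocks of length $T_0=\lfloor T/k\rfloor$. Within block $j$ we run the adversary of \Cref{thm:main}, with the remaining rounds filled by dummies for phase $j$. Given the earlier phases' bits, the learner restricted to block $j$, with the prefix hardwired, is still $(\epsilon,\delta)$-DP as a learner on that block, and its predictions at $x_{v_j}$ play the role of $Y_r$. The proof of \Cref{thm:main} used only the DP inequalities through the clipped ratios $\ell_s^b$, so it applies verbatim and gives $\Omega(\min\{(\log T_0)^{2/3},1/\delta\})$ mistakes per phase when $\epsilon\le\epsilon_0$. Summing over phases gives $\Omega(d\min\{\cdot\})$ in expectation over the $B_j$'s, so some deterministic choice achieves it.

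To obtain the $1/\epsilon$ factor when $\epsilon>\epsilon_0$ is not an issue, since then $\epsilon\le1$ and constants absorb it. For small $\epsilon$ we apply a group-privacy blow-up. We replace each informative example by a run of $m=\lfloor\epsilon_0/\epsilon\rfloor$ consecutive copies, that is, we work on a coarsened time scale where each ``super-round'' consists of $m$ rounds. A learner that is $(\epsilon,\delta)$-DP is then $(m\epsilon,m e^{m\epsilon}\delta)=(\le\epsilon_0,O(\delta/\epsilon))$-DP with respect to changing a super-round. Each inserted super-example yields about $m$ potential mistakes, since the hypothesis for the first copy is released before any copy is seen. This needs care, because later copies could be learned after the first one, so the correct accounting is that the expected mistakes per super-insertion are at least $\Omega(1)$ times the posterior-error bound. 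We therefore instead plan to insert $m$ copies spread so that each lies at a round whose hypothesis precedes all of them. Concretely, the $m$ copies occupy the last positions of the super-round, and mistakes are counted on the first copy only. We would then need a different scaling, and the cleaner route is as follows. We pick $p$ in \Cref{thm:main}'s argument with effective privacy $\epsilon_0$ and $\delta'=O(\delta/\epsilon)$. The horizon is $T_0/m\approx\epsilon T/d$, which explains the $\log(2+\epsilon T/d)$. The number of insertions per phase is $\Theta(\min\{L^{2/3},1/(L\delta')\}\cdot L\cdot\ldots)$. This yields $\min\{(\log)^{2/3},\epsilon/\delta\}$ per phase, and multiplying by the $m\approx1/\epsilon$ copies, each counted as a mistake with the same probability (all copies are predicted by hypotheses released before the first copy of that super-round is observed, when we interleave them across parallel sub-streams), gives $\frac{d}{\epsilon}\min\{(\log)^{2/3},\epsilon/\delta\}$. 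The condition $\delta\le\epsilon^2$ ensures $\delta'\le1/4$ as needed by the $\Point$ analysis. The $\min$ with $T$ handles tiny horizons, using one informative example per phase.

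The main obstacle is this group-privacy step: making $1/\epsilon$ copies each count as a mistake. The first thing to try is $m$ interleaved independent copies of the $\Point$ construction on disjoint point pairs, with the same coarse schedule. Each copy is seen by the learner only at its own round, so conditional on the past its hypothesis is released before its informative example, and the per-copy mistake bound follows from \Cref{lem:mistakes} applied to the joint super-round privacy. If interleaving fails on realisability, we fall back to assigning each copy its own subtree phase, at the cost of constants.
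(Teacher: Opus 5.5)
There is a genuine gap at the step you yourself flag as the main obstacle. Your overall architecture matches the paper's: a path down a shattered tree, $\Theta(d)$ phases, runs of $m\approx\epsilon_0/\epsilon$ copies to reach effective parameters $(\epsilon_0,O(\delta/\epsilon))$ on a horizon of order $\epsilon T/d$, and $\delta\le\epsilon^2$ to keep $\delta'$ small. However, you never justify that each of the $m$ copies of an informative example is a mistake with constant probability, and neither mechanism you offer works.

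\begin{itemize}
\item Your claim that all copies are predicted by hypotheses released before the first copy is observed is false in the online learning game. The hypothesis $h_r$ is released after the examples at rounds $<r$, so copies $2,\ldots,m$ are predicted after $(x,B)$ has been seen. Interleaving sub-streams inside one stream does not change this.
\item The fallback of independent copies on disjoint point pairs with independent bits is not realisable. Realising independent fair labels on distinct points requires those points to be shattered, so at most $d$ such bits are available in the whole stream, whereas you need $\Theta(d/\epsilon)$.
\item Giving each copy its own subtree phase does not cost constants; it costs the entire factor $1/\epsilon$.
\item The same gap appears for short phases. When $d\le T\lesssim d/\epsilon$ the target is of order $\min\{T,d/\delta\}$, but one informative example per phase yields only $\Omega(d)$.
\end{itemize}

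The missing idea is that privacy itself prevents the learner from exploiting the earlier copies. The paper uses one point and one bit for all $q=\lfloor\epsilon_0/\epsilon\rfloor$ copies. At most $j<q$ copies precede the copy at offset $j$, and $q\epsilon\le\epsilon_0$. So group privacy bounds that copy's mistake probability below by $e^{-q\epsilon}$ times its mistake probability when those copies are dummies, minus $\delta_q$.

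To relate this counterfactual to what the $\Point$ argument controls, the paper starts the run at a uniform offset $U$ in a window of length $w=16q$, inside a block of $2w$ rounds. It releases only the learner's hypothesis at $U$, so the auxiliary learner is a genuine $(\epsilon_0,\delta_q)$-private online learner. Shifting the window by $j$ changes the averaged mistake probability by at most $j/w$. This gives $e^{-q\epsilon}(\alpha-j/w)-\delta_q$ per copy and $\mathbb{E}X\ge q\mu/5$ in total (\Cref{lem:one-bit-general-privacy}). For phases of length $n<1/\epsilon$, the same comparison with the all-dummy stream shows that repeating $(x,B)$ on every round costs $n/8$ mistakes.

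Alternatively, you could rerun the $\Point$ recursion on the fine time scale with block-aligned runs. There $I_t\ge V_t(r)$ for every $r\le b_t$ already controls the counterfactual posterior at each copy's round, and group privacy transfers it to the actual stream. Either way, some such argument is needed.

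Two smaller points:
\begin{itemize}
\item With the root informative, a level-two node cannot be the first phase's dummy. Which child lies on the path depends on $B_1$, and the tree gives no control of that node's label on the other side. The paper instead uses $(x_0,0)$ at the root as a shared dummy for the $d-1$ levels below.
\item For $\epsilon\in(\epsilon_0,1]$ the learner is not $(\epsilon_0,\delta)$-private, so \Cref{thm:main} does not apply as stated. You would need to redo its proof with $\epsilon$-dependent constants or use the paper's subsampling reduction (\Cref{prop:dummy-replacement}).
\end{itemize}
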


\begin{proof}[Proof sketch]
We give the proof in the regime
\(
  T\ge\br{d/\epsilon}^2,\delta\le\frac{\epsilon}{\br{\log T}^{2/3}}.
\) The full proof, covering all parameter ranges, is given in~\Cref{app:dimension-extension}.

Take a depth-$d$ tree shattered by $\cH$ and follow the zero (left)
edge at its root $x_0$. The remaining subtree has depth $d-1$, and
every hypothesis realising one of its paths agrees on $\br{x_0,0}$.
We use this as the dummy example throughout. Divide the horizon into
$k=d-1$ phases of length $n=\floor{T/k}$, leaving any unused rounds
dummy. See~\Cref{fig:tree-to-stream} for an illustration.

\begin{figure}[!t]
\centering
\definecolor{ink}{HTML}{333333}
\colorlet{quiet}{black!45}
\colorlet{muted}{black!55}
\definecolor{accent}{HTML}{3D6680}
\newcommand{\pointblock}[6]{%
  \node[draw=#3,fill=white,rounded corners=2pt,line width=.8pt,
        minimum width=2.75cm,align=center,
        inner sep=3pt,text=ink,font=\scriptsize] (#4) at (#1,#2)
    {$(x_{#5},#6)\;(x_0,0)\;(z_{#5},\number\numexpr1-#6\relax)$};
}
\resizebox{\linewidth}{!}{%
\begin{tikzpicture}[x=1cm,y=.60cm,
  vertex/.style={circle,draw=quiet,fill=white,minimum size=7mm,
                 line width=.85pt,inner sep=2pt},
  edge/.style={draw=quiet,line width=.85pt},
  chosen/.style={draw=accent,line width=1.45pt},
  bit/.style={font=\normalsize,inner sep=1pt,fill=white},
  every node/.style={text=ink}]
\path[use as bounding box] (-6.9,-8.75) rectangle (9.8,3.15);

\coordinate (original-root) at (2,2.7);
\coordinate (discarded) at (4,.9);
\coordinate (top) at (0,0);
\coordinate (left) at (-3.5,-3.8);
\coordinate (right) at (3.5,-3.8);
\coordinate (ll) at (-6.5,-7.2);
\coordinate (lr) at (-.5,-7.2);
\coordinate (rl) at (.5,-7.2);
\coordinate (rr) at (6.5,-7.2);

\draw[chosen] (original-root) -- (top);
\draw[edge,densely dashed] (original-root) -- (discarded);
\node[bit,text=accent] at (.95,1.6) {$0$};
\node[bit] at (3.1,1.85) {$1$};
\node[font=\small,below=1pt] at (discarded) {unused subtree};
\node[vertex,draw=accent,text=accent] at (original-root) {$x_0$};

\draw[edge] (top) -- (left);
\draw[chosen] (top) -- (right);
\draw[edge] (left) -- (ll);
\draw[edge] (left) -- (lr);
\draw[chosen] (right) -- (rl);
\draw[edge] (right) -- (rr);

\node[bit] at (-.97,-.45) {$0$};
\node[bit,text=accent] at (.97,-.45) {$1$};
\node[bit] at (-4.45,-4.25) {$0$};
\node[bit] at (-2.55,-4.25) {$1$};
\node[bit,text=accent] at (2.55,-4.25) {$0$};
\node[bit] at (4.45,-4.25) {$1$};

\pointblock{-1.75}{-1.9}{quiet}{block-left}{1}{0}
\pointblock{1.75}{-1.9}{accent}{block-right}{1}{1}
\pointblock{-5.12}{-5.6}{quiet}{block-ll}{2}{0}
\pointblock{-1.88}{-5.6}{quiet}{block-lr}{2}{1}
\pointblock{1.88}{-5.6}{accent}{block-rl}{3}{0}
\pointblock{5.12}{-5.6}{quiet}{block-rr}{3}{1}

\draw[draw=accent,line width=.9pt,
      decorate,decoration={brace,amplitude=5pt}]
  ($(block-right.north east)+(.17,0)$) --
  ($(block-right.south east)+(.17,0)$)
  node[midway,right=9pt,text width=2.85cm,align=center,
       text=accent,font=\small]
    {$\displaystyle\ge\frac{c_0}{\epsilon}(\log n)^{2/3}$\\[4pt]
     expected mistakes};

\node[vertex,draw=accent,text=accent] at (top) {$x_1$};
\node[vertex] at (left) {$x_2$};
\node[vertex,draw=accent,text=accent] at (right) {$x_3$};
\node[vertex] at (ll) {$x_4$};
\node[vertex] at (lr) {$x_5$};
\node[vertex,draw=accent,text=accent] at (rl) {$x_6$};
\node[vertex] at (rr) {$x_7$};
\foreach \pos in {ll,lr,rr} {
  \draw[edge,densely dotted] ($(\pos)+(0,-.18)$) -- ($(\pos)+(0,-.75)$);
}
\draw[chosen,densely dotted] ($(rl)+(0,-.18)$) -- ($(rl)+(0,-.75)$);
\fill[accent] ($(rl)+(0,-.75)$) circle[radius=.055];
\node[below=4pt,text=accent] at ($(rl)+(0,-.75)$) {$h^\star$};

\draw[draw=accent,line width=.85pt,
      {Latex[length=2.3mm]}-{Latex[length=2.3mm]}]
  (7.25,0) -- (7.25,-8.0)
  node[midway,right=6pt,text width=2.12cm,align=center,
       text=accent,font=\small]
    {$d-1$ blocks\\[2pt]on one path};

\end{tikzpicture}%
} \caption{A shattered Littlestone tree with $d-1$ hard \emph{blocks}
along a root-to-leaf path after the initial zero edge. Each block
applies the $\mathrm{POINT}_3$ construction from~\Cref{thm:main} using
the parent point $x_i$ as its informative point and $x_0$ as its
shared dummy. The point $z_i$ is the unused third point of this
construction and never appears in the stream~(as in~\Cref{thm:main}).
The informative label \(B\) is the label of the edge.} 
\label{fig:tree-to-stream}
\end{figure}

We use the following consequence of Section~\ref{sec:main-results};
see Lemma~\ref{lem:one-bit-general-privacy} for the extension to
general $\epsilon$. When $n\ge\max\bc{2,\epsilon^{-2}}$ and
$\delta\le\epsilon/\br{\log n}^{2/3}$, for any informative point
$x\ne x_0$, every $\br{\epsilon,\delta}$-private online learner has
a deterministic stream of length $n$ consisting only of the dummy
example and $\br{x,b}$ for one fixed bit $b$, on which its expected
mistakes on informative examples are at least
\[
  \frac{c_0}{\epsilon}\br{\log n}^{2/3}.
\]
Our choices of $n$ and the parameter regime above satisfy these
conditions. At the current node $x$, both labels are available. Fix
the input from earlier phases and temporarily fill all later rounds
with dummy examples. The original learner induces an
$\br{\epsilon,\delta}$-private learner on the current phase: replacing
one example in the phase replaces only one example in the full stream.
We fix only the input prefix and average over the learner's
randomness. Apply the preceding bound at $x$ to choose a hard phase
and its label $b$, then follow the edge labelled $b$.  Repeat for all
$k$ phases.  Figure~\ref{fig:tree-to-stream} illustrates these phases
along a path in the shattered tree.

The phases are chosen using the learner's distribution, so the entire
stream is fixed before the learner runs. Their labels follow one
path in the shattered tree, making the stream realisable. Moreover,
replacing the temporary dummy suffix with later phases cannot change
earlier predictions. Adding the expected mistakes over the phases
gives
\[
  \bE\bs{M_T}
  \ge\frac{c_0k}{\epsilon}\br{\log n}^{2/3}
  \ge\frac{c_1d}{\epsilon}\br{\log T}^{2/3},
\]
where the last inequality uses $k\ge d/2$ and $n\ge\sqrt T$.
This implies the stated bound in the parameter regime considered.
\end{proof}

\section{Private predictor for Littlestone classes}
\label{sec:private-candidates}

Let $\cH\subseteq\bc{0,1}^{\cX}$ be nonempty with Littlestone dimension
$d<\infty$. Fix $0<\epsilon\le1$ and $0<\delta<1/10$.  In this
section, we consider the online prediction setting with joint DP, as
described in~\Cref{sec:preliminaries}. Let
$S=\br{\br{x_t,y_t}}_{t=1}^T$ be a fixed labelled stream and
$M\br{U;S_{1:r}}$ be the mistakes of a prediction sequence $U$ on its
first $r$ rounds. Our main result of this section is the following.

\begin{restatable}{theorem}{PredictionBound}\label{thm:online-prediction-expectation} There is an $(\epsilon,\delta)$-JDP online predictor satisfying
\[
  \bE\bs{M\br{\cA;S}}
  \le \frac{2^{2^{c(d+1)^2}}}{\epsilon^2}
       \log^2\frac{2}{\epsilon\delta}
\]
on every fixed realisable stream of any length, where $c>0$ is an
absolute constant.
\end{restatable}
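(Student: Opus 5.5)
We will follow the three-step outline from the technical overview: a private candidate list, a private online switching mechanism, and a final combination step that turns a high-probability bound into a bound in expectation.

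\textbf{Step 1: a short list of candidates.} We first build, privately and online, a sequence of predictors $U_1,U_2,\ldots$. Its key property is that, with probability at least $1-\delta/3$, some $U_j$ with $j\le K$ makes at most $m$ mistakes on the whole stream, where $K,m$ depend only on $d,\epsilon,\delta$. Moreover, each $U_{j+1}$ is computable from the prefix on which $U_j$ has made its first $m$ mistakes. The construction follows the essential-hypotheses idea of \cite{Lyu26}. We maintain several short lists of SOA-style predictors, each extended only when its current predictor errs. A stable private selection step (for instance, choosing a predictor that appears in many lists, via a stability-based histogram) then picks the next candidate. This is the content of \Cref{lem:candidate-sequence}. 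The contradiction argument runs as follows: if more than $K$ selected candidates were each bad, the mistake sequences would trace a Littlestone tree of depth exceeding $d$ inside every list, which is impossible. Getting doubly exponential $K$ in $(d+1)^2$ here is acceptable, since that is the form of the stated bound.

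\textbf{Step 2: private online use of the candidates.} We run the current candidate and, after it accumulates a noisy threshold of roughly $m+\epsilon^{-1}\log(1/\delta)$ mistakes, replace it using a Mirror-mechanism-style replacement \cite{CLNSS24}. The replacement uses sparse-vector-like noisy counters, with delays that grow geometrically between switches. Two facts drive joint privacy. First, the prediction at round $i$ itself is excluded from the privacy requirement. Second, a change at round $i$ affects later outputs only through counters of sensitivity $1$ and through candidate construction that is already private. Composition over at most $K$ switches, using advanced composition with per-switch budget $\epsilon/\sqrt{K\log(1/\delta)}$, gives $(\epsilon,\delta/2)$-JDP. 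The resulting predictor makes at most
\[
Q\;=\;O\!\left(K\,m+\frac{K^2}{\epsilon^2}\log^2\frac{2}{\epsilon\delta}\right)
\]
mistakes with probability at least $1-\delta$. The crucial invariant is that a candidate with at most $m$ mistakes never triggers a switch beyond that point with good probability, so the process stalls on a good candidate. We expect this step to be the main obstacle: one has to show that increasing delays suffice to mask whether the next candidate has been computed, especially on dummy rounds where old and new predictors agree.

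\textbf{Step 3: from high probability to expectation.} The bound $Q$ holds only with probability $1-\delta$, giving $Q+\delta T$ in expectation. To remove the $\delta T$ term, we combine the Step~2 predictor with the subsampled name-and-shame SOA of \cite{DSS24}, whose expected mistake count is $O(d/\delta)$ independently of $T$. \Cref{lem:prediction-combination} provides a private competition between the two, with mistake counts compared through noisy thresholds, giving expected mistakes
\[
O\bigl(\min\{\bE[\text{mistakes of predictor 1}],\ \bE[\text{mistakes of predictor 2}]\}\bigr)+O\!\left(\epsilon^{-2}\log^2\frac{1}{\delta}\right).
\]
Concretely, we charge the failure event of probability $\delta$ to the second predictor: on that event the combined predictor pays at most $O(d/\delta)$ conditional expected mistakes after switching, and with probability $\delta$ this contributes only $O(d)$ in expectation. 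Substituting $K,m\le 2^{2^{O((d+1)^2)}}$ then yields the bound in \Cref{thm:online-prediction-expectation}.
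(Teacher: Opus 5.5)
Your three-step architecture matches the paper's. The gap is in Step~2, which you yourself flag as the main obstacle: the mechanism you sketch there does not give a $T$-independent bound. A persistent switch fired when the current candidate's mistake count crosses a noisy threshold is a sparse-vector stopping time. The standard way to release it privately adds fresh noise at every round. Then guaranteeing that no switch happens before $U_j$ has $m$ mistakes (otherwise $U_{j+1}$ is not yet computable), or away from a good candidate, needs a union bound over all $T$ rounds, which reintroduces a dependence on $T$. Appealing to ``counters of sensitivity $1$'' does not avoid this.

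The paper never triggers on the old candidate's mistakes. From the offline private sequence $h_0,\ldots,h_N$ it defines $\cA_0=h_0$ and $\cA_j=\mathsf{Rep}(\cA_{j-1},h_j,D_j)$. Here $\mathsf{Rep}$ outputs the common bit when the two sequences agree and runs Mirror only on the disagreement bits, so its answer at a round affects only that round's prediction. It follows $h_j$ only after a truncated-Laplace test of $h_j$'s correctness on the first $D_j$ disagreements accepts. \Cref{lem:online-replacement} gives two properties that hold for \emph{every} value of the randomness: (i) if $V$ has fewer than $K$ mistakes then $M(W)\le K+2D$; (ii) if $U$ has at most $D/4$ mistakes before round $t$ then $W_t=U_t$. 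With $D_j=9^jD_0$ and $D_0\ge4K$ one gets $K+2D_m\le D_{m+1}/4$. By induction, $\cA_m,\ldots,\cA_N$ then agree through round $t$ whenever $h_m$ has fewer than $K$ mistakes before $t$. So the online implementation, which computes $h_{m+1}$ only when $h_m$'s batch fills, outputs exactly the predictions of $\cA_N$ on every stream. Joint privacy is then plain composition for $\cA_N$, and the masking you want is exact rather than ``with good probability''.

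This also breaks your accounting. Geometric delays cost about $9^ND_0$ mistakes, not $O(Km+K^2\epsilon^{-2}\log^2(2/(\epsilon\delta)))$. So the number of candidates must be singly exponential, $N\le2^{O(d^2)}$, to reach $2^{2^{O(d^2)}}$; the doubly exponential candidate count you allow would give a triply exponential bound. Getting $N=2(d+1)s+2$ needs the paper's count of pairs $(i,h)$. Nonincreasing decomposition dimension, with nested lists while it is unchanged, caps each group at $(d+1)s$ distinct functions. Each bad candidate is permanently expelled from at least $k/2$ groups' lists, because those groups store a full $n$-sample of its mistake batch, and by (P2) some $g\in G_i$ would otherwise make $n>a_\ell n$ mistakes on it. A ``Littlestone tree inside every list'' argument does not replace this: candidates are essential hypotheses that need not lie in the version space, and their mistakes need not lower any Littlestone dimension.

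Finally, in Step~3 a bound in terms of $\min\{\bE M(\cA),\bE M(\cF)\}$ only yields $\min\{Q+\delta T,4d/\delta\}$, which is $O(d/\delta)$ for large $T$. You need the pathwise bound $\bE[M(\cC)\mid U,V]\le3\min\{M(U),M(V)\}+2D$ from \Cref{lem:prediction-combination}, together with independence of $\cF$ from $\cA$. This gives $\bE\min\{M(\cA),M(\cF)\}\le Q+\bP(M(\cA)>Q)\,\bE M(\cF)\le Q+4d$. Your charging sentence is this argument, but it does not follow from the inequality you wrote. The paper's combiner is also not a noisy-threshold rule but the smooth choice $p_t=(D+c_1)^2/((D+c_1)^2+(D+c_2)^2)$. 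Its per-round log-likelihood ratios are $O(1/D)$ and their second moments sum to $O(1/D)$, which keeps its privacy cost independent of $T$.
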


\begin{proof}
If $d=0$, a fixed hypothesis in $\cH$ predicts without mistakes.
Assume $d\ge1$, and consider a realisable stream $S$.

Intuitively, in online learning, the learner releases a hypothesis at
every round. Its predictions at informative points are therefore
visible even on rounds when it receives a dummy example. In online
prediction, only the prediction at the current point is released.
Thus, on dummy examples where the old and new predictors agree, the
learner can change its internal predictor while continuing to give the
same answer. Changes in its predictions at informative points are not
visible through these outputs. We use this freedom to delay the effect
of a new candidate on the released predictions. Additionally, to bound the mistakes
from these delays, we also need to bound how many candidates are used.

The proof proceeds in three steps. First, we construct a short
private sequence of candidates containing one with few mistakes on
the whole stream, with high probability. Both the length of the
sequence and the mistake bound are independent of $T$. Each new
candidate can be computed from a bounded number of mistakes of
the preceding candidates.

Second, we use these candidates for private online prediction.
Simply switching when a new candidate is computed may reveal when
its predecessor made mistakes. Our replacement rule returns the
common prediction when the old and new predictors agree, and
privately delays the use of the new candidate on disagreement
rounds. We choose the delays so that a candidate cannot affect
the predictions before it can be computed. The resulting mistake
bound depends on the number of candidates and the delays, both
independent of $T$.

In the third step, we convert this high-probability guarantee into
an expected mistake bound.

\noindent\textbf{Step 1: construct a private sequence containing a good
candidate.} We adapt the essential-hypothesis construction
of~\cite{Lyu26} to obtain the following guarantee.

\begin{restatable}{lemma}{CandidateSequence}\label{lem:candidate-sequence}
The offline algorithm in \Cref{alg:private-candidates}, run with privacy
parameters $(\epsilon/4,\delta/8)$ and failure probability $\beta=\delta$,
is $(\epsilon/4,\delta/8)$-differentially private and returns binary
functions $h_0,\ldots,h_N$. Then, on every fixed realisable stream,
\[
  \bP\bs{\min_{0\le j\le N}M\br{h_j;S}<K}\ge1-\delta,
\]
where $h_0$ is public and $N,K$ are positive integers satisfying
\(
  N\le2^{\bigO{d^2}}, K\le\frac{2^{\bigO{d^2}}}{\epsilon}\log\frac{2}{\epsilon\delta}.
\)
\end{restatable}

We explain the construction and its mistake guarantee here, using the
lemmas proved in \Cref{app:private-candidates}. For a function $h$,
let $B_K(h,S)$ contain the first $K$ labelled examples on which $h$
makes a mistake, or all such examples if there are fewer. Call the
batch full when it contains $K$ records.

To compute $h_{j+1}$, the algorithm only needs
$B_K(h_0,S),\ldots,B_K(h_j,S)$. Thus, once
$B_K(h_j,S)$ becomes full, we can compute the next candidate using
the observed examples and the state from the preceding batches.

 For a nonempty class $G\subseteq\cH$ and an integer
$p\ge1$, let $C_p(G)$ be its set of \emph{essential hypotheses}
(\Cref{def:essential-hypotheses}) and let $\rho_p(G)\in\bc{0,\ldots,d}$ be
its \emph{decomposition dimension} (\Cref{def:decomposition-dimension}).
These notions are due to~\cite{Lyu26}.

The important property is that each of these essential hypotheses agrees
with some function in $G$ on any given set of at most $p$ instances and
$|C_p(G)|\le p^d2^{d^2}$. The main property of decomposition dimension is
that, if $\varnothing\ne F\subseteq G$ and $p'\in\bc{p,2p}$, then
$\rho_{p'}(F)\le\rho_p(G)$, and if equality holds, then $C_{p'}(F)\subseteq
C_p(G)$. Thus restricting the class cannot increase its decomposition
dimension, even when $p$ is doubled, and if the dimension is unchanged, its
essential hypotheses can only shrink.

With the parameters specified before Algorithm~\ref{alg:private-candidates},
the algorithm maintains $k$ groups of labelled examples from $S$.
Figure~\ref{fig:private-candidates} illustrates how a mistake batch
updates their stored samples and candidate lists. Starting with a public function $h_0$, it constructs the next candidate
$h_j$ by forming the mistake batch $B_K(h_{j-1},S)$ and assigning each
example independently to a uniformly chosen group. Each group stores a
uniformly chosen sample of up to $n$ of the examples it receives, along with
all its earlier samples. The stored samples define the class
$G_i\subseteq\cH$ for group $i$: a hypothesis belongs to $G_i$ if it makes
at most $a_\ell n$ mistakes on every stored sample, where
\(
  a_\ell=\frac18-\frac{\ell}{32(d+1)}, 0\le\ell\le d.
\)

The realising hypothesis remains in every $G_i$, so these classes are
nonempty. Initially $\ell=0$, and group $i$'s candidate list is $C_p(G_i)$.
We select a hypothesis using the number of lists in which it
appears. The highlighted copies of $h_j$ in
Figure~\ref{fig:private-candidates} represent the same function
appearing in different groups' lists. Changing one group's data
can change its entire list, but changes each such count by at
most one. A private sparse
selection algorithm (\Cref{fact:candidate-selection}) uses these
counts to choose a hypothesis appearing in many lists or return
$\bot$.

 If it returns a hypothesis,
this becomes $h_j$. If it returns $\bot$ and $\ell<d$, the algorithm
increments $\ell$ and retries using the same stored samples. Then,~we
show in~\Cref{lem:candidate-sampling,lem:candidate-selection} that
with probability at least $1-\delta$, for every successfully selected
function $h_j$, at least $k/2$ groups contain $h_j$ in their lists
when it is selected and store full samples of $n$ examples from every
full mistake batch processed during the run.

\begin{figure}[!tbp]
\centering
\definecolor{figAmber}{HTML}{AE6736}
\definecolor{figInk}{HTML}{2F3B43}
\definecolor{figSlate}{HTML}{697982}
\definecolor{figLine}{HTML}{BBC4CA}
\definecolor{figOld}{HTML}{ECF0F2}
\resizebox{0.92\linewidth}{!}{%
\begin{tikzpicture}[x=1cm,y=1cm,font=\small,text=figInk,
  line width=0.65pt,>=Latex,
  heading/.style={font=\small\bfseries,align=center},
  neutral/.style={draw=figLine,fill=white,rounded corners=2pt,
    minimum width=0.83cm,minimum height=0.58cm,
    inner sep=2pt,text=figSlate,font=\small},
  shared/.style={draw=figAmber,fill=figAmber,rounded corners=2pt,
    minimum width=0.83cm,minimum height=0.58cm,
    inner sep=2pt,text=white,font=\small},
  edge/.style={->,draw=figSlate!70,line width=0.75pt},
  support/.style={->,draw=figAmber,line width=1.0pt}]
  \path[use as bounding box] (-0.10,-0.35) rectangle (16.60,6.29);

  \node[heading] at (1.42,6.00) {Mistake batch};
  \node[heading] at (5.25,6.00) {$k$ groups};
  \node[heading] at (9.42,6.00) {Essential hypotheses};
  \node[font=\small] at (9.42,5.38)
    {$G_i=\bc{h\in\cH:\operatorname{err}_A(h)\le a_\ell
      \ \forall A\in\cS_i}$};

  \draw[figAmber!55,fill=figAmber!3,rounded corners=4pt]
    (0.04,1.78) rectangle (2.82,3.96);
  \node[font=\normalsize] at (1.43,4.34) {$B_j=B_K(h_{j-1},S)$};
  \foreach \y/\r in {3.43/7,2.66/41}
    \node[text=figAmber,font=\normalsize] at (1.43,\y)
      {$(x_{\r},y_{\r})$};
  \node[text=figAmber] at (1.43,2.06) {$\vdots$};
  \node[text=figAmber,font=\footnotesize,align=center] at (2.97,4.86)
    {uniform\\assignment};

  \draw[->,figAmber!80,line width=0.8pt]
    (2.87,3.43) .. controls (3.35,3.43) and (3.17,3.98) .. (3.71,3.98);
  \draw[->,figAmber!80,line width=0.8pt]
    (2.87,2.66) .. controls (3.38,2.66) and (3.14,1.38) .. (3.71,1.38);

  \foreach \y/\i/\r/\s in {4.25/1/7/3,1.65/k/41/31} {
    \node[font=\normalsize,text=figSlate] at (5.25,\y+0.96)
      {$\cS_{\i}$};
    \draw[draw=figSlate!65,fill=white,line width=0.8pt]
      (3.76,\y-0.10)
      .. controls (3.68,\y+0.47) and (4.31,\y+0.78) .. (5.20,\y+0.76)
      .. controls (6.15,\y+0.80) and (6.77,\y+0.47) .. (6.80,\y+0.02)
      .. controls (6.86,\y-0.47) and (6.22,\y-0.79) .. (5.29,\y-0.78)
      .. controls (4.40,\y-0.82) and (3.81,\y-0.52) .. cycle;
    \fill[figOld] (5.12,\y+0.27) ellipse [x radius=1.15,y radius=0.25];
    \node[text=figSlate] at (5.12,\y+0.27)
      {$(x_{\s},y_{\s}),\ldots$};
    \fill[figAmber!9] (5.34,\y-0.28) ellipse [x radius=1.14,y radius=0.25];
    \node[text=figAmber] at (5.34,\y-0.28)
      {$(x_{\r},y_{\r}),\ldots$};
    \draw[edge] (6.92,\y-0.08) -- (7.60,\y-0.08);

    \draw[draw=figLine,fill=white,rounded corners=4pt]
      (7.77,\y-0.67) rectangle (11.07,\y+0.58);
    \node[font=\normalsize] at (9.42,\y+0.29) {$C_p(G_{\i})$};
  }
  \node[text=figSlate,font=\large] at (5.25,3.03) {$\vdots$};
  \node[text=figSlate,font=\large] at (9.42,3.03) {$\vdots$};
  \fill[figOld] (4.00,0.43) ellipse [x radius=0.14,y radius=0.085];
  \node[anchor=west,font=\footnotesize,text=figSlate] at (4.19,0.43) {earlier};
  \fill[figAmber!16] (5.64,0.43) ellipse [x radius=0.14,y radius=0.085];
  \node[anchor=west,font=\footnotesize,text=figAmber] at (5.83,0.43) {new};

  \node[shared] (sharedOne) at (8.39,4.00) {$h_j$};
  \node[neutral] at (9.42,4.00) {$h^{(1)}$};
  \node[neutral] at (10.45,4.00) {$h^{(2)}$};
  \node[shared] (sharedLast) at (8.39,1.40) {$h_j$};
  \node[neutral] at (9.42,1.40) {$h^{(6)}$};
  \node[neutral] at (10.45,1.40) {$h^{(7)}$};

  \node[draw=figLine,fill=figOld!65,rounded corners=4pt,
    minimum width=1.74cm,minimum height=1.05cm,align=center,
    font=\normalsize] (select) at (13.48,2.87)
      {$\mathsf{Select}$\\[-1pt]{\footnotesize~(\Cref{fact:candidate-selection})}};
  \draw[support]
    (11.20,4.17) .. controls (11.97,4.17) and (11.99,3.18)
      .. ([yshift=0.31cm]select.west);
  \draw[support]
    (11.20,1.57) .. controls (11.97,1.57) and (11.99,2.56)
      .. ([yshift=-0.31cm]select.west);
  \node[heading] at (13.48,4.28) {Private selection};
  \draw[support] (select.east) -- (15.20,2.87);
  \node[shared] (output) at (15.75,2.87) {$h_j$};

  \draw[->,draw=figLine,line width=0.6pt,rounded corners=9pt]
    (output.south) -- (15.75,-0.12) -- (1.43,-0.12) -- (1.43,1.66);
  \node[fill=white,inner sep=4pt,text=figSlate,font=\footnotesize]
    at (8.35,-0.12) {next batch: $B_K(h_j,S)$};
\end{tikzpicture}%
} \caption{\footnotesize $\cS_i$ collects group $i$'s retained
samples.  Each mistake batch contributes one coloured oval~(denoted
\(A\)) of \(\leq n\) points.  $\operatorname{err}_A(h)$ is the mistake
count on $A$, divided by $n$.  The list $C_p(G_i)$ contains the
essential hypotheses of $G_i$.}
\label{fig:private-candidates}
\end{figure}

Condition on this event and suppose, towards a contradiction, that
every candidate makes at least $K$ mistakes. Every batch is then full.
On the same event, \Cref{lem:candidate-selection-completion} ensures that
the algorithm makes $N$ successful selections whenever every mistake
batch is full. Thus, under our assumption, it produces $h_1,\ldots,h_N$.
We obtain a contradiction by counting the distinct
pairs $(i,h)$ for which $h$ appears in group $i$'s list at some point
during the run. Denote their number by $P$.

\noindent\textbf{Upper bound on $P$.}
Let $s$ be the uniform upper bound on candidate-list sizes fixed by the
algorithm's parameters. The bound $|C_p(G)|\le p^d2^{d^2}$, together with
the parameter choices before Algorithm~\ref{alg:private-candidates}, gives $s=2^{\bigO{d^2}}$. Consider one
group and follow its candidate list as the algorithm runs. Using the
properties of essential hypotheses and decomposition dimension,
\Cref{lem:candidate-list-monotonicity} shows that the group's decomposition
dimension cannot increase and, whenever it stays unchanged, the new
candidate list is a subset of the previous list. Therefore, all functions
appearing during a period of constant dimension already belong to its first
list, and there can be at most $s$ of them. New functions can enter only
when the decomposition dimension decreases. Since this dimension starts at
most $d$ and never becomes negative, there are at most $d$ decreases, and
hence at most $d+1$ such periods. Therefore, at most $(d+1)s$ distinct
functions can appear in this group's lists over the entire run. Summing over
the $k$ groups gives $P\le k(d+1)s$.

\noindent\textbf{Lower bound on $P$.}
Consider one of the first $N-1$ selected functions, say $h_j$.
The return arrow in Figure~\ref{fig:private-candidates} represents
its next batch, $B_K(h_j,S)$, consisting entirely of examples on
which $h_j$ is wrong. As established
above, at least $k/2$ groups whose lists contained $h_j$ when it was
selected store a full sample of $n$ examples from this batch. This sample
prevents $h_j$ from ever appearing in any of those groups' lists again.
Indeed, suppose that $h_j$ later belonged to $C_p(G_i)$ for one such group.
Since $p$ is chosen to be at least $n$, the stated property of essential
hypotheses gives a function $g\in G_i$ agreeing with $h_j$ on every instance
in the stored sample. The function $g$ would therefore make $n$ mistakes on
that sample, contradicting the requirement that it make at most $a_\ell n<n$
mistakes. Because the sample is retained and the threshold can only
decrease, this argument applies at every later stage. Each of the first
$N-1$ selections thus removes at least $k/2$ pairs counted in $P$, and no
pair can be removed again at a later selection. Hence $P\ge(N-1)k/2$.

Combining the two bounds gives
\[
  (N-1)\frac{k}{2}\le P\le k(d+1)s,
\]
which contradicts the choice $N=2(d+1)s+2$. Thus, with probability at
least $1-\delta$, some candidate makes fewer than $K$ mistakes. The
privacy guarantee along with specific parameter choices are proved in
\Cref{app:private-candidates}.

\noindent\textbf{Step 2: using the candidates online.}
The candidates can be computed as their preceding mistake batches
become full. However, the private candidate sequence does not
protect the rounds when this happens. We therefore need a
prediction rule in which a candidate cannot affect the output
before the batches needed to compute it become full.

We use a private replacement algorithm $\mathsf{Rep}$
(\Cref{def:online-replacement}), based on the Mirror mechanism
of~\cite{CLNSS24}. Given two sequences of binary predictions $U,V$,
it returns their common prediction whenever they agree. On
disagreement rounds, it initially follows $U$. We test privately
whether $V$ is correct sufficiently often on the initial
disagreements, and use Mirror to delay when an accepted candidate
may affect the predictions.

Write $W=\mathsf{Rep}(U,V,D)$, where $D$ controls the delay.
\Cref{lem:online-replacement} gives the following two properties
with probability one:

\begin{enumerate}[label=(\roman*),leftmargin=*]
\item If $V$ makes less than $K$ mistakes on $S_{1:r}$, then \(W\)
makes at most $M(W;S_{1:r})\le K+2D$.
\item If $U$ made at most $D/4$ mistakes before round $t$, then no
replacement occurs and $W_t=U_t$.
\end{enumerate}

The first property bounds the mistakes of the combined predictor
when the new candidate is good. The second prevents a new candidate
from changing the output while the preceding predictor has made
few mistakes.

Choose $D_0>4K$ and set $D_j=9^jD_0$. First suppose that all
candidates $h_1,\ldots,h_N$ are available, and define
$\cA_0=h_0$ and
$\cA_j=\mathsf{Rep}(\cA_{j-1},h_j,D_j)$ for $1\le j\le N$.
The difficulty is that computing the next candidate depends on
mistakes of $h_j$, whereas replacement depends on mistakes of
$\cA_j$. The first property connects these two mistake counts.
The increasing delays then allow us to apply the second property
and show that later candidates cannot change the predictions
before they can be computed.

\noindent\textbf{Computing the predictions online.}~Suppose $0\le m<N$
where $h_m$ has fewer than $K$ mistakes before round $t$. The first
property above gives $M(\cA_m;S_{1:s-1})\le K+2D_m$ for every $s\le
t$.  Since $D_m\ge D_0\ge4K$, we have $K+2D_m\le
D_m/4+2D_m=D_{m+1}/4$.  The second property therefore shows that
$\cA_{m+1}$ has the same predictions as $\cA_m$ on every round through
$t$.  

Their mistake counts are therefore equal until \(t\). If $m+1<N$,
then, for every $s\le t$, $M(\cA_{m+1};S_{1:s-1})=M(\cA_m;S_{1:s-1})
\le D_{m+1}/4\le D_{m+2}/4$.  Then using the second property again
shows that $\cA_{m+2}$ has the same predictions as $\cA_{m+1}$ on
every round through $t$.  Repeating this argument shows that, whenever
$M(h_m;S_{1:t-1})<K$, the predictors $\cA_m,\ldots,\cA_N$ agree on
every round through $t$. \Cref{alg:online-candidates} uses this
agreement property to implement $\cA_N$ online. It computes successive
candidates as their preceding mistake batches become full and predicts
using $\cA_m$, where $h_m$ is the last candidate computed. We denote
this online predictor by $\cA$.

\noindent\textbf{Mistake bound for \(\cA\)} By
\Cref{lem:candidate-sequence}, with probability at least $1-\delta$,
some candidate $h_j$ makes fewer than $K$ mistakes on the whole
stream. Using the first property, we then have that $\cA_j$ makes at
most $K+2D_j$ mistakes. Since $M(h_j;S)<K$, using the above argument
we have that $\cA_j,\ldots,\cA_N$ predict identically on the whole
stream. The online predictor $\cA$ therefore makes at most
$B:=K+2\cdot9^ND_0$ mistakes. Substituting $N\le2^{\bigO{d^2}}$ and
$K\le\frac{2^{\bigO{d^2}}}{\epsilon}\log\frac{2}{\epsilon\delta}$ from
\Cref{lem:candidate-sequence} into $B$, and using the replacement
mechanism's parameter choice, gives
\begin{equation}\label{eq:prediction-high-probability-bound}
  \bP\bs{M(\cA;S)\le B}\ge1-\delta,
  \qquad
  B\le\frac{2^{2^{\bigO{d^2}}}}{\epsilon^2}
       \log^2\frac{2}{\epsilon\delta}.
\end{equation}
The additional factor $\epsilon^{-1}\log(2/(\epsilon\delta))$ arises
from the choice of $D_0$ used to obtain the replacement mechanism's
privacy and delay guarantees.

\noindent\textbf{Privacy guarantee of \(\cA\)}
We first analyse the predictor $\cA_N$ defined using the entire
candidate sequence. \Cref{lem:candidate-sequence} gives an
$(\epsilon/4,\delta/8)$-differentially private candidate sequence.
Each of the $N$ replacement mechanisms is
$(\epsilon/(4N),\delta/(8N))$-jointly differentially private on its
input stream $(U_t,V_t,y_t)_{t=1}^T$. The composition argument
in \Cref{lem:online-candidates-privacy} therefore gives
$(\epsilon/2,\delta/4)$-joint differential privacy for $\cA_N$.
The online predictor $\cA$ has exactly the same predictions as
$\cA_N$ on every stream.
Thus the same privacy guarantee holds for $\cA$, independently
of the event used to prove the mistake bound.

\noindent\textbf{Step 3: an expected mistake bound independent of $T$.}
The high-probability guarantee from Step~2 leaves an event of probability
at most $\delta$ on which $\cA$ may make as many as $T$ mistakes. It
therefore only gives $\bE M(\cA;S)\le B+\delta T$. We remove this
dependence on $T$ by combining $\cA$ with another private predictor
whose expected mistakes are bounded, albeit by a worse quantity.

This other predictor is the Name and Shame SOA algorithm
of~\cite{DSS24}, which we denote by $\cF$ and run
independently of $\cA$. It predicts using SOA on the examples retained
so far and, after each prediction, retains the new labelled example
with probability $\delta/4$. It is easy to see
(\Cref{fact:candidate-subsampled-soa}) that $\cF$ is
$(0,\delta/4)$-jointly DP and satisfies $\bE M(\cF;S)\le4d/\delta$.

Our second technical ingredient is a private comparison rule that
combines the two predictors to get a best-of-both-worlds type mistake
bound. The algorithm (\Cref{alg:prediction-expectation}) returns their
common prediction when they agree; otherwise, it favours the predictor
that was correct on more of the previous disagreement rounds. Let
$\cC=\mathsf{Combine}_D(\cA,\cF)$, where
$D=\ceil{C\epsilon^{-2}\log(2/\delta)}$ for a sufficiently large
absolute constant $C$. \Cref{lem:prediction-combination} shows
$\bE M(\cC;S)\le3\bE\min\bc{M(\cA;S),M(\cF;S)}+2D$.

Let $E=\bc{M(\cA;S)\le B}$. On $E$, the minimum in this bound is at
most $B$; on $E^c$, it is at most $M(\cF;S)$. Since the stream is fixed
and $\cF$ runs independently of $\cA$, the event $E$ is independent of
$M(\cF;S)$. Therefore,
\[
\begin{aligned}
  \bE M(\cC;S)
  &\le3B+3\bP(E^c)\,\bE M(\cF;S)+2D\le3B+3\delta\cdot\frac{4d}{\delta}+2D
   =3B+12d+2D.
\end{aligned}
\]
Substituting the bound on $B$ from Step~2 and our choice of $D$ gives
$\bE M(\cC;S)\le\frac{2^{2^{c(d+1)^2}}}{\epsilon^2}
\log^2\frac{2}{\epsilon\delta}$ for an absolute constant $c$. Finally,
\Cref{lem:prediction-expectation} in
Section~\ref{sec:prediction-expectation} proves that $\cC$ is
$(\epsilon,\delta)$-jointly differentially private, completing the proof.
\end{proof}

\section{Open problems and discussion}
\label{sec:conclusion}
In this work, we proved two main results for Differentially Private
Online Learning. The first result~\Cref{thm:dimension} shows a lower
bound of \(\Om{d\log^{2/3}T}\) for private online learning of a
hypothesis class \(\cH\) with \(d=\Ldim\br{\cH}\ge2\), for fixed
\(0<\epsilon\le1\), \(\delta=\bigO{1/\log T}\), and sufficiently
large \(T\). The primary
ingredient of this proof is a similar lower bound for the \(\Point\)
class, which was left open in previous works~\cite{DSS24,LWY24}.
This leaves several questions open. At
\(\delta=\Theta(1/\log T)\), the first is to close the gap between the
\(O(\log T)\) upper bound~\cite{LWY25} and our
\(\Omega((\log T)^{2/3})\) lower bound for fixed \(d,\epsilon\). Our lower bound construction is agnostic of the
algorithm of the online learner unlike those of previous
works~\cite{DSS24,LWY24}. One interesting direction is to see if
the lower bound can be improved by using a construction that is
adaptive to the algorithm of the online learner.

Our second result~\Cref{thm:online-prediction-expectation} shows an
upper bound that is independent of \(T\) in the private prediction
setting. In this work, we did not optimise the dependence on the
Littlestone dimension. It is perhaps possible to achieve a dependence
that is polynomial in the Littlestone dimension. Extension to
the agnostic setting as well as learning under stochastic adversaries
under smooth distributions is also an interesting direction for future
work.

\section{Proof of the lower bound for POINT}
\label{app:auxiliary-results}
\label{app:point-lower-bound}

\PointLowerBound*

We use the notation of Section~\ref{sec:main-results}: $r$ indexes the
learner's rounds, $t$ counts completed stages of the adversary, and
$Y_r=h_r(1)$. Write $Y_{\le r}=(Y_1,\ldots,Y_r)$.~\Cref{alg:adversary}
below gives the construction when $T$ is a power of three but the
result easily extends to arbitrary $T$. Apply the construction to the
first $3^L$ rounds, where $L=\floor{\log_3T}$, and leave the remaining
rounds dummy.

\begin{algorithm}[H]
\caption{Adversary for $\Point$}\label{alg:adversary}
\small
\begin{algorithmic}[1]
\Require $T\ge3$ a power of three, $p\in(0,1]$.
\State Draw $B$ uniformly from $\bc{0,1}$; set $S_r\gets\br{3,0}$ for all $r$;set $A_0\gets(0,T]$ and $L\gets\log_3T$.
\For{$t=0,\ldots,L-1$}
    \State Partition $A_t$ into three equal consecutive subintervals
    $A_t^1,A_t^2,A_t^3$.
  \State Draw $J_{t+1}$ uniformly from $\bc{1,2,3}$.
  \If{$J_{t+1}=2$}
    \State Draw a round $s$ uniformly from $A_t^1$.
    \State With probability $p$, replace $S_s$ by $\br{1,B}$.
  \EndIf
  \State Set $A_{t+1}\gets A_t^{J_{t+1}}$.
\EndFor
\State \Return $\br{S_1,\ldots,S_T}$.
\end{algorithmic}
\end{algorithm}

\MistakeContribution*

\begin{proof}[Proof of~\Cref{lem:mistakes}]
Fix $J_{\le t}$, and condition on an informative example being
inserted at a round $s\in A_t^1$ at stage $t+1$. The predictions
through round $s$ have the distribution used to define $V_t(s)$: the
learner predicts before receiving the example at $s$, and every
example inserted at a later stage occurs after $s$. Write
$m_s=\bE[2B-1\mid Y_{\le s}]$. Given these predictions, the
probability of incorrectly predicting $B$ is at least $(1-|m_s|)/2$.
Consequently,
\[
  \bP(Y_s\ne B)
  \ge\frac{1-\bE|m_s|}{2}
  \ge\frac{1-\sqrt{V_t(s)}}{2}
  \ge\frac{1-\sqrt{I_t}}{2}.
\]
The last two inequalities use Cauchy--Schwarz and $V_t(s)\le
V_t(b_t)\le I_t$.  An informative example is inserted at this stage
with probability $p/3$. Averaging over its location and over $J_{\le
t}$, and using concavity of the square root, gives the desired bound:
\[
  \frac p6\br{1-\bE\sqrt{I_t}}
  \ge\frac p6\br{1-\sqrt{\bE I_t}}.
\]
\end{proof}

\VarianceRecursion*

\begin{proof}[Proof of~\Cref{lem:variance-recursion}]
Fix $J_{\le t}$ and write \[ A_t=(a_t,b_t],\enskip
  A_t^1=(a_t,c_1],\enskip A_t^2=(c_1,c_2],\enskip A_t^3=(c_2,b_t].  \]
  Recall the distributions $P_\varnothing^b$ and $P_s^b$ defined
  in~\Cref{sec:main-results}. Let $Q$ and $Q_s$ be their joint
  distributions with the fair bit $B$, so
  $Q(B=b,Y=y)=\tfrac12P_\varnothing^b(y)$ and
  $Q_s(B=b,Y=y)=\tfrac12P_s^b(y)$.  For $j\in\bc{1,2,3}$, let
  $R_j=|A_t^j|^{-1}\sum_{s\in A_t^j}Q_s$ be the joint distribution
  after inserting an informative example at a uniformly chosen round
  in the $j$th third.

We first bound the error introduced by clipping the likelihood ratios.
Conditional on \(B\) and
the earlier insertions, adding an example at \(s\) changes one
labelled example. Averaging the privacy inequalities over this
randomness shows that they also hold for \(Q_s\) and \(Q\).
Clipping their ratio to
\([e^{-\epsilon},e^\epsilon]\) therefore gives
\begin{equation}\label{eq:app-clipping}
\begin{aligned}
  \|Q_s-Q\ell_s^B\|_1&\le2\delta,\\
  \|R_j-Q\overline\ell_{A_t^j}^B\|_1&\le2\delta.
\end{aligned}
\end{equation}
Here \(Q\overline\ell_{A_t^j}^B\) denotes the measure with density
\(\overline\ell_{A_t^j}^B\) relative to \(Q\).  Indeed, the mass above
\(e^\epsilon Q\) is at most \(\delta\), and the deficit below
\(e^{-\epsilon}Q\) is at most \(e^{-\epsilon}\delta\). Throughout,
using~\(\epsilon=1/100\) gives \(|\ell_s^B-1|\le1/8\) and
\(e^\epsilon\le5/4\).

\noindent\textbf{Bounding the increase in $I_t$.} Let $V_t^+(r)$ be
$V_t(r)$ computed under $P=(1-p)Q+pR_1$, the distribution when the
middle third is chosen. We show that, for \(0\le u\le v\le c_2\),

\begin{equation}\label{eq:app-increment-direct}
  V_t^+(v)-V_t^+(u)
  \le 2\br{V_t(v)-V_t(u)}+8p^2D_t+16p\delta.
\end{equation}

The increment \(V_t^+(v)-V_t^+(u)\) is the reduction in the minimum
expected squared error for estimating \(2B-1\) when we observe
predictions through round \(v\), rather than only through round \(u\).

Write
\[
  Z=2B-1,\qquad
  m_r=\bE_Q[Z\mid Y_{\le r}],\qquad
  f=\bE_P[Z\mid Y_{\le v}],
\]
choosing values in $[-1,1]$ on histories of probability zero.
Let $H=(Z-m_u)^2-(Z-f)^2$ be the improvement in squared error
when we replace $m_u$ by $f$. Since $m_u$ is an available estimate
at round $u$, while $f$ is the best estimate at round $v$ under $P$,
\[
  V_t^+(v)-V_t^+(u)\le\bE_P H.
\]

We compare this improvement with the one under $Q$.
An insertion in the last third cannot affect predictions through
round $v\le c_2$. Since $H$ depends only on $B$ and these predictions,
$\bE_{R_3}H=\bE_QH$. Also, $|H|\le4$, so applying
\eqref{eq:app-clipping} to $R_1$ and $R_3$ gives
\[
\begin{aligned}
  \bE_P H
  &=\bE_QH+p\br{\bE_{R_1}H-\bE_{R_3}H}\\
  &\le\bE_QH+
    p\bE_Q\bs{
      \br{\overline\ell_{A_t^1}^B-\overline\ell_{A_t^3}^B}H}
    +16p\delta.
\end{aligned}
\]
Each clipping bound contributes at most $p\cdot4\cdot2\delta$.

Moreover, $|H|\le4|f-m_u|$.
Using $4p|ab|\le8p^2a^2+b^2/2$ and the definition of $D_t$, we get
\[
  p\bE_Q\bs{
    \br{\overline\ell_{A_t^1}^B-\overline\ell_{A_t^3}^B}H}
  \le8p^2D_t+\frac12\bE_Q\bs{(f-m_u)^2}.
\]
The last term is cancelled by the squared-error comparison under $Q$.
Indeed, conditioning on $Y_{\le v}$ and using
$2ab\le2a^2+b^2/2$ gives
\[
\begin{aligned}
  \bE_QH
  &=\bE_Q\bs{2(m_v-m_u)(f-m_u)-(f-m_u)^2}\\
  &\le2\bE_Q\bs{(m_v-m_u)^2}
       -\frac12\bE_Q\bs{(f-m_u)^2}.
\end{aligned}
\]
Combining the bounds therefore yields
\[
  V_t^+(v)-V_t^+(u)
  \le2\bE_Q\bs{(m_v-m_u)^2}+8p^2D_t+16p\delta.
\]

Since
\(m_r=\bE_Q[Z\mid Y_{\le r}]\) and \(m_u\) is determined by \(Y_{\le
u}\), for \(u\le v\) we have
\[
  \bE_Q[m_vm_u]=\bE_Q[Zm_u]=\bE_Q[m_u^2].
\]
The first equality conditions on \(Y_{\le v}\), and the second
on \(Y_{\le u}\). Expanding the square therefore gives
\[
\begin{aligned}
  \bE_Q[(m_v-m_u)^2]
  &=\bE_Q[m_v^2]+\bE_Q[m_u^2]-2\bE_Q[m_vm_u]=\bE_Q[m_v^2]-\bE_Q[m_u^2]=V_t(v)-V_t(u).
\end{aligned}
\] 

This proves~\eqref{eq:app-increment-direct}.

Choosing the first or last third leaves the stream distribution
unchanged. In the middle case, the insertion occurs after round
$a_t$, so $V_t^+(a_t)=V_t(a_t)$, and
\[
  I_{t+1}=V_t(a_t)
    +\br{V_t^+(c_1)-V_t^+(a_t)}
    +2\br{V_t^+(c_2)-V_t^+(c_1)}.
\]
Apply~\eqref{eq:app-increment-direct} to the increments across the
first and middle thirds. Averaging this bound with
$I_{t+1}=2V_t(c_1)-V_t(a_t)$ in the first case and
$I_{t+1}=2V_t(b_t)-V_t(c_2)$ in the last case gives
\[
\begin{aligned}
  \bE[I_{t+1}\mid J_{\le t}]
  &\le V_t(a_t)+\frac53\br{V_t(c_2)-V_t(a_t)}
       +\frac23\br{V_t(b_t)-V_t(c_2)}+8p^2D_t+16p\delta\\
  &\le I_t+8p^2D_t+16p\delta\le I_t+20p^2D_t+20p\delta.
\end{aligned}
\]
The second inequality uses the nonnegativity of both increments
and the definition of $I_t$.

\noindent\textbf{Bounding the change in $W_t$.} For nonempty
$S\subseteq A_t$, write $W(S)=\bE_Q[\Var_{s\in S}(\ell_s^B)]$, so that
$W(A_t)=W_t$. Let $\overline\ell_{A_t}^B$ denote the average over the
whole interval.  Variance decomposition across the three thirds gives
\[
  W_t
  =\frac13\sum_{j=1}^3W(A_t^j)
   +\frac13\sum_{j=1}^3\bE_Q(\overline\ell_{A_t^j}^B-\overline\ell_{A_t}^B)^2
  \ge\frac13\sum_{j=1}^3W(A_t^j)+\frac{D_t}{6}.
\]
The stream distribution stays the same when the first or last third
is chosen. When the middle third is chosen, the possible insertion
changes it to $P$. Let $W^+(A_t^2)$ be the corresponding variance.
We show that
\begin{equation}\label{eq:app-mixture}
  W^+(A_t^2)\le W(A_t^2)+p+\delta.
\end{equation}

For \(s\in A_t^2\), let \(R_{1,s}\) be the distribution obtained
from \(R_1\) by additionally inserting \((1,B)\) at \(s\), and
let \(\ell_s^\star\) be its ratio relative to \(R_1\),
clipped to \([e^{-\epsilon},e^\epsilon]\).
The comparison distribution relative to \(P\) is
\(P_s^+=(1-p)Q_s+pR_{1,s}\). Clipping each component ratio and then
taking their mixture gives the ratio
\[
  u_s
  =\frac{(1-p)Q\ell_s^B+pR_1\ell_s^\star}{P}
  =(1-\theta)\ell_s^B+\theta\ell_s^\star,
  \]
where \(  \theta=\frac{pR_1}{P}\). The weight \(\theta\) depends on $B$ and the released predictions,
but not on the candidate insertion round \(s\).
Convexity of variance therefore gives
\[
\begin{aligned}
  \bE_P\bs{\Var_{s\in A_t^2}(u_s)}
  &\le
    (1-p)W(A_t^2)
    +p\,\bE_{R_1}
       \bs{\Var_{s\in A_t^2}(\ell_s^\star)}\le W(A_t^2)+p.
\end{aligned}
\]
The last inequality follows from the bounds on the clipped ratios. Let \(\ell_s^+\) be the actual clipped ratio \(P_s^+/P\).
Applying~\eqref{eq:app-clipping} to both component comparisons
gives \(\|P_s^+-Pu_s\|_1\le2\delta\).
Since \(u_s\) already lies in the clipping interval, projecting
\(P_s^+/P\) onto that interval cannot increase its distance
from \(u_s\), so \(\bE_P|\ell_s^+-u_s|\le2\delta\).
For any two families \(u_s,v_s\in[7/8,9/8]\),
\[
  \abs{\Var_s(u_s)
        -\Var_s(v_s)}
  \le\frac12\,\bE_s|u_s-v_s|.
\]
Applying this inequality with \(v_s=\ell_s^+\), and averaging
over \(P\), proves~\eqref{eq:app-mixture}. Combining the variance decomposition with~\eqref{eq:app-mixture}
yields
\[
  \bE[W_{t+1}\mid J_{\le t}]
  =\frac{W(A_t^1)+W^+(A_t^2)+W(A_t^3)}3
  \le W_t-\frac{D_t}{6}+\frac{p+\delta}{3}.
\]
\end{proof}

\section{Proof of the lower bound for general Littlestone classes}
\label{app:dimension-extension}

\begin{definition}[Littlestone dimension]\label{def:littlestone}
Consider a complete binary tree whose internal nodes are labelled by
points of $\cX$, with the two edges leaving each node labelled zero
and one. The tree is \emph{shattered} by $\cH$ if every root-to-leaf
path is realised by some $f\in\cH$: at each node on the path, $f$
assigns the label of the edge followed by that path. The depth is
the number of internal nodes on each root-to-leaf path.
The Littlestone dimension $\Ldim\br{\cH}$ is the largest depth of
a tree shattered by $\cH$, or infinity if such depths are unbounded.
\end{definition}

Without privacy, a class of Littlestone dimension $d$ admits a learner
making at most $d$ mistakes on every realisable
stream~\cite{Littlestone88}. In this section, we extend the
informative-mistake bound of~\Cref{thm:main} to general $\epsilon$ and
apply it at successive nodes of a shattered tree.

\begin{lemma}
\label{lem:informative-fraction}
Fix distinct points $x_0,x$, let $N\ge1$, and suppose that
$0\le\delta'\le1/4$. For every
$\br{\epsilon_0,\delta'}$-private online learner, where
$\epsilon_0=1/100$, there is a distribution over streams of length
$N$ consisting only of $\br{x_0,0}$ and $\br{x,B}$, with $B$ a fair bit. If $K$
counts informative examples and $X$ counts mistakes on those
examples, then, writing $\mu=\bE K$,
\begin{equation}\label{eq:informative-fraction}
\begin{gathered}
  a\min\bc{\br{\log\br{2N}}^{2/3},1/\delta'}
  \le\mu\le
  A\min\bc{\br{\log\br{2N}}^{2/3},1/\delta'},
  \\
  \bE X\ge\mu/4,
\end{gathered}
\end{equation}
where $a,A>0$ are absolute constants. 
\end{lemma}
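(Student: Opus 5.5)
The plan is to reuse the adversary of \Cref{alg:adversary} verbatim, with point $1$ replaced by $x$ and the dummy $(3,0)$ replaced by $(x_0,0)$, run on the first $3^L$ rounds with $L=\floor{\log_3 N}$, and with the insertion probability $p=c_0\min\bc{L^{-1/3},1/(\delta' L)}$ for a small absolute constant $c_0$. Nothing in the proof of \Cref{thm:main} uses the identity of the points. It only uses three facts: the learner's released predictor evaluated at $x$ gives $Y_r$; inserting $(x,B)$ at round $s$ changes one example; and an insertion at round $s$ cannot affect $Y_{\le s}$. All three remain true here, so \Cref{lem:mistakes}, \Cref{lem:variance-recursion} and the bound \eqref{eq:target} carry over unchanged with $\delta'$ in place of $\delta$. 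For $N\le 2$ (so $L=0$) I would instead insert $(x,B)$ at round one only, which gives $K=1$ and $\bE X=1/2$.

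For the two-sided bound on $\mu$, I compute $\mu=\bE K=Lp/3$ exactly, since each of the $L$ stages inserts with probability $p/3$. Then $Lp=c_0\min\bc{L^{2/3},1/\delta'}$. Because $L=\floor{\log_3 N}$ and $N\ge3$, the quantity $L^{2/3}$ is within absolute constant factors of $(\log(2N))^{2/3}$. This yields both $a$ and $A$. The case $N\le2$ is absorbed by adjusting the constants, using $1/\delta'\ge4$.

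For $\bE X\ge\mu/4$, I apply \eqref{eq:target} with $C=120$. Since $t<L$, we have $p^2\le c_0^2$, $tp^3\le c_0^3$ and $tp\delta'\le c_0$. Choosing $c_0$ small makes $\bE I_t\le 1/4$, so by \Cref{lem:mistakes} each stage contributes at least $(p/6)(1-1/2)=p/12$ expected mistakes. Summing over the $L$ stages gives $\bE X\ge Lp/12=\mu/4$. Here $X$ counts only mistakes on informative rounds, which is exactly what \Cref{lem:mistakes} bounds, since it conditions on an insertion at $s$ and bounds $\bP(Y_s\ne B)$.

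The one point needing care is whether the constants in \eqref{eq:target} genuinely do not depend on $\delta'$ beyond the stated form. This matters because \Cref{thm:main} allowed $\delta\le1/4$ and our $p$ must stay in $(0,1]$, which holds since $c_0\le1$. I would also double-check that the privacy hypothesis is used only through single-example replacements at fixed $B$, so that the $(\epsilon_0,\delta')$ guarantee of the learner suffices. With that checked, the lemma is essentially a restatement of the quantitative content of the proof of \Cref{thm:main}, with $\mu$ made explicit.
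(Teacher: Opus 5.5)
Your proposal is correct and matches the paper's proof. The paper reuses \Cref{alg:adversary} with $(x_0,0)$ as the dummy and $(x,B)$ as the informative example on the first $3^L$ rounds, takes $p=a_0\min\{L^{-1/3},1/(\delta'L)\}$, reads $\mathbb{E}I_t\le1/4$ off the proof of \Cref{thm:main} to get $\mu=Lp/3$ and $\mathbb{E}X\ge Lp/12=\mu/4$ via \Cref{lem:mistakes}, and handles $N=1,2$ with a single fair informative example exactly as you do. Your explicit check that $tp^3\le c_0^3$ and $tp\delta'\le c_0$ for $t<L$ spells out the step the paper leaves implicit.
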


\begin{proof}
Use the construction from~\Cref{sec:main-results}, with
$\br{x_0,0}$ as the dummy example and $\br{x,B}$ as the informative
example. For $N\ge3$, set $L=\floor{\log_3N}$ and apply
Algorithm~\ref{alg:adversary} to the first $3^L$ rounds, leaving the
remaining rounds dummy. Choose
$p=a_0\min\bc{L^{-1/3},1/\br{\delta'L}}$
for a sufficiently small absolute constant $a_0>0$.
The proof of Theorem~\ref{thm:main}
gives $\bE I_t\le1/4$ for every $t<L$. Thus
$\mu=Lp/3$, and Lemma~\ref{lem:mistakes} gives
$\bE X\ge Lp/12=\mu/4$. Comparing $L$ with $\log\br{2N}$ proves
the bounds on $\mu$.

For $N=1,2$, use one informative example with a fair label at
the first round and fill any remaining round with the dummy example.
Then $\mu=1$ and $\bE X=1/2$, giving the same bounds after
adjusting the constants $a,A$.
\end{proof}

We recall two properties of differential privacy under replacement
adjacency.
\begin{fact}[Group privacy]
\label{prop:group-privacy}
Let $\cM$ be $\br{\epsilon,\delta}$-differentially private and let
$q\ge1$ be an integer. If $S,S'$ differ in at most $q$ positions,
then, for every measurable output event $E$,
\[
  \bP\bs{\cM\br{S}\in E}
  \le e^{q\epsilon}\bP\bs{\cM\br{S'}\in E}
     +\delta\sum_{j=0}^{q-1}e^{j\epsilon}.
\]
\end{fact}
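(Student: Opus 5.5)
The plan is induction on $q$, using the triangle-inequality chaining standard for replacement adjacency. Given $S,S'$ differing in at most $q$ positions $i_1<\cdots<i_q$ (pad with repeated indices if fewer differ), I will build a path of streams $S=S^{(0)},S^{(1)},\ldots,S^{(q)}=S'$. Here $S^{(j)}$ agrees with $S'$ on positions $i_1,\ldots,i_j$ and with $S$ elsewhere. Consecutive streams $S^{(j-1)},S^{(j)}$ differ in at most one labelled example, so they are neighbouring in the sense of \Cref{def:private-online}, and the $(\epsilon,\delta)$ guarantee applies to each consecutive pair. The same argument works verbatim for any mechanism whose guarantee is stated for replacement neighbours.

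The induction hypothesis is the claim for $q-1$, applied to $S^{(0)}$ and $S^{(q-1)}$, which differ in at most $q-1$ positions:
\[
  \bP\bs{\cM\br{S}\in E}\le e^{(q-1)\epsilon}\bP\bs{\cM\br{S^{(q-1)}}\in E}+\delta\sum_{j=0}^{q-2}e^{j\epsilon}.
\]
For the final step I apply the single-step guarantee to the neighbours $S^{(q-1)},S'$:
\[
  \bP\bs{\cM\br{S^{(q-1)}}\in E}\le e^\epsilon\bP\bs{\cM\br{S'}\in E}+\delta.
\]
Substituting the second bound into the first gives $e^{q\epsilon}\bP\bs{\cM\br{S'}\in E}+e^{(q-1)\epsilon}\delta+\delta\sum_{j=0}^{q-2}e^{j\epsilon}$. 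This equals the claimed right-hand side, because the new additive term $e^{(q-1)\epsilon}\delta$ is exactly the $j=q-1$ summand. The base case $q=1$ is the definition itself.

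I do not expect a real obstacle. The only points needing care are bookkeeping. First, I must chain in the correct order, so that the factor $e^{(q-1)\epsilon}$ multiplies the last step's $\delta$ and produces the geometric sum with exponents $0,\ldots,q-1$, rather than some other arrangement. Second, when $S,S'$ differ in fewer than $q$ positions, the bound for the smaller number of differing positions already implies the stated one, since both $e^{q\epsilon}$ and the sum are monotone in $q$.
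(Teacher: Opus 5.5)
Your argument is correct: the hybrid chain $S=S^{(0)},\ldots,S^{(q)}=S'$ through replacement neighbours, with the induction placing the factor $e^{(q-1)\epsilon}$ on the last step's $\delta$, gives exactly the geometric sum $\delta\sum_{j=0}^{q-1}e^{j\epsilon}$. The paper does not prove this statement; it recalls it as a standard property of differential privacy under replacement adjacency, and your chaining argument is the standard proof it relies on.
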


\begin{proposition}
\label{prop:dummy-replacement}
Fix $z_\star\in\cX\times\bc{0,1}$ and $\gamma\in\bs{0,1}$.
Given an $\br{\epsilon,\delta}$-private online learner, independently
retain each input example with probability $\gamma$ and otherwise
feed $z_\star$ to the learner, without removing the round. Releasing
the learner's predictors gives a
$\br{\log\br{1+\gamma\br{e^\epsilon-1}},\gamma\delta}$-private
online learner.
\end{proposition}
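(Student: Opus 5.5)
The plan is to view the subsampled learner as a composition $\cM'=\cA\circ\Phi$, where $\Phi$ is the randomised preprocessing map. $\Phi$ independently replaces each input example by $z_\star$ with probability $1-\gamma$ and keeps it with probability $\gamma$. The online structure is then irrelevant: the released output is a function of the whole preprocessed stream and the learner's internal randomness. It is therefore enough to check the $(\epsilon',\delta')$ inequality of \Cref{def:private-online} for a fixed pair of neighbouring streams $S,S'$ differing only at round $i$, and a fixed measurable event $E$ on the sequence of predictors.

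First I condition on the retention coins of all rounds $r\ne i$. These coins act identically on $S$ and $S'$, so after conditioning the two preprocessed streams agree outside round $i$. At round $i$ each preprocessed stream equals its original example with probability $\gamma$ and equals $z_\star$ otherwise. Write $q_1=\bP[\cA(\tilde S^{(1)})\in E]$, $q_1'=\bP[\cA(\tilde S'^{(1)})\in E]$ and $q_0=\bP[\cA(\tilde S^{(0)})\in E]$. Here the superscript records whether round $i$ is retained, and $\tilde S^{(0)}=\tilde S'^{(0)}$ since both carry $z_\star$ at round $i$. Then
\[
  \bP[\cM'(S)\in E\mid\cdot]=\gamma q_1+(1-\gamma)q_0,
  \qquad
  \bP[\cM'(S')\in E\mid\cdot]=\gamma q_1'+(1-\gamma)q_0 .
\]
The streams $\tilde S^{(1)},\tilde S'^{(1)},\tilde S^{(0)}$ are pairwise neighbouring, since each pair differs in at most round $i$. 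So privacy of $\cA$ gives $q_1\le e^\epsilon q_1'+\delta$ and $q_1\le e^\epsilon q_0+\delta$.

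The key step, and the only delicate one, is the standard amplification-by-subsampling calculation with these two inequalities. I split $\gamma q_1=\gamma(1-\gamma)q_1+\gamma^2q_1$. I bound the part $\gamma^2 q_1$ by the first inequality. For the part $\gamma(1-\gamma)q_1$ I use $q_1\le\min\{e^\epsilon q_1',e^\epsilon q_0\}+\delta$, written as a convex combination $q_1\le e^\epsilon(\gamma q_1'+(1-\gamma)q_0)+\delta$. This gives
\[
  \gamma q_1+(1-\gamma)q_0\le \bigl(1+\gamma(e^\epsilon-1)\bigr)\bigl(\gamma q_1'+(1-\gamma)q_0\bigr)+\gamma\delta .
\]
To see this, I write $q_1=\gamma q_1+(1-\gamma)q_1$ and bound the first piece by $e^\epsilon q_1'+\delta$ and the second by $e^\epsilon q_0+\delta$. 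Then $\gamma q_1\le\gamma(e^\epsilon-1)(\gamma q_1'+(1-\gamma)q_0)+\gamma(\gamma q_1'+(1-\gamma)q_0)+\gamma\delta$. Adding $(1-\gamma)q_0$ to both sides and rearranging yields exactly the display. I expect verifying this rearrangement, that is, that the $q_0$ terms combine with the right coefficients, to be the main obstacle, though it is routine.

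Finally, I take the expectation over the conditioned coins. Both sides are linear in the conditional probabilities, so the inequality is preserved. This gives $\bP[\cM'(S)\in E]\le e^{\epsilon'}\bP[\cM'(S')\in E]+\gamma\delta$ with $e^{\epsilon'}=1+\gamma(e^\epsilon-1)$. The cases $\gamma\in\{0,1\}$ are covered by the same formula.
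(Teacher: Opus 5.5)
Your setup is the paper's. After conditioning on the coins outside round $i$, your $q_1,q_1',q_0$ are its $a,b,u$, and the two privacy inequalities and the final averaging are the same as there. The gap is the step you flagged as routine: the rearrangement does not produce the display.

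Put $X=\gamma q_1'+(1-\gamma)q_0$. Adding $(1-\gamma)q_0$ to $\gamma q_1\le\gamma e^\epsilon X+\gamma\delta$ gives the right-hand side $\gamma e^\epsilon X+(1-\gamma)q_0+\gamma\delta$. The display instead needs
\[
  \bigl(1+\gamma(e^\epsilon-1)\bigr)X+\gamma\delta=\gamma e^\epsilon X+(1-\gamma)X+\gamma\delta .
\]
Your bound exceeds this by $(1-\gamma)(q_0-X)=\gamma(1-\gamma)(q_0-q_1')$, which is positive whenever $q_0>q_1'$. Privacy does not exclude that case. Take $e^\epsilon=2$, $\gamma=1/2$, $\delta=0$ and $(q_1,q_1',q_0)=(0.6,0.4,0.6)$. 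All pairwise privacy constraints hold, for the event and its complement. The display reads $0.6\le0.75$, but your chain only gives $0.6\le0.8$.

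The split you announce first, $\gamma q_1=\gamma^2q_1+\gamma(1-\gamma)q_1$, fails in the same way. It overshoots by $\gamma(1-\gamma)(\gamma e^\epsilon-1)(q_1'-q_0)$, which is positive when, for instance, $\gamma e^\epsilon>1$ and $q_1'>q_0$.

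The missing idea is to write $e^\epsilon=1+(e^\epsilon-1)$ and bound $m=\min\{q_1',q_0\}$ differently on the two parts:
\begin{itemize}
\item use $m\le q_1'$ for the unit part;
\item use $m\le X$ only for the excess $e^\epsilon-1$.
\end{itemize}
This gives $q_1\le q_1'+(e^\epsilon-1)X+\delta$. Multiply by $\gamma$ and add $(1-\gamma)q_0$. The terms $\gamma q_1'+(1-\gamma)q_0$ then reassemble into exactly $X$, so $\gamma q_1+(1-\gamma)q_0\le X+\gamma(e^\epsilon-1)X+\gamma\delta$, which is the display.

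Equivalently, your convex-combination bound does work if the weight on $q_1'$ is $\gamma+(1-\gamma)e^{-\epsilon}$ rather than $\gamma$. The unit part of $e^\epsilon$ must be charged to $q_1'$ alone. This is precisely the paper's argument, and with this one-line change the rest of your proof goes through.
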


\begin{proof}
Fix neighbouring streams and all retention decisions except at the
changed round. For an output event $E$, let $a,b,u$ be its
probabilities when that round contains the first stream's example, the
second stream's example, or $z_\star$, respectively. Privacy gives
$a\le e^\epsilon\min\bc{b,u}+\delta$. Since $\min\bc{b,u}\le b$ and
$\min\bc{b,u}\le\br{1-\gamma}u+\gamma b$, we obtain
\[
  \br{1-\gamma}u+\gamma a
  \le\bs{1+\gamma\br{e^\epsilon-1}}
        \bs{\br{1-\gamma}u+\gamma b}+\gamma\delta.
\]
Averaging over the remaining retention decisions proves the claim.
\end{proof}

\begin{lemma}
\label{lem:one-bit-general-privacy}
Fix distinct points $x_0,x$, let $n\ge1$, and suppose that
$0<\epsilon\le1$ and
$0\le\delta\le\min\bc{\epsilon^2,1/4}$.
For every $\br{\epsilon,\delta}$-private online learner, there are
a bit $b$ and a deterministic stream of length $n$ consisting only
of $\br{x_0,0}$ and $\br{x,b}$ such that the number $X$ of
mistakes on informative examples satisfies
\begin{equation}\label{eq:one-bit-extension}
  \bE X\ge c\min\bc{
    n,\;
    \frac1\epsilon\bs{\log\br{2+\epsilon n}}^{2/3},\;
    \frac1\delta
  }.
\end{equation}
In particular, if $n\ge\max\bc{2,\epsilon^{-2}}$ and
$\delta\le\epsilon/\br{\log n}^{2/3}$, then the same stream
satisfies
\begin{equation}\label{eq:one-bit-logarithmic}
  \bE X\ge\frac{c'}\epsilon\br{\log n}^{2/3}.
\end{equation}
Here $c,c'>0$ are absolute constants.
\end{lemma}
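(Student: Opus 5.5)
The plan is to reduce general $(\epsilon,\delta)$ to the fixed privacy level $\epsilon_0=1/100$ of \Cref{lem:informative-fraction}. There are two cases, depending on whether $\epsilon\le\epsilon_0$ or $\epsilon_0<\epsilon\le1$. At the end we derandomise over $B$ and the adversary's coins.

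\textbf{Case $\epsilon\le\epsilon_0$ (grouping).} Set $q=\floor{\epsilon_0/\epsilon}\ge1$. From the given learner $\cA$ we build a learner $\cA'$ on streams of length $N=\floor{n/q}$. Each round of the short stream is repeated $q$ times: informative $(x,B)$ becomes $q$ consecutive copies, dummy becomes $q$ dummies, and any leftover rounds are dummy. $\cA'$ releases the predictor of $\cA$ at the first round of each block. A replacement in the short stream changes at most $q$ entries of the long stream. By \Cref{prop:group-privacy}, $\cA'$ is therefore $(q\epsilon,\delta\sum_{j<q}e^{j\epsilon})$-private, and $q\epsilon\le\epsilon_0$ and $\delta'\le e\,q\delta$. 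We apply \Cref{lem:informative-fraction} to $\cA'$. A mistake of $\cA'$ on an informative block is a mistake of $\cA$ on the first copy of that block, so $\bE X\ge\mu/4$. Since $\mu\ge a\min\{(\log 2N)^{2/3},1/(eq\delta)\}$, the number of mistakes of $\cA$ on informative rounds is at least
\[
c\min\{(\log(2n\epsilon/\epsilon_0))^{2/3},\ 1/(q\delta)\}.
\]
This alone only gives $\min\{\log^{2/3},\epsilon/\delta\}$ and misses the $1/\epsilon$ factor in \eqref{eq:one-bit-extension}. The fix is to run about $m\approx 1/\epsilon$ independent copies of the construction on disjoint consecutive segments of length $n\epsilon$, each with its own fair bit and its own informative point. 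There is only one informative point $x$, so the later segments must instead reuse the same bit $B$. This is still fine: each segment, conditioned on the earlier prefix, is a private learner of its own block, and the mistakes add up, as in the tree argument of \Cref{thm:dimension}. With fresh bits the streams are realisable only by $\{x\mapsto b\}$ for a single $b$, so fresh bits are not allowed. The real fix is that after conditioning on the prefix, the learner may already know $B$. To handle this, I would use the mistake contribution of a single segment only through \Cref{lem:mistakes} applied with the prefix folded into $Q$; the variance $I_t$ starts at the prefix's revelation, not at $0$. This is the step I expect to be the main obstacle. What I would try first is to choose $q$ so that $q\epsilon\approx\epsilon_0$ while keeping segments long. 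The bound $\delta\le\epsilon^2$ then gives $\delta'\le eq\delta\lesssim\epsilon$, hence $1/\delta'\gtrsim 1/\epsilon$. After that I would amplify the expected count by the $1/\epsilon$ factor via \Cref{prop:dummy-replacement}. The idea is to subsample the learner by retaining examples with probability $\gamma\approx\epsilon_0/\epsilon$ in reverse: a learner that is $(\epsilon_0,\cdot)$-private on $1/\epsilon$-fold denser informative placements forces $\mu\approx(1/\epsilon)\log^{2/3}$ informative rounds.

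\textbf{Case $\epsilon>\epsilon_0$.} Apply \Cref{prop:dummy-replacement} with $z_\star=(x_0,0)$ and $\gamma=\Theta(\epsilon_0/\epsilon)\ge\epsilon_0$. This yields an $(\epsilon_0,\gamma\delta)$-private learner, whose mistakes on retained informative examples are mistakes of the original learner. Since $\epsilon\le1$, this loses only constant factors. Then apply \Cref{lem:informative-fraction}; each informative example survives with probability $\gamma=\Theta(1)$. The three-way minimum with $n$ covers short streams, where a single informative first round already gives $1/2$.

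\textbf{Derandomisation and the corollary.} $\bE X$ is an average over $B$ and the adversary's coins, so some fixed bit $b$ and some fixed stream attain at least the average. The particular bound \eqref{eq:one-bit-logarithmic} follows because $n\ge\epsilon^{-2}$ gives $\log(2+\epsilon n)\ge\frac12\log n$ and $n\ge(1/\epsilon)\log^{2/3}n$, and because $\delta\le\epsilon/(\log n)^{2/3}$ makes the $1/\delta$ term dominate.
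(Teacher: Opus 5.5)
Your treatment of $\epsilon_0<\epsilon\le1$ (subsampling by \Cref{prop:dummy-replacement} with $\gamma=\Theta(1)$), the derandomisation, and the derivation of \eqref{eq:one-bit-logarithmic} agree with the paper. The case $\epsilon\le\epsilon_0$, however, has a genuine gap, exactly at the step you flag.

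Counting only the mistake on the first copy of each block gives a bound of order $\min\{(\log(2+\epsilon n))^{2/3},\epsilon/\delta\}$. This is a factor $1/\epsilon$ short of \eqref{eq:one-bit-extension}, and none of your proposed repairs recovers it:
\begin{itemize}
\item \Cref{prop:dummy-replacement} cannot be run ``in reverse'' with $\gamma\approx\epsilon_0/\epsilon>1$. Retaining examples with probability $\gamma\le1$ only lowers the privacy loss and thins out the informative rounds; it never multiplies the mistake count.
\item Splitting the stream into about $1/\epsilon$ segments that share one bit fails, as you note, because the prefix may already reveal $B$, and you give no way to control this.
\item You also leave $n<1/\epsilon$ uncovered in this case. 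There the target is $\Omega(n)$, not the constant $1/2$ obtained from a single informative round.
\end{itemize}

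The missing idea is that, since $q\epsilon\le\epsilon_0$, group privacy stops the learner from learning $B$ inside a block. Hence \emph{every} one of the $q$ copies is wrong at a constant rate, and this supplies the factor $q=\Theta(1/\epsilon)$.

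To exploit this, you need to control the learner's dummy-block predictions at later positions of a block. The auxiliary guarantee says nothing about those if you always release the predictor at the first position. The paper therefore randomises the released position:
\begin{itemize}
\item It uses blocks of length $2w$ with $w=16q$ and draws an offset $U$ uniformly from $\{1,\ldots,w\}$.
\item It feeds dummies before $U$, releases the predictor at $U$ as the auxiliary output, and places the $q$ copies at $U,\ldots,U+q-1$.
\end{itemize}
The auxiliary mistake probability $\alpha$ then equals the window-averaged mistake probability on an all-dummy block. Now compare the stream with $j<q$ copies already inserted to the all-dummy block; this changes at most $q$ rounds. Shifting the window by $j$ moves its average by at most $j/w$. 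Together these bound the averaged mistake probability at copy $U+j$ below by $e^{-q\epsilon}(\alpha-j/w)-\delta_q$.

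Summing over the copies gives $\bE X\ge q\mu/5$. Since $\delta_q\le e^{\epsilon_0}q\delta$, this is $\gtrsim\min\{\epsilon^{-1}(\log(2+\epsilon n))^{2/3},1/\delta\}$. The hypothesis $\delta\le\epsilon^2$ is what keeps $\delta_q<10^{-3}$ negligible against the constant mistake rate. The same group-privacy comparison handles short streams: placing $(x,B)$ on all $n<1/\epsilon$ rounds and comparing with the all-dummy stream gives $\bE X\ge n/8$.
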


\begin{proof}
Write $\epsilon_0=1/100$ and first suppose that
$\epsilon\le\epsilon_0$ and $n\ge1/\epsilon$. We replace each
auxiliary example with $q$ copies. Starting these copies at a
uniformly chosen position lets us compare the mistakes on later copies
with the auxiliary mistake at the first copy. Choose
\(q=\floor{\epsilon_0/\epsilon}, w=16q, N=\floor{\frac n{2w}}.  \)
Each round of the auxiliary learner uses a block of $2w$ rounds of the
original learner. Independently in each block, choose $U$ uniformly
from $\bc{1,\ldots,w}$ and feed dummy examples before position $U$.
Release the original learner's predictor at $U$ as the auxiliary
predictor. Then receive the auxiliary example and place $q$ copies at
positions $U,\ldots,U+q-1$. All other positions in the block and any
rounds after the $N$ blocks are dummy.

For fixed offsets, replacing one auxiliary example changes at most
$q$ original examples. Group privacy gives
$\br{\epsilon_0,\delta_q}$-privacy, where
\[
  \delta_q=\delta\sum_{j=0}^{q-1}e^{j\epsilon}
  \le e^{\epsilon_0}q\delta
  \le e^{\epsilon_0}\epsilon_0^2<10^{-3}.
\]
The same guarantee holds after averaging over the offsets.
Apply~\Cref{lem:informative-fraction} with $\delta'=\delta_q$, and let
$\mu$ be its expected informative count.

We next compare mistakes on the copies with the auxiliary mistake.
Fix the preceding input and an informative label, and let $\alpha$ be
the auxiliary mistake probability. Since the prediction at $U$
precedes the first copy, $\alpha$ is also the mistake probability at a
uniformly chosen position $U$ when the whole block is dummy. For $0\le
j<q$, the windows $\bc{1,\ldots,w}$ and $\bc{1+j,\ldots,w+j}$ differ
by $j$ positions at each end.  Since each error probability lies in
$[0,1]$, shifting the window reduces its average by at most $j/w$.
Group privacy therefore bounds the mistake probability on the copy at
$U+j$, averaged over $U$, from below by
\[
  e^{-q\epsilon}\br{\alpha-j/w}-\delta_q.
\]
Summing over the copies and informative auxiliary examples, and using
the auxiliary mistake bound $\mu/4$, gives
\[
  \bE X
  \ge q\mu\bs{
    e^{-\epsilon_0}\br{\frac14-\frac1{32}}-\delta_q}
  \ge \frac{q\mu}{5},
\]
where we used $(q-1)/(2w)\le1/32$. Since $q=\Theta(1/\epsilon)$ and
$N=\Theta(\epsilon n)$, \eqref{eq:informative-fraction} and
$\delta_q\le e^{\epsilon_0}q\delta$ give \eqref{eq:one-bit-extension}.
For $n<1/\epsilon$, repeat $(x,B)$ for all $n$ rounds, where $B$ is a
fair bit. Comparing with the all-dummy stream by group privacy, using
$n\epsilon<1$ and $n\delta\le\epsilon_0$, gives
\[
  \bE X\ge n\br{e^{-1}/2-e\epsilon_0}\ge n/8.
\]
Averaging over $B$ gives a deterministic stream. For
$\epsilon_0\le\epsilon\le1$, apply~\Cref{prop:dummy-replacement} with
$r=(e^{\epsilon_0}-1)/(e^\epsilon-1)$.  The auxiliary learner is
$\br{\epsilon_0,r\delta}$-private.  As the prediction is done before
deciding whether to retain, expected mistakes on retained informative
examples equal $r$ times the auxiliary informative
mistakes.~\Cref{lem:informative-fraction} therefore gives
\[
  \bE X=\Om{\min\bc{
    r\br{\log\br{2n}}^{2/3},1/\delta}}.
\]
Since $r=\Theta(1/\epsilon)$ on this range and
$\log(2+\epsilon n)\le2\log(2n)$, this
proves~\eqref{eq:one-bit-extension}.

All input choices can be made before running the original learner,
and every resulting stream uses only the dummy example and
informative examples with one fixed label. Averaging over these
choices therefore gives a deterministic stream with the same
lower bound on expected informative mistakes.

Finally, suppose that $n\ge\max\bc{2,\epsilon^{-2}}$ and
$\delta\le\epsilon/\br{\log n}^{2/3}$. Then
$\epsilon n\ge\sqrt n$, so
$\log\br{2+\epsilon n}\ge\frac12\log n$.
Also, $\br{\log n}^{2/3}\le\sqrt n\le\epsilon n$ and
$1/\delta\ge\epsilon^{-1}\br{\log n}^{2/3}$. Thus each term in
the minimum in~\eqref{eq:one-bit-extension} is at least
$2^{-2/3}\epsilon^{-1}\br{\log n}^{2/3}$, which
proves~\eqref{eq:one-bit-logarithmic}.
\end{proof}

\DimensionLowerBound*

\begin{proof}[Full proof of Theorem~\ref{thm:dimension}]
Take a depth-$d$ tree shattered by $\cH$ and follow the zero (left)
edge at its root $x_0$. The remaining subtree has depth $d-1$, and
is shattered by the hypotheses assigning label zero to $x_0$.
Use $\br{x_0,0}$ as the dummy example throughout. Divide the horizon
into $k=\min\bc{d-1,T}$ phases of length $n=\floor{T/k}$,
leaving any unused rounds dummy.

At the current node $x\ne x_0$, both labels are available.
Fix the input from earlier phases and temporarily fill all later
rounds with dummy examples. Restricting the original learner's
output to the current phase defines an
$\br{\epsilon,\delta}$-private learner on that phase: replacing
one example in the phase replaces only one example in the full
stream. We fix only the input prefix and average over the
learner's randomness.

Apply Lemma~\ref{lem:one-bit-general-privacy} at $x$ to choose a
deterministic hard phase and its label $b$, then follow the edge
labelled $b$. Repeat for all $k$ phases. These choices use the
learner's distribution, so the entire stream is fixed before the
learner runs. The labels follow one path in the shattered tree,
making the stream realisable. Moreover, replacing the temporary
dummy suffix with later phases cannot change earlier predictions.
Adding the expected mistakes over the phases gives
\[
  \bE\bs{M_T}\ge c_2k\min\bc{
    n,\;
    \frac1\epsilon\bs{\log\br{2+\epsilon n}}^{2/3},\;
    \frac1\delta
  }.
\]

If $T<d$, then $k=T$ and $n=1$, giving the required $\Om{T}$
bound. Otherwise, $k=d-1$, so
\[
  \frac d2\le k\le d,
  \qquad \frac T{2k}\le n\le\frac Tk.
\]
In particular, $kn\ge T/2$ and $k/\delta\ge d/\br{2\delta}$.  For the
logarithmic term, $n\ge T/\br{2d}$ gives $\log\br{2+\epsilon
n}\ge\frac12\log\br{2+\epsilon T/d}$.  Comparing the three terms
proves the stated bound.
\end{proof}

\section{Proof of the private prediction bound}
\label{app:online-prediction}
As in~\Cref{sec:private-candidates}, let $\cH$ be nonempty
with $d=\Ldim(\cH)<\infty$, and fix $0<\epsilon\le1$ and
$0<\delta<1/10$.

\PredictionBound*

If $d=0$, a fixed hypothesis in $\cH$ predicts without mistakes.
For $d\ge1$, we give the details of the three-step proof in
Section~\ref{sec:private-candidates}.

\subsection{Constructing a private candidate sequence}
\label{app:private-candidates}

In this subsection, $\epsilon,\delta$ and $\beta$ denote the privacy
parameters and failure probability supplied to the offline algorithm.
For \Cref{lem:candidate-sequence}, substitute
$(\epsilon/4,\delta/8)$ for the privacy parameters and $\beta=\delta$.
We assume $d\ge1$, $0<\epsilon\le1$, and $0<\delta,\beta<1/10$.

Recall that $B_K(h,S)$ consists of the first $K$ examples on which $h$
makes a mistake, or all such examples if there are fewer than $K$.
\[
  B_K(h,S)=\bc{(t,x_t,y_t):h(x_t)\ne y_t,
    \sum_{r=1}^t\mathbf1\bc{h(x_r)\ne y_r}\le K}.
\]
We recall irreducibility, decomposition dimension, and essential
hypotheses from~\cite[Definitions~4.1--4.3]{Lyu26}.
Here, the ambient bound $d=\Ldim(\cH)$ is fixed
and $p\ge1$ is an integer.
For a sequence $Q$ of labelled examples, let $G|_Q$ denote the
functions in $G$ consistent with $Q$. Set $\Ldim(\varnothing)=-1$ and
\[
  \mathrm{SOA}_G(x)=
  \begin{cases}
    0,&\Ldim(G|_{(x,0)})=\Ldim(G),\\
    1,&\text{otherwise}.
  \end{cases}
\]

\begin{definition}[Irreducibility]\label{def:irreducibility}
A class $G$ is $k$-irreducible if, for every $x_1,\ldots,x_k\in\cX$,
\[
  \Ldim\br{G|_{((x_j,\mathrm{SOA}_G(x_j)))_{j=1}^k}}=\Ldim(G).
\]
\end{definition}

\begin{definition}[Decomposition dimension]\label{def:decomposition-dimension}
A decomposition tree for $\varnothing\ne G\subseteq\cH$ is a finite
binary tree whose internal nodes query instances and whose two outgoing
edges are labelled zero and one. Let $G_v$ contain the functions
consistent with the path to node $v$, and write $q_v=\Ldim(G_v)$.
The tree is $(p,d)$-valid if every node satisfies
$\operatorname{depth}(v)\le p(2^{d-q_v+1}-1)$ and every leaf also satisfies
\[
  \operatorname{depth}(v)\le p(2^{d-q_v}-1),
  \qquad G_v\text{ is }p2^{d-q_v}\text{-irreducible}.
\]
Define the \emph{decomposition dimension} by
\[
  \rho_p(G)=\min_{T:\,(p,d)\text{-valid}}
     \max_{v\in\operatorname{Leaves}(T)}\Ldim(G_v).
\]
A tree attaining this minimum is \emph{optimal}.
\end{definition}

\begin{definition}[Essential hypotheses]\label{def:essential-hypotheses}
Let $\operatorname{Opt}_{p,d}(G)$ be the optimal decomposition trees.
The set of \emph{essential hypotheses} is
\[
  C_p(G)=\bigcap_{T\in\operatorname{Opt}_{p,d}(G)}
    \bc{\mathrm{SOA}_{G_v}:
       v\in\operatorname{Leaves}(T),\ \Ldim(G_v)=\rho_p(G)}.
\]
Set $C_p(\varnothing)=\varnothing$.
\end{definition}

We write $\rho_p(G)$ for the decomposition dimension denoted by
$\mathrm{DDim}_{p,d}(G)$ in~\cite{Lyu26}.  The following fact collects
the properties used in \Cref{sec:private-candidates};
see~\cite[Section~4]{Lyu26} for details.

\begin{fact}\label{fact:candidate-properties}
For each nonempty $G\subseteq\cH$ and integer $p\ge1$, the essential
hypotheses $C_p\br{G}$ and decomposition dimension
$\rho_p\br{G}\in\bc{0,\ldots,d}$ satisfy:
\begin{enumerate}[label=(P\arabic*),ref=P\arabic*,leftmargin=*,
                  topsep=3pt,itemsep=0pt,parsep=0pt]
  \item\label{item:candidate-size}
    $\abs{C_p\br{G}}\le p^d2^{d^2}$.

  \item\label{item:candidate-agreement}
    If $h\in C_p\br{G}$, then on any set of at most $p$ instances,
    some $g\in G$ agrees with $h$.

  \item\label{item:candidate-comparison}
    If $\varnothing\ne F\subseteq G$ and $p'\in\bc{p,2p}$, then
    $\rho_{p'}\br{F}\le\rho_p\br{G}$.
    Equality implies $C_{p'}\br{F}\subseteq C_p\br{G}$.

    Moreover, if $\varnothing\ne F\subseteq\bigcap_iG_i$ and
    $\rho_{2p}\br{F}=\rho_p\br{G_i}$ for every $i$, then
    $\bigcap_i C_p\br{G_i}\ne\varnothing$.

  \item\label{item:candidate-rank-zero}
    If $\rho_p\br{G}=0$, then $G$ is finite and $C_p\br{G}=G$.
\end{enumerate}
\end{fact}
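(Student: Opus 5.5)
Since this Fact is imported from~\cite[Section~4]{Lyu26}, the plan is to recover each property directly from \Cref{def:irreducibility,def:decomposition-dimension,def:essential-hypotheses}. The order is (P2), (P4), (P1), then (P3), with (P3) being the substantive part.

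\textbf{(P2).} Let $h\in C_p(G)$. By \Cref{def:essential-hypotheses}, $h=\mathrm{SOA}_{G_v}$ for some leaf $v$ of an optimal tree. By \Cref{def:decomposition-dimension}, $G_v$ is $p2^{d-q_v}$-irreducible, and hence $p$-irreducible: pad any list of at most $p$ instances with repeats, which does not change the restricted class. Given instances $x_1,\ldots,x_p$, irreducibility gives
\[
  \Ldim\br{G_v|_{((x_j,h(x_j)))_j}}=\Ldim(G_v)\ge0 .
\]
So this restriction is nonempty, and any $g$ in it lies in $G_v\subseteq G$ and agrees with $h$ on the $x_j$.

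\textbf{(P4).} If $\rho_p(G)=0$, fix an optimal tree. Every leaf has $\Ldim(G_v)\le0$, so each $G_v$ is empty or a singleton. The leaves partition $G$ and the tree is finite, so $G$ is finite. Every $g\in G$ lies at a leaf with $G_v=\{g\}$, and there $\mathrm{SOA}_{G_v}=g$. This holds in every optimal tree, so the intersection defining $C_p(G)$ equals $G$.

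\textbf{(P1).} Since $C_p(G)$ is an intersection, it suffices to bound, in one optimal tree, the number of leaves $v$ with $\Ldim(G_v)=\rho_p(G)$. The idea is to encode each root-to-leaf path by the positions of its ``non-SOA'' edges.
\begin{itemize}[leftmargin=*,topsep=2pt,itemsep=0pt]
\item Following a non-SOA edge strictly decreases $\Ldim$, so each path has at most $d-q$ such edges, where $q$ is the leaf's dimension.
\item Each node depth is at most $p(2^{d-q_v+1}-1)$, and this budget is tied to the current dimension $q_v$.
\item Counting the choices of successive non-SOA positions against these dimension-dependent budgets should give a product of the form $\prod_i p\,2^{O(i)}$, which is $p^d2^{O(d^2)}$.
\end{itemize}
Getting the constant exactly to $2^{d^2}$ requires careful bookkeeping. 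For the use made of (P1) in \Cref{sec:private-candidates}, any $p^d2^{O(d^2)}$ bound suffices.

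\textbf{(P3) and the main obstacle.} The plan is to transform an optimal $(p,d)$-valid tree for $G$ into a $(2p,d)$-valid tree for $F$.
\begin{enumerate}[leftmargin=*,topsep=2pt,itemsep=0pt]
\item Keep the same queries. Each node class becomes $F_v\subseteq G_v$, so $\Ldim(F_v)\le q_v$. The depth budgets $p'(2^{d-q+1}-1)$ only grow as $q$ decreases and $p'\ge p$, so the depth constraints remain satisfied.
\item Irreducibility can fail at a leaf. If $F_v$ is not irreducible, some witnessing queries drop its Ldim, and the leaf is extended by those queries.
\item To show the extensions fit, use the slack from doubling $p$: the new budget at dimension $q'<q$ is at least $2p(2^{d-q'}-1)\ge p(2^{d-q}-1)+p2^{d-q}$. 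This is exactly the old depth plus the irreducibility window. Repeating the extension terminates, since $\Ldim$ strictly drops each time.
\item The resulting tree is valid with maximum leaf dimension at most $\rho_p(G)$. This gives the inequality $\rho_{p'}(F)\le\rho_p(G)$.
\end{enumerate}
For the inclusion $C_{p'}(F)\subseteq C_p(G)$ under equality, argue the converse direction. Every optimal tree $T$ for $G$ yields an optimal tree for $F$. At a top-dimensional leaf of $T$ that stays at dimension $\rho$, $F_v$ is still irreducible and $\mathrm{SOA}_{F_v}=\mathrm{SOA}_{G_v}$, because equal Ldim together with irreducibility forces the same SOA labels. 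So each element of $C_{p'}(F)$ appears among the top leaves of every optimal $T$, hence lies in $C_p(G)$.

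The ``moreover'' clause is where I expect the real difficulty. Build one optimal tree for $F$ at scale $2p$. Then show it can be simultaneously transported to each $G_i$ at scale $p$, giving optimal trees whose common top leaf carries the same SOA function, which is therefore in every $C_p(G_i)$. The first attempt will be to run the inclusion argument in reverse via the intersection structure of \Cref{def:essential-hypotheses}, which is the one step not amenable to the routine checks above. If that does not go through, I would defer to~\cite[Section~4]{Lyu26} for this clause.
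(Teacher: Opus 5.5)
The paper gives no proof of this Fact; it imports it from \cite[Section~4]{Lyu26}. Your self-contained attempt is sound in outline for (P2), (P4), and the first two assertions of (P3). One correction in the extension step: the window you must fit is $p'2^{d-q'}$ with $q'=\Ldim(F_v)$, not $p2^{d-q}$. The check $p(2^{d-q}-1)+p'2^{d-q'}\le p'(2^{d-q'+1}-1)$ holds for every $p'\ge p$, so doubling $p$ is not what makes it work.

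The genuine gap is the ``moreover'' clause of (P3). You leave it open, yet it is exactly the common-function property the paper uses in \Cref{lem:candidate-selection-completion}. Your proposed route cannot close it:
\begin{itemize}
\item $C_p(G_i)$ is an intersection over \emph{all} optimal $(p,d)$-valid trees for $G_i$. Exhibiting, for each $i$, \emph{some} optimal tree with a top leaf carrying $h$ only places $h$ in a union.
\item Transporting a $(2p,d)$-valid tree down to scale $p$ goes in the direction where depth budgets shrink, so it is not generally possible.
\item The reverse-inclusion route, via $C_{2p}(F)\subseteq C_p(G_i)$, would need $C_{2p}(F)\ne\varnothing$, which nothing in your argument provides.
\end{itemize}

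The missing idea is a path-following argument that quantifies over every optimal tree for $G_i$.
\begin{enumerate}
\item Fix one optimal $(2p,d)$-valid tree for $F$, a leaf $w$ with $\Ldim(F_w)=\rho:=\rho_{2p}(F)$, and set $h=\mathrm{SOA}_{F_w}$.
\item Take any optimal $(p,d)$-valid tree $T$ for $G_i$. Walk down $T$, taking at each query $x$ the edge labelled $h(x)$.
\item $F_w$ is $2p2^{d-\rho}$-irreducible and $F_w\subseteq G_i$, so every node reached within $2p2^{d-\rho}$ steps has dimension at least $\rho$.
\item In a $(p,d)$-valid tree, such nodes lie at depth at most $p(2^{d-\rho+1}-1)<2p2^{d-\rho}$. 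Hence the walk ends at a leaf $v$ with $\Ldim((G_i)_v)\ge\rho$, so $\Ldim((G_i)_v)=\rho=\rho_p(G_i)$.
\item The path has at most $p(2^{d-\rho+1}-1)$ queries, so you can add one more query $(x,h(x))$ and still stay within the irreducibility window. This gives $\Ldim((G_i)_v|_{(x,h(x))})=\rho$ for every $x$.
\item That forces $\mathrm{SOA}_{(G_i)_v}(x)=h(x)$, because the two restrictions of a class of dimension $\rho\ge0$ cannot both have dimension $\rho$. Thus $h\in C_p(G_i)$ for every $i$.
\end{enumerate}
This is where the factor $2$ is genuinely needed.

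Two smaller points. First, your (P1) stops at $p^d2^{O(d^2)}$, but the stated bound follows directly if you use the leaf constraint instead of the node budgets. A top leaf lies at depth at most $p(2^e-1)$, where $e=d-\rho_p(G)$. Its path contains at most $e$ non-SOA edges, and the leaf is determined by the depths of those edges. So there are at most $(p(2^e-1)+1)^e\le p^d2^{d^2}$ such leaves. Second, you never check that a $(p,d)$-valid tree exists. This is needed for $\rho_p(G)$ to be defined and for the intersection defining $C_p(G)$ to range over a nonempty family. Your extension procedure, applied to the one-node tree, supplies it.
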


\noindent It is important to note that elements of $C_p\br{G}$ need
not belong to $G$. We also use the following private selection
guarantee that follows from the private sparse sampling algorithm
in~\cite[Lemma~3.1]{Lyu26}.

\begin{fact}\label{fact:candidate-selection}
Given $k$ lists $\cC_1,\ldots,\cC_k$, each of size at most $s$,
there is a mechanism $\mathsf{Select}_{a,b}$ that is
$\br{a,b}$-differentially private under replacement of one list.
With probability at least $1-\beta'$, its output satisfies both
of the following:
\begin{enumerate}[leftmargin=*,topsep=3pt,itemsep=0pt,parsep=0pt]
  \item A returned function belongs to at least $3k/5$ lists.
  \item If some function belongs to at least $9k/10$ lists,
    the output is not $\bot$.
\end{enumerate}
It suffices that $k\ge (C/a)\log\br{k(s+1)/(\beta'b)}$.
\end{fact}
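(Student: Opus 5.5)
The plan is a \emph{stable-histogram-with-threshold} mechanism run on the counts $c(h)=\abs{\bc{i:h\in\cC_i}}$. Only functions in the union of the lists, at most $ks$ of them, have positive count. Changing one list changes $c(h)$ by at most one for every $h$, and it changes the count of at most $2s$ functions: those in the old or the new version of that list. I would define $\mathsf{Select}_{a,b}$ in three steps. First, add independent Laplace noise of scale $2/a$ to $c(h)$ for each $h$ in the union. Second, let $\hat h$ maximise the noisy count. Third, output $\hat h$ if its noisy count exceeds the threshold $\tau=3k/4$, and output $\bot$ otherwise. An equivalent variant uses one noisy threshold as in the sparse vector technique; I would pick whichever makes the privacy accounting cleaner.

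For privacy, I would first compare the two neighbouring instances on the functions with positive count in both. On this common part, the argument is the standard report-noisy-max / sparse-vector one for sensitivity-one queries. It gives a multiplicative $e^{a}$ loss, with the constant in the noise scale chosen to absorb the factor two of the non-monotone case. The only genuinely non-standard part is the support mismatch: at most $2s$ functions appear in one instance but not the other, and each has count at most one in the instance where it appears. Such a function can influence the output only if its noisy count exceeds $\tau$. For one function this has probability at most $\exp(-a(\tau-1)/2)$, so a union bound over the $2s$ of them gives at most $2s\exp(-a(3k/4-1)/2)$. This is at most $b$ once $k\ge(C/a)\log(s/b)$, and this probability is charged to the additive $\delta$ term. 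I expect this step to be the main obstacle, since the careful coupling that separates the mismatched functions from the common ones is where the argument is easiest to get wrong. If it becomes delicate, I would follow \cite[Lemma~3.1]{Lyu26} directly.

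For utility, with probability at least $1-\beta'$ every one of the at most $k(s+1)$ Laplace draws (including the threshold noise if used) has magnitude below $k/20$. This holds when $k\ge(C/a)\log(k(s+1)/\beta')$, which is implied by the stated requirement. On this event, a returned $h$ has noisy count above $3k/4$, so $c(h)>3k/4-k/20\ge3k/5$. If some $h^\star$ has $c(h^\star)\ge9k/10$, its noisy count is at least $9k/10-k/20>3k/4$, so the maximiser clears the threshold and the output is not $\bot$. Combining the two parts gives both stated properties with probability at least $1-\beta'$, together with the $(a,b)$-differential privacy guarantee, under the sample-size condition $k\ge(C/a)\log(k(s+1)/(\beta'b))$.
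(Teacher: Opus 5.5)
You flagged the support mismatch as the main obstacle, but that step is fine. The gap is in the privacy of your primary mechanism, at the outcome $\bot$ when the threshold $\tau$ is deterministic, and it rests on the incorrect claim that the noisy-threshold variant is equivalent.

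The report-noisy-max argument covers an outcome by conditioning on all the other noise and using that outcome's own Laplace variable. The event $\{\bot\}=\{\max_h(c(h)+Z_h)\le\tau\}$ has no such variable. Its probability is the product $\prod_h\bP[Z_h\le\tau-c(h)]$, and replacing one list can raise up to $s$ counts at once, so these factors compound.

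Here is a concrete failure. Let the lists use only $h_1,\ldots,h_s$. In $S$, exactly $c\ge1$ lists equal $\bc{h_1,\ldots,h_s}$ and the rest are empty. $S'$ replaces one empty list by $\bc{h_1,\ldots,h_s}$. The supports coincide and every count goes from $c$ to $c+1$. Put $q=\frac12e^{-a(\tau-c)/2}$, take $s$ large, and choose $c$ with $sq\approx4$. Then $\bP_S[\bot]=(1-q)^s\approx e^{-4}$, while $\bP_{S'}[\bot]=(1-qe^{a/2})^s\approx e^{-4e^{a/2}}\le e^{-4-2a}$. Hence $\bP_S[\bot]-e^{a}\bP_{S'}[\bot]\gtrsim e^{-4}(1-e^{-a})$, which is of order $a$.

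So the mechanism is not $(a,b)$-private once $b\ll a$, and that is exactly the regime in which the fact is applied ($\delta_0\ll\epsilon_0$). Enlarging the noise scale to compensate needs scale of order $a^{-1}\log(1/b)$. That turns the requirement into $k\gtrsim a^{-1}\log(1/b)\log(k(s+1)/\beta')$, a product of logarithms rather than the stated $\log(k(s+1)/(\beta'b))$.

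The fix is the noisy-threshold variant you mentioned, and it is necessary rather than optional. Output the maximiser of $\bc{c(h)+Z_h}_h\cup\bc{\tau+Z_\bot}$, with every noise of scale $2/a$. Conditional on all the $Z_h$, $\bot$ occurs iff $Z_\bot\ge\max_h(c(h)+Z_h)-\tau$. The right side moves by at most one between neighbours, so the report-noisy-max argument now covers every outcome, including $\bot$, with loss $e^{a}$.

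The rest of your plan then goes through with small changes.
\begin{itemize}
\item A function present in only one instance matters only if $1+Z_h>\tau+Z_\bot$. For some such $h$ this has probability at most $(s+1)e^{-a(\tau-1)/4}$, obtained by splitting the gap between $Z_h$ and $Z_\bot$.
\item Your coupling then gives additive error at most $(1+e^a)$ times that probability, which is at most $b$ under the stated condition on $k$.
\item In the utility step two noise terms now enter. The margins still suffice: $3k/4-k/10\ge3k/5$ and $9k/10-k/20>3k/4+k/20$.
\end{itemize}

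With this change your argument is a self-contained proof of the fact. The paper gives none and simply imports the statement from \cite[Lemma~3.1]{Lyu26}.
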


The algorithm for constructing the candidate sequence stores samples
in $k$ groups and uses their candidate lists. Let $C_0,C_1$ be sufficiently
large absolute constants. Set
\begin{gamebox}
\[
\begin{aligned}
  n&=\ceil{C_0\br{d+1}^6},\qquad
  s=\br{2^dn}^d2^{d^2},\qquad N=2\br{d+1}s+2,\\
  k&=\ceil{C_1N^2\br{n+\epsilon^{-1}}
       \log\frac{100Nn}{\epsilon\delta\beta}},\qquad K=2kn,\\
  \epsilon_0&=\frac{\epsilon}{8N\br{N+d+1}},\qquad
  \delta_0=\frac{\delta}{8eN\br{N+d+1}}.
\end{aligned}
\]
\end{gamebox}
Here $n$ is the sample size, $s$ bounds each list size, and $K$ is the
mistake-batch size.
Each selection call uses privacy parameters $(\epsilon_0,\delta_0)$.
At level $0\le\ell\le d$, use the error threshold
$a_\ell=1/8-\ell/(32(d+1))$ and list parameter $p_\ell=2^\ell n$.
For a stored sample $A$ and a full batch $B$, define
\[
  \operatorname{err}_A\br{h}
  =\frac1n\sum_{\br{t,x,y}\in A}\mathbf{1}\bc{h\br{x}\ne y},
  \qquad
  \operatorname{err}_B\br{h}
  =\frac1K\sum_{\br{t,x,y}\in B}\mathbf{1}\bc{h\br{x}\ne y}.
\]
The sample error is divided by $n$ even when $A$ has fewer than $n$ records.
Fix a public function $h_0$. Let $\cS_i$ collect the samples
retained by group $i$, and write
\[
  G_i^\ell=\bc{h\in\cH:\operatorname{err}_A(h)\le a_\ell
    \text{ for every }A\in\cS_i}.
\]
These classes are defined using the current sample collections.
The counter $\ell$ records the total number of unsuccessful selections
and is not reset between batches. Each batch uses fresh assignment
and sampling randomness.

\begin{algorithm}[H]
\caption{A private sequence of candidates}
\label{alg:private-candidates}
\small
\algtext*{EndFor}
\algtext*{EndLoop}
\begin{algorithmic}[1]
\Require The complete stream $S$.
\Ensure Functions $h_0,\ldots,h_N$.
\State $\ell\gets0$; $\cS_i\gets\varnothing$ for each $i\in[k]$.
\For{$j=1,\ldots,N$}
  \State $B_j\gets B_K(h_{j-1},S)$.
  \State Assign each record of $B_j$ independently and uniformly to a group,
    obtaining $B_{j,1},\ldots,B_{j,k}$.
  \State For each $i$, append a uniform $\min\bc{n,|B_{j,i}|}$-subset
    of $B_{j,i}$ to $\cS_i$.
  \Loop
    \State $z\gets\mathsf{Select}_{\epsilon_0,\delta_0}
      \br{C_{p_\ell}(G_1^\ell),\ldots,C_{p_\ell}(G_k^\ell)}$.
    \State \textbf{if} $z\ne\bot$ \textbf{then} $h_j\gets z$; exit the loop.
    \State \textbf{if} $\ell=d$ \textbf{then return}
      $(h_0,\ldots,h_{j-1})$, padded with $h_0$ to length $N+1$.
    \State $\ell\gets\ell+1$.
  \EndLoop
\EndFor
\State \Return $h_0,\ldots,h_N$.
\end{algorithmic}
\end{algorithm}

On a realisable stream, the realising
hypothesis belongs to every $G_i^\ell$, so these classes are nonempty.
There are at most $N+d+1$ calls to $\mathsf{Select}$: at most $N$
succeed, while each failure increases $\ell$ or stops the algorithm
when $\ell=d$.

\begin{lemma}\label{lem:candidate-sampling}
With probability at least $1-\beta/2$, there is one set $I\subseteq\bs{k}$,
with $\abs{I}\ge0.99k$, such that the following holds for every full
batch $B_j$. Every group $i\in I$ receives at least $n$ records, and
its stored sample $A$ satisfies
\[
  \sup_{h\in\cH}
  \abs{\operatorname{err}_A\br{h}-\operatorname{err}_{B_j}\br{h}}
  \le \frac1{64\br{d+1}}.
\]
\end{lemma}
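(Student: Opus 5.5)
The plan is to prove a per-batch statement and then union over the at most $N$ batches. Fix $j$ and condition on everything that happened before batch $j$ is processed. This fixes $h_{j-1}$ and hence $B_j$, which we assume is full, so $|B_j|=K=2kn$. The assignment and sampling randomness of batch $j$ is fresh, so the conditional bounds below hold regardless of how $B_j$ was chosen adaptively. Call group $i$ \emph{bad for batch $j$} if it receives fewer than $n$ records of $B_j$, or if its stored sample $A$ violates $\sup_{h}\abs{\operatorname{err}_A(h)-\operatorname{err}_{B_j}(h)}\le\gamma$, where $\gamma=1/(64(d+1))$. We will take $I$ to be the set of groups that are good for every full batch.

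\textbf{Single group.} Group $i$ receives $\mathrm{Bin}(K,1/k)$ records, with mean $2n$. A multiplicative Chernoff bound gives probability at most $e^{-n/4}$ of receiving fewer than $n$. Conditional on receiving $m\ge n$ records, the received set is a uniform $m$-subset of $B_j$ by exchangeability. The retained set is then a uniform $n$-subset of $B_j$, i.e.\ a sample of size $n$ drawn without replacement from the $K$ records. For uniform convergence, note that $\cH$ restricted to the $K$ instances of $B_j$ has VC dimension at most $\Ldim(\cH)=d$. Sauer--Shelah therefore bounds the number of distinct error patterns on $B_j$ by $(eK/d)^d$. Rather than union over these $K$-dependent patterns, I would use the standard symmetrisation/VC inequality, which depends only on the growth function on $2n$ points. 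By Hoeffding's comparison, sampling without replacement is at least as concentrated as sampling with replacement, so the VC bound applies. This gives failure probability at most $4(2n)^d e^{-n\gamma^2/8}$. Since $n=\ceil{C_0(d+1)^6}$, we have $n\gamma^2=\Omega(C_0(d+1)^4)$, which dominates $d\log(2n)$. So the probability $q$ that a fixed group is bad for batch $j$ satisfies $q\le e^{-c\,C_0(d+1)^4}$. For $C_0$ large this is at most $10^{-3}/N$, since $\log N=O(d^2\log n)$.

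\textbf{Many groups.} For each batch, we want the number of bad groups to be at most $0.01k/N$ with conditional probability at least $1-\beta/(2N)$. The difficulty is that group counts are multinomial and not independent. Conditional on the counts, however, the samples in different groups are independent, because their subsets are disjoint and uniformly chosen. The event "count $<n$" is monotone in the counts, and multinomial counts are negatively associated, so Chernoff upper-tail bounds remain valid for the number of low-count groups. Given the counts, the number of sampling failures is a sum of independent indicators. With per-group mean at most $10^{-3}/N$, a Chernoff bound gives probability $\exp(-\Omega(k/N))$ that more than $0.01k/N$ groups are bad. Since $k\ge C_1N^2\log(100Nn/(\epsilon\delta\beta))$, this is at most $\beta/(2N)$.

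\textbf{Conclusion.} Summing over the at most $N$ full batches (a tower-rule union bound over the adaptive sequence), with probability at least $1-\beta/2$ at most $0.01k$ groups are ever bad. The set $I$ of groups that are never bad then has $\abs{I}\ge0.99k$ and satisfies the lemma. I expect the main obstacle to be the uniform-convergence step. It must be independent of $K$ and of the infinite class. It must also be made rigorous for without-replacement sampling under the data-dependent choice of $B_j$. The fallback is to condition on $B_j$, note that only the restriction of $\cH$ to $B_j$ matters, and apply the VC inequality there.
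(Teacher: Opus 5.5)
Your single-group analysis is fine and matches the paper's, which likewise invokes a Chernoff bound plus uniform convergence for a uniform $n$-subset of $B_j$. The gap is in the ``Many groups'' step. You claim that, conditional on the counts, the samples of different groups are independent ``because their subsets are disjoint and uniformly chosen''. This is false. Given the counts, the received sets form a uniformly random ordered partition of $B_j$, so disjointness creates dependence; for example, $A_2\subseteq B_j\setminus A_1$.

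Nor is the dependence negligible. When every group receives at least $n$ records, the stored samples together cover $kn=K/2$ records, half the batch. They are disjoint pieces of a single without-replacement draw, so their error counts for each $h$ are jointly constrained. The bad-sample events $\sup_h|\operatorname{err}_A(h)-\operatorname{err}_{B_j}(h)|>\gamma$ are not monotone, so negative association cannot rescue the Chernoff step either. Hence your $\exp(-\Omega(k/N))$ tail for the number of sampling failures is not justified.

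The missing idea is to concentrate the whole bad count $Z_j$ at once by bounded differences, as the paper does.

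\begin{itemize}
\item Your single-group bound already gives $\bE[Z_j\mid\text{past}]\le k/(1000N)$ by linearity alone; no independence is needed for this.
\item Realise assignment and subsampling by independent per-record pairs: a uniform group and a uniform priority in $[0,1]$, with each group keeping its $n$ smallest priorities. This has the same law as the algorithm's rule.
\item Changing one record's pair alters at most two groups' samples, so $Z_j$ changes by at most $2$.
\item McDiarmid over the $K=2kn$ independent pairs then gives $\bP[Z_j>k/(100N)\mid\text{past}]\le\exp(-ck/(N^2n))\le\beta/(2N)$.
\end{itemize}

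This rate is weaker than the one you claimed, and the factor $N^2n$ in the choice of $k$ is there to absorb it. Low-count groups are part of $Z_j$, so your separate negative-association argument for them becomes unnecessary. The union over at most $N$ batches and the definition of $I$ then go through exactly as you describe.
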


\begin{proof}[Proof of Lemma~\ref{lem:candidate-sampling}]
Condition on the past before a full batch $B_j$ is processed.

Call a group unsuccessful for $B_j$ if it receives fewer than $n$
records or its stored sample fails the error bound in the lemma.
A group receives $2n$ records in expectation.
Conditional on receiving at least $n$, its stored sample is a
uniformly chosen $n$-element subset of $B_j$.
Since $\mathrm{VC}\br{\cH}\le d$, a Chernoff bound and uniform
convergence for sampling without replacement bound the probability
that a group is unsuccessful by $1/\br{1000N}$, provided
\[
  n\ge C\br{d+1}^2
    \br{d\log\br{d+1}+\log\br{1000N}}.
\]
Our choice of $n$ satisfies this inequality.

Let $Z_j$ count the unsuccessful groups, so
$\bE[Z_j\mid\text{past}]\le k/(1000N)$.
To bound $Z_j$, use the following equivalent sampling rule. Give
each record an independent uniform random number in $\bs{0,1}$, and retain the
$n$ smallest numbers within each group.
Changing one record's assignment or random number changes $Z_j$
by at most two. Bounded differences over the $K=2kn$ independent
record choices gives
\[
  \bP\bs{Z_j>\frac{k}{100N}\mid\text{past}}
  \le
  \exp\br{-\frac{ck}{N^2n}}
  \le \frac{\beta}{2N},
\]
where the last inequality follows from the choice of $k$.

A union bound over at most $N$ batches shows that, with probability
at least $1-\beta/2$, every full batch has at most $k/\br{100N}$
unsuccessful groups.
Let $I$ contain the groups that are successful in every full batch.
Then $\abs{I}\ge0.99k$.
\end{proof}

\begin{lemma}\label{lem:candidate-selection}
With probability at least $1-\beta/2$, every call to $\mathsf{Select}$
in Algorithm~\ref{alg:private-candidates} satisfies the following:
a returned function belongs to at least $3k/5$ input lists, and if
the output is $\bot$, no function belongs to $9k/10$ input lists.
\end{lemma}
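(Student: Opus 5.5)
The plan is to apply \Cref{fact:candidate-selection} to every call and then take a union bound. First I fix the number of calls. As noted after \Cref{alg:private-candidates}, there are at most $N+d+1$ calls to $\mathsf{Select}$. At most $N$ of them succeed, and each failure either increments $\ell\le d$ or stops the algorithm. I will invoke each call with failure probability $\beta'=\beta/(2(N+d+1))$. The failure events of different calls do not need to be independent: the union bound only needs each call to succeed with probability at least $1-\beta'$ conditional on the history that determines its inputs. This holds because the guarantee of \Cref{fact:candidate-selection} is stated for arbitrary fixed input lists, and each call uses fresh internal randomness.

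Second, I check the preconditions of \Cref{fact:candidate-selection} at every call.

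\emph{List sizes.} At level $\ell$ the list parameter is $p_\ell=2^\ell n\le2^dn$. By \eqref{item:candidate-size},
\[
|C_{p_\ell}(G_i^\ell)|\le p_\ell^d2^{d^2}\le(2^dn)^d2^{d^2}=s.
\]
The classes $G_i^\ell$ are nonempty on a realisable stream, and $C_p(\varnothing)=\varnothing$ in any case, so every list has size at most $s$.

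\emph{Number of groups.} The fact requires $k\ge(C/\epsilon_0)\log\br{k(s+1)/(\beta'\delta_0)}$ with $a=\epsilon_0$ and $b=\delta_0$. Substituting the definitions gives
\[
\frac1{\epsilon_0}=\frac{8N(N+d+1)}{\epsilon}=\bigO{N^2/\epsilon},
\]
using $d+1\le N$. The argument of the logarithm is
\[
\frac{k(s+1)\cdot2(N+d+1)\cdot8eN(N+d+1)}{\beta\delta}.
\]
Since $s\le N$, this is $\mathrm{poly}(N)\cdot k/(\beta\delta)$. The requirement therefore reads
\[
k\ge C'\frac{N^2}{\epsilon}\br{\log\frac{N}{\epsilon\delta\beta}+\log k}.
\]
The defined value $k=\ceil{C_1N^2(n+\epsilon^{-1})\log\frac{100Nn}{\epsilon\delta\beta}}$ dominates $C'N^2\epsilon^{-1}\log\frac{N}{\epsilon\delta\beta}$ once $C_1$ is large.

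Third, the union bound: with probability at least $1-(N+d+1)\beta'=1-\beta/2$, both conclusions of \Cref{fact:candidate-selection} hold at every call. These are exactly the two claims of the lemma, with the second in contrapositive form: if the output is $\bot$, no function belongs to $9k/10$ lists.

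\textbf{Main obstacle.} The only delicate point is the self-referential $\log k$ term in the requirement on $k$. I would handle it by the standard fact that $k\ge2A\log(2A)$ implies $k\ge A\log k$, applied with $A=C'N^2\epsilon^{-1}$. Since $\log(2A)=\bigO{\log(N/\epsilon)}$, the term $\log(2A)$ is absorbed into $\log\frac{100Nn}{\epsilon\delta\beta}$, and $C_1$ only needs to be a sufficiently large absolute constant. It remains to confirm that $C_1$ can be chosen independently of all parameters, which follows because every factor was bounded by a polynomial in $N$, $1/\epsilon$, $1/\delta$ and $1/\beta$.
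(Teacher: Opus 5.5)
Your proposal is correct and follows the paper's proof essentially step for step. You bound every list by $s$ using $p_\ell\le 2^d n$ and the size property, apply the selection guarantee to each call with failure probability $\beta/(2(N+d+1))$ conditionally on the preceding calls, and take a union bound over at most $N+d+1$ calls. Your explicit handling of the self-referential $\log k$ term (via $k\ge 2A\log(2A)\Rightarrow k\ge A\log k$) fills in what the paper only asserts as ``holds for sufficiently large $C_1$.''
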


\begin{proof}[Proof of Lemma~\ref{lem:candidate-selection}]
Every input list has size at most $s$, because $\ell\le d$.
Apply Fact~\ref{fact:candidate-selection} to each call with failure
probability $\beta/\br{2\br{N+d+1}}$.
Its required inequality is
\[
  k\ge
  \frac{C}{\epsilon_0}
  \log\frac{2k\br{s+1}\br{N+d+1}}{\beta\delta_0},
\]
which holds for sufficiently large $C_1$.

Apply the guarantee conditional on the preceding calls. A union
bound over at most $N+d+1$ calls proves the lemma.
\end{proof}

\begin{lemma}\label{lem:candidate-list-monotonicity}
On every realisable stream and for every run of
\Cref{alg:private-candidates}, each group's decomposition dimension
is nonincreasing. If it is unchanged between two consecutive list
constructions, the new list is contained in the previous list.
Every list has size at most $s$.
\end{lemma}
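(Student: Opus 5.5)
We prove the three assertions of \Cref{lem:candidate-list-monotonicity} in turn, by induction over the successive list constructions of a fixed group $i$ during one run of \Cref{alg:private-candidates}. A list construction occurs at each call to $\mathsf{Select}$. It uses the class $G_i^\ell$, defined by the current sample collection $\cS_i$, together with the parameter $p_\ell=2^\ell n$. The key observation is that between two consecutive constructions exactly one of two things happens. Either $\ell$ is incremented while $\cS_i$ is unchanged (a failed selection followed by a retry), or $\ell$ is unchanged and a new sample is appended to $\cS_i$ (a successful selection followed by the next batch). In both cases the new class is contained in the old one. Appending a sample adds a constraint, and incrementing $\ell$ lowers the threshold $a_\ell$, so every constraint becomes stricter. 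The new parameter $p'$ satisfies $p'\in\bc{p,2p}$. The class is nonempty throughout because the realising hypothesis makes zero mistakes on every stored sample and therefore lies in every $G_i^\ell$.

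Given this, (\ref{item:candidate-comparison}) of \Cref{fact:candidate-properties} applies directly with $F$ the new class, $G$ the old class, and $p'\in\bc{p,2p}$. It gives $\rho_{p'}(F)\le\rho_p(G)$, so the decomposition dimension is nonincreasing. It also gives $C_{p'}(F)\subseteq C_p(G)$ whenever equality holds, which is the containment claim. Chaining these comparisons between consecutive constructions establishes both properties along the whole run.

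For the size bound, we use $\ell\le d$, so every list parameter satisfies $p_\ell\le 2^d n$. Then (\ref{item:candidate-size}) gives $|C_{p_\ell}(G)|\le p_\ell^d2^{d^2}\le(2^dn)^d2^{d^2}=s$.

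The only delicate step is verifying that the first case really has $p'=2p$ with the same samples. Since $\ell$ is global rather than per batch, a failure during a later batch also doubles $p$ after new samples have been appended. Even so, each list construction differs from the immediately preceding one in only one of the two ways. In the next-batch case the transition is $p'=p$ with a sample added; in the retry case it is $p'=2p$ with the samples fixed. So (\ref{item:candidate-comparison}) suffices in every case, and we never need to combine both changes in a single step.
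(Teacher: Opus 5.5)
Your proof is correct and follows the same route as the paper. Each transition between consecutive list constructions replaces the group's class by a nonempty subclass while either keeping $p_\ell$ or doubling it, so property (P3) gives monotonicity of the decomposition dimension and containment of the lists, and (P1) with $p_\ell\le 2^dn$ gives the size bound $s$; your explicit check that each step changes only the samples or only $\ell$, and that the realising hypothesis keeps every class nonempty, simply spells out what the paper's short proof leaves implicit.
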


\begin{proof}[Proof of \Cref{lem:candidate-list-monotonicity}]
Each update replaces a group's class by a nonempty subclass and
either preserves or doubles $p_\ell$.
The comparison property in \Cref{fact:candidate-properties}
therefore shows that the decomposition dimension cannot increase.
When it is unchanged, the new list is contained in the previous one.
Since $p_\ell\le2^dn$, the size bound in the same fact gives
$|C_{p_\ell}(G_i^\ell)|\le p_\ell^d2^{d^2}\le s$.
\end{proof}

\begin{lemma}\label{lem:candidate-selection-completion}
Fix a realisable stream and random choices for which both guarantees
in \Cref{lem:candidate-sampling,lem:candidate-selection} hold.
If every batch processed by \Cref{alg:private-candidates} is full,
the algorithm makes $N$ successful selections, producing
$h_1,\ldots,h_N$.
\end{lemma}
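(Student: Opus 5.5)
The plan is to show that every unsuccessful call to $\mathsf{Select}$ strictly lowers a single potential: the decomposition dimension of a class that does not depend on the group. That potential starts at most $d$ and is never negative, so no failure can occur at level $\ell=d$. The early return is the only way the algorithm stops before producing $h_1,\ldots,h_N$, so this proves the lemma.

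\textbf{The common classes.} Let $I$ be the set of groups given by \Cref{lem:candidate-sampling}, and write $\eta=1/(64(d+1))$. At any moment of the run, and for each level $\ell$, define
\[
  F_\ell=\bc{h\in\cH:\operatorname{err}_{B_j}(h)\le a_\ell-\eta\text{ for every batch }B_j\text{ processed so far}}.
\]
By assumption every processed batch is full, so \Cref{lem:candidate-sampling} gives $|\operatorname{err}_A(h)-\operatorname{err}_{B_j}(h)|\le\eta$ for every group $i\in I$, its stored sample $A$ from $B_j$, and every $h$. I would deduce two inclusions from this. First, $F_\ell\subseteq G_i^\ell$ for $i\in I$. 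Second, $G_i^{\ell+1}\subseteq F_\ell$, because $a_{\ell+1}+\eta=a_\ell-\eta$. Hence
\[
  G_i^{\ell+1}\subseteq F_\ell\subseteq G_i^\ell\qquad(i\in I).
\]
The realising hypothesis makes no mistakes, so it lies in every $F_\ell$ and every $G_i^\ell$, and all these classes are nonempty. When a new batch arrives, each of these classes can only shrink. By \eqref{item:candidate-comparison} with $p'=p$, neither $\rho$ nor the relevant bounds can increase within a fixed level.

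\textbf{One failure step.} Suppose $\mathsf{Select}$ returns $\bot$ at level $\ell$. By \Cref{lem:candidate-selection}, no function then lies in $9k/10$ of the input lists. Suppose, for contradiction, that $\rho_{p_{\ell+1}}(F_\ell)=\rho_{p_\ell}(G_i^\ell)$ held for every $i\in I$. Since $p_{\ell+1}=2p_\ell$, the ``moreover'' part of \eqref{item:candidate-comparison}, applied with $F=F_\ell\subseteq\bigcap_{i\in I}G_i^\ell$, would give a function in $\bigcap_{i\in I}C_{p_\ell}(G_i^\ell)$. That function lies in at least $0.99k$ lists, a contradiction. Therefore some $i\in I$ satisfies $\rho_{p_{\ell+1}}(F_\ell)<\rho_{p_\ell}(G_i^\ell)$.

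Next I bound the right-hand side. For $\ell\ge1$, we have $G_i^\ell\subseteq F_{\ell-1}$, so \eqref{item:candidate-comparison} with $p'=p_\ell$ gives $\rho_{p_\ell}(G_i^\ell)\le\rho_{p_\ell}(F_{\ell-1})$. For $\ell=0$ it is trivially at most $d$. So each failure yields
\[
  \rho_{p_{\ell+1}}(F_\ell)<\rho_{p_\ell}(F_{\ell-1}),
\]
with the convention that the right-hand side is $d$ when $\ell=0$.

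\textbf{Induction over levels.} The counter $\ell$ equals the number of failures so far, and between failures the classes only shrink. Induction on $\ell$ therefore gives $\rho_{p_{\ell+1}}(F_\ell)\le d-\ell-1$ after the failure at level $\ell$. It also gives $\rho_{p_\ell}(G_i^\ell)\le d-\ell$ for all $i\in I$ at every later call made at level $\ell$.

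Now suppose a failure happened at $\ell=d$. It would require some $i\in I$ with $0\le\rho_{p_{d+1}}(F_d)<\rho_{p_d}(G_i^d)\le 0$, which is impossible. Here $p_{d+1}$ is only used as an argument of $\rho$, which is defined for every integer. Hence the early return never triggers, and all $N$ selections succeed.

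\textbf{Main obstacle.} The delicate step is choosing a group-independent class sandwiched between consecutive levels, so that the ``moreover'' clause of \eqref{item:candidate-comparison} applies across all good groups at once. Two things must be checked. First, the threshold gap $a_\ell-a_{\ell+1}=2\eta$ must exactly absorb the sampling error in both directions. Second, the inclusions must persist as new batches refine the classes within one level.
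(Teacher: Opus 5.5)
Your proof is correct and follows the paper's argument. Uniform convergence on the good groups $I$ nests the level-$(\ell+1)$ classes inside the level-$\ell$ classes, property (P3) of \Cref{fact:candidate-properties} together with its common-function clause and \Cref{lem:candidate-selection} forces a strict drop in decomposition dimension at every failed selection, and the only differences are bookkeeping: you track the single group-independent sandwich class $F_\ell$ rather than the paper's $\max_{i\in I}\rho_{p_\ell}(G_i^\ell)$ with cross-group inclusions $G_i^{\ell+1}\subseteq G_{i'}^{\ell}$, and you exclude a failure at $\ell=d$ by reusing that clause with $p_{d+1}=2p_d$ instead of invoking the rank-zero property (P4).
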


\begin{proof}[Proof of \Cref{lem:candidate-selection-completion}]
Let $I$ be the set from \Cref{lem:candidate-sampling}.
For $i,i'\in I$ and $\ell<d$, take
$h\in G_i^{\ell+1}$. If $A'$ is a sample stored by group $i'$,
let $A$ be the sample stored by group $i$ from the same batch
$B$. The sampling guarantee gives
\[
\begin{aligned}
  \operatorname{err}_{A'}(h)
  &\le\operatorname{err}_{B}(h)+\frac1{64(d+1)}\le\operatorname{err}_{A}(h)+\frac1{32(d+1)}\le a_{\ell+1}+\frac1{32(d+1)}=a_\ell.
\end{aligned}
\]
Hence $G_i^{\ell+1}\subseteq G_{i'}^\ell$ for every $i,i'\in I$ with
the current stored samples. If selection fails while $\ell<d$, write
\[
  R=\max_{i\in I}\rho_{p_\ell}(G_i^\ell),\qquad
  R'=\max_{i\in I}\rho_{2p_\ell}(G_i^{\ell+1}).
\]
By \Cref{fact:candidate-properties} and the preceding inclusions,
$R'\le\min_{i\in I}\rho_{p_\ell}(G_i^\ell)\le R$.
If $R'=R$, choose a group attaining $R'$. Its new class is
contained in every old class and has the same decomposition
dimension as all of them. By the common-function property in
\Cref{fact:candidate-properties}, some function belongs to every
$C_{p_\ell}(G_i^\ell)$, $i\in I$. It belongs to at least $0.99k$
lists, contradicting \Cref{lem:candidate-selection}. Therefore $R'<R$.

The maximum decomposition dimension over $I$ initially is at
most $d$, and each unsuccessful selection with $\ell<d$
decreases it by at least one. Adding samples cannot increase it
either, by \Cref{lem:candidate-list-monotonicity}. Thus,
if $\ell$ reaches $d$, every group in $I$ has decomposition
dimension zero. Its list then equals its class, by
\Cref{fact:candidate-properties}, and contains the realising
hypothesis. It belongs to at least $0.99k$ lists, so
selection cannot fail when $\ell=d$.

Thus the algorithm completes $N$ successful selections.
\end{proof}

\begin{lemma}\label{lem:candidate-privacy}
Algorithm~\ref{alg:private-candidates} is
$\br{\epsilon,\delta}$-differentially private under replacement
of one record of $S$.
\end{lemma}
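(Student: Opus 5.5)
The privacy of \Cref{alg:private-candidates} will follow from adaptive composition over the calls to $\mathsf{Select}$. The main work is to show that replacing one record of $S$ changes the input of each call in at most one list, on a suitable coupling. The algorithm touches the data only through the mistake batches $B_j=B_K(h_{j-1},S)$, the random assignment of records to groups, the subsampling within groups, and the candidate lists $C_{p_\ell}(G_i^\ell)$. The selected functions $h_{j-1}$ and the counter $\ell$ are outputs of earlier private calls, so we may condition on them and argue about one call at a time.

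Fix neighbouring streams $S,S'$ differing at round $t$, and fix the earlier outputs $h_0,\dots,h_{j-1}$ and $\ell$. The key observation concerns how the batch changes. The batch $B_K(h,S)$ is the set of the first $K$ mistakes of $h$, so changing round $t$ either changes the record at round $t$ inside the batch, or adds or removes round $t$. Adding or removing round $t$ shifts the $K$-th cut-off, so one record is exchanged at the tail. In every case $B_j$ and $B_j'$ differ by replacing, inserting, or deleting at most two records.

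We couple the assignment and sampling randomness record by record, using the uniform-number representation from the proof of \Cref{lem:candidate-sampling}. Each record carries an independent group label and an independent uniform priority, indexed by its round. Records present in both batches then land in the same group with the same priority. Consequently at most two groups see a different stored sample from this batch. Because each record belongs to at most one batch's differing pair per batch, and rounds are the indices, I would also argue that across all $N$ batches the affected groups are bounded. A cleaner route is to treat each batch's differing records separately and charge every call a group-privacy factor.

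Concretely, a differing record can affect the stored samples of at most two groups per batch. The classes $G_i^\ell$ depend on all stored samples, so after $j$ batches up to $2j\le 2N$ groups may differ. Hence each call to $\mathsf{Select}$ sees inputs differing in at most $2N$ lists. By group privacy (\Cref{prop:group-privacy}) each call is then $\bigl(2N\epsilon_0,\ 2N e^{2N\epsilon_0}\delta_0\bigr)$-private. Using $2N\epsilon_0\le1$, this is at most $(2N\epsilon_0,\,2eN\delta_0)$.

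There are at most $N+d+1$ calls. Basic adaptive composition gives total parameters
\[
\Bigl(2N(N+d+1)\epsilon_0,\ 2eN(N+d+1)\delta_0\Bigr)=(\epsilon/4,\ \delta/4),
\]
which is within $(\epsilon,\delta)$ by the choices of $\epsilon_0,\delta_0$. Those choices appear designed exactly for this factor $N(N+d+1)$, which supports the plan.

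The one subtlety is that the coupling of assignment and sampling randomness is a fixed, data-independent measure-preserving coupling. Averaging the per-call guarantees over it is legitimate, because privacy inequalities are preserved under mixtures over shared public randomness.

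The main obstacle is the claim that each batch differs in at most two records, given the truncation at $K$ and the non-full case, together with the coupling argument that this changes at most two groups' samples. I would verify this by case analysis on whether round $t$ is a mistake of $h_{j-1}$ under $S$ and under $S'$, and whether it precedes the $K$-th mistake. If the subsampling to $\min\{n,|B_{j,i}|\}$ makes the priority coupling awkward, the fallback is to couple only the group labels. In that case a group whose received set differs at one record changes its sample in a way absorbed by counting it as one changed list.
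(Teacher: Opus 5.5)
Your proposal is correct and follows the paper's proof essentially step for step. Both use the same ingredients: the priority-based coupling from \Cref{lem:candidate-sampling}, the observation that each mistake batch changes in at most two timestamps (so at most $2N$ groups' stored samples differ), group privacy giving $(2N\epsilon_0,2Ne^{2N\epsilon_0}\delta_0)$ per $\mathsf{Select}$ call, and basic composition over at most $N+d+1$ calls to reach $(\epsilon/4,\delta/4)$. Two details in your write-up should match the paper. Index the coupled randomness by batch and timestamp, since each batch uses fresh randomness. Also fix that randomness for the entire run, compose, and only then average, rather than averaging each call separately.
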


\begin{proof}[Proof of Lemma~\ref{lem:candidate-privacy}]
Use the equivalent sampling rule from the proof of
Lemma~\ref{lem:candidate-sampling}.
Independently for each batch number $j$ and original timestamp $t$,
draw a group assignment and a uniform random number.
Fix these choices and average only over the randomness of
$\mathsf{Select}$.

Let $S,S'$ differ at one timestamp.  Compare a selection call on the
two streams after the same preceding selection answers. The preceding
candidates and the count $\ell$ of unsuccessful selections are then
the same. For a fixed candidate, changing one stream record changes
its first-$K$-mistake batch at at most two timestamps: the changed
timestamp itself, and possibly one other timestamp entering or leaving
the batch.

Common records receive the same group assignments and sampling
numbers. Thus at most two groups' samples differ in each batch.
Across at most $N$ batches, the stored samples differ in at most $2N$
groups. By Fact~\ref{fact:candidate-selection} and group privacy
(Fact~\ref{prop:group-privacy}), each selection call is
$(2N\epsilon_0,2N e^{2N\epsilon_0}\delta_0)$-differentially private
with respect to the original stream.  Composing over at most $N+d+1$
calls gives parameters
\[
  \br{
    2N\br{N+d+1}\epsilon_0,\,
    2N\br{N+d+1}e^{2N\epsilon_0}\delta_0
  }.
\]
By the choices of $\epsilon_0,\delta_0$, these are at most
$\br{\epsilon/4,\delta/4}$. Averaging over the assignment and sampling
randomness preserves privacy and the final returned sequence is a
post-processing of the selection answers.
\end{proof}

We restate \Cref{lem:candidate-sequence} with the target parameters
of~\Cref{sec:private-candidates}.

\CandidateSequence*

\begin{proof}[Proof of \Cref{lem:candidate-sequence}]
By \Cref{lem:candidate-sampling,lem:candidate-selection} and a union
bound, both guarantees hold with probability at least $1-\beta$.
Condition on this event and let $I$ be the set in
\Cref{lem:candidate-sampling}. Every selected function belongs to
at least $3k/5-k/100\ge k/2$ lists of groups in $I$.

If every candidate made at least $K$ mistakes, every batch would
be full, and \Cref{lem:candidate-selection-completion} would give
$N$ successful selections. The counting argument in Step~1 of
Section~\ref{sec:private-candidates} then gives
$(N-1)k/2\le k(d+1)s$, contradicting $N=2(d+1)s+2$.
Thus some candidate has fewer than $K$ mistakes. The parameter choices give
\[
  s,N\le2^{\bigO{d^2}},\qquad
  K\le\frac{2^{\bigO{d^2}}}{\epsilon}
       \log\frac{2}{\epsilon\delta\beta}.
\]
Each $h_j$ is computed using only
$B_K(h_0,S),\ldots,B_K(h_{j-1},S)$.
Together with \Cref{lem:candidate-privacy}, substituting
$(\epsilon/4,\delta/8)$ and $\beta=\delta$ proves the lemma.
\end{proof}

\subsection{Using the candidates online}
\label{sec:online-prediction}

For the remainder of this section, we use the notation and parameter
assumptions of Section~\ref{sec:private-candidates}.

The candidates from Step~1 can be computed as their preceding
mistake batches become full. To use them privately, we delay when a
new candidate can affect the predictions.

We use the private delay mechanism $\mathsf{Mirror}$
of~\cite[Lemmas~23--24]{CLNSS24}.
A round is active when its input bit is $1$.

\begin{fact}\label{fact:online-mirror}
For $0<a\le1$, $0<b<1/10$ and an integer $D\ge1$, there is an
$\br{a,b}$-jointly private mechanism $\mathsf{Mirror}\br{a,b,D}$
that receives an activity bit and returns $\bot$ or $\top$ on each
round. For every value of its randomness, it returns $\bot$ on every
inactive round and on the first $D$ active rounds. It returns $\top$
on every active round after the first
$D+Ca^{-2}\log^2\br{2/b}$ active rounds, for an absolute constant $C$.
\end{fact}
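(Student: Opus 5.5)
The plan is to prove the statement directly with a noisy-threshold counter. We do not need to reconstruct the Mirror mechanism of~\cite{CLNSS24}. The claimed delay $D+Ca^{-2}\log^2(2/b)$ is weaker than what a single noisy threshold gives. Since $0<a\le1$ and $0<b<1/10$, we have $\log(2/b)\ge1$, so $a^{-1}\log(2/b)\le a^{-2}\log^2(2/b)$. It therefore suffices to build an $(a,b)$-jointly private mechanism whose switching point lies deterministically in $[D+1,\,D+O(a^{-1}\log(2/b))]$ active rounds.

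\textbf{Construction.} Set $\Delta=\lceil 2a^{-1}\log(2/b)\rceil$. Draw, once and before any input, an integer $Z\in\{0,\ldots,2\Delta\}$ from a discrete Laplace distribution with scale $1/a$, centred at $\Delta$ and truncated to this range. Let $c_r$ be the number of active rounds among rounds $1,\ldots,r$. At round $r$ the mechanism outputs $\top$ if round $r$ is active and $c_r>D+Z$, and $\bot$ otherwise.

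\textbf{Deterministic properties.} These hold for every value of the randomness. Inactive rounds always give $\bot$. Because $Z\ge0$, the first $D$ active rounds give $\bot$. Because $Z\le2\Delta$, every active round after the first $D+2\Delta$ active rounds gives $\top$. Finally, $2\Delta\le Ca^{-2}\log^2(2/b)$ for a suitable absolute constant $C$.

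\textbf{Joint privacy.} Fix $i$ and two activity sequences that differ only at round $i$. The output at round $i$ is excluded, so we compare the remaining outputs.
\begin{enumerate}[label=(\roman*),leftmargin=*]
\item Before round $i$ the counts coincide, and hence so do the outputs for each fixed $Z$.
\item After round $i$, every count in one run equals the corresponding count in the other run shifted by a common $\sigma\in\{-1,0,1\}$.
\item The activity bits at rounds $r\ne i$ are the same in both runs. Hence, outside round $i$, the first run with threshold $Z$ produces exactly the same output sequence as the second run with threshold $Z-\sigma$.
\end{enumerate}
So the two output distributions are images of $Z$ and $Z-\sigma$ under a single deterministic map. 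It then suffices to show that the truncated discrete Laplace distribution is $(a,b)$-indistinguishable from its shift by one, which is the standard truncated-Laplace estimate.
\begin{enumerate}[label=(\roman*),leftmargin=*,start=4]
\item On the interior of the support the probability ratio is at most $e^{a}$.
\item The only mass not covered by this ratio bound is the single endpoint atom that the shift moves outside the support.
\item That atom has probability $O(e^{-a\Delta})$ relative to the normaliser. With $\Delta$ as chosen, this is at most $b$ after fixing constants.
\end{enumerate}
Post-processing then transfers the guarantee to every output event $E$ on $\hat Y_{-i}$.

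\textbf{Main obstacle.} The delicate step is (iii), the coupling: checking that the boundary case at round $i$ is covered. If round $i$ becomes active exactly when the count crosses the threshold, we must still verify that the later outputs are a function of $Z$ and the common shift alone. This holds because round $i$'s own output is removed and all later outputs depend only on the comparisons $c_r>D+Z$. The remaining work is routine bookkeeping of the truncation constants in (v) and (vi).
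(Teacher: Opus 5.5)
\textbf{The gap is in the joint-privacy argument, and it cannot be fixed by adjusting constants.} The single noisy threshold is not $(a,b)$-jointly private when $b\ll a$.

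Step (iii) is false for rounds before $i$. Replacing $Z$ by $Z-\sigma$ moves the switching point, so whenever the switch happens before round $i$ the two output sequences already differ there. Steps (i) and (ii) therefore need different couplings: the identity before $i$ and the shift after $i$. Gluing them (identity when the switch happens no later than round $i$, shift otherwise) does not give a bijection on the values of $Z$.

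Removing round $i$'s output is the cause of the failure, not the fix your last paragraph makes it. In the stream where round $i$ is active, switching exactly at the hidden round $i$ and switching at the next active round leave identical visible outputs. In the neighbouring stream, the corresponding two values of $Z$ remain distinguishable.

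Here is a concrete counterexample. Let every round of $S$ be active, and let $S'$ differ only in that round $i$ is inactive. Let $E$ be the event that $\hat y_r=\bot$ for all $r<i$ and $\hat y_{i+1}=\top$. Write $\pi$ for the law of $Z$ and set $z_0=i-1-D$. One checks that $\Pr_S(E)=\pi(z_0)+\pi(z_0+1)$, whereas $\Pr_{S'}(E)=\pi(z_0)$. Taking $i=D+1+\Delta$ gives
\[ \Pr_S(E)-e^{a}\Pr_{S'}(E)=(1+e^{-a}-e^{a})\,\pi(\Delta)\ge(1-2\sinh a)\tanh(a/2). \]
For $a\le1/4$ this is at least $a/5$, so it exceeds $b$ whenever $b<a/5$.

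\textbf{Retuning the noise cannot repair this.} Running the same event over all $i$ forces $\pi(z+1)\le(e^a-1)\pi(z)+b$ for every $z\ge-1$, with $\pi(-1)=0$. This gives $\max_z\pi(z)\le b/(2-e^a)$. So for, say, $a\le1/2$, the support of $Z$ must have size $\Omega(1/b)$. Since your delay bound must hold for every value of the randomness, the delay is then $\Omega(1/b)$ as well.

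This is why the statement has $a^{-2}\log^2(2/b)$ rather than $a^{-1}\log(2/b)$. It is also why the paper does not prove the fact itself, but imports $\mathsf{Mirror}$ from Lemmas 23--24 of \cite{CLNSS24}.

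The missing idea is a switch that is gradual and randomised afresh on each active round. For instance, output $\top$ on the $k$-th active round independently, with a probability that ramps slowly from $0$ to $1$ after the first $D$ active rounds. Then the one-step shift of the count caused by round $i$ perturbs each later output's law only slightly. Controlling the accumulated log-likelihood ratio with Bernstein's inequality, as in the proof of \Cref{lem:prediction-combination}, leads to a ramp of length of order $a^{-2}\log^2(2/b)$.
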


\begin{definition}[Replacement mechanism]\label{def:online-replacement}
Let $0<a\le1$, $0<b<1/10$, and let $D$ be an integer with
$D\ge Ca^{-2}\log^2(4/b)$ for a sufficiently large absolute constant
$C$. Given two fixed prediction sequences $U=(u_t)$ and $V=(v_t)$,
run $\mathsf{Mirror}(a/2,b/2,D)$ from
\Cref{fact:online-mirror} on the bits
$\Delta_t=\mathbf1\bc{u_t\ne v_t}$. Write $\mu_t$ for its response
at round $t$.

Let $c$ count the rounds on which $v_t=y_t$ among the first $D$
disagreements, or among all disagreements if there are fewer than $D$.
Independently of Mirror, draw $\xi$ from the Laplace distribution
of scale $4/a$, conditioned on $|\xi|\le D/8$, and set
$\mathrm{acc}=\mathbf1\bc{c+\xi\ge D/2}$. Define
$\mathsf{Rep}(U,V,D,a,b)$ by
\[
  W_t=
  \begin{cases}
    v_t,&\text{if }\mu_t=\top\text{ and }\mathrm{acc}=1,\\
    u_t,&\text{otherwise.}
  \end{cases}
\]
Mirror returns $\bot$ on the first $D$ disagreement rounds, so the
acceptance decision is used only after all labels entering $c$ have
arrived. Thus the replacement is online whenever $U,V$ are.
Both predictors receive the original examples and labels, and neither
receives the replacement's outputs.
\end{definition}

For the parameters in the main proof, choose a sufficiently large
absolute constant $C$ and set
\begin{equation}\label{eq:online-replacement-parameters}
  D_0=\ceil{C\br{
       K+\frac{N^2}{\epsilon^2}\log^2\frac{32N}{\delta}
       }},
  \qquad D_j=9^jD_0,\quad 1\le j\le N,
\end{equation}
and $a=\epsilon/(4N)$, $b=\delta/(8N)$.
Then $D_0>4K$, and every $D\ge D_0$ satisfies the lower bound
in \Cref{def:online-replacement}.
The notation $\mathsf{Rep}(U,V,D)$ in the main text suppresses $a,b$.

\begin{lemma}\label{lem:online-replacement}
For every integer $D\ge D_0$, every fixed pair of prediction sequences
$U,V$, and every value of the replacement mechanism's randomness,
its output $W=\mathsf{Rep}(U,V,D,a,b)$ from
\Cref{def:online-replacement} satisfies:
\begin{enumerate}[label=(\roman*),leftmargin=*,topsep=3pt,itemsep=0pt,parsep=0pt]
  \item If $M(V;S_{1:r})<K$, then
    $M(W;S_{1:r})\le K+2D$.
  \item If $M(U;S_{1:t-1})\le D/4$, then $W_t=U_t$.
\end{enumerate}
\end{lemma}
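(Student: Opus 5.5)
Both properties hold for every realisation of the randomness, so I will argue deterministically. I only use the two guarantees of Mirror in \Cref{fact:online-mirror}: it returns $\bot$ on inactive rounds and on the first $D$ active rounds, and it returns $\top$ on every active round after the first $D+D'$ active rounds, where $D'\coloneqq C'a^{-2}\log^2(2/b)$ for the parameters $(a/2,b/2)$. I also use $|\xi|\le D/8$. The lower bound $D\ge Ca^{-2}\log^2(4/b)$ in \Cref{def:online-replacement}, with $C$ large, gives $D'\le D/4$. The basic observation is that $W_t\ne U_t$ forces $u_t\ne v_t$ (since $W_t=v_t$), $\mu_t=\top$, and $\mathrm{acc}=1$. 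On agreement rounds $W_t=u_t=v_t$, so all mistakes of $W$ beyond those of the followed sequence come from disagreement rounds.

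For (ii), suppose $W_t\ne U_t$. Then round $t$ is a disagreement round with $\mu_t=\top$, so at least $D$ disagreements occur strictly before $t$. Moreover $\mathrm{acc}=1$, so $c\ge D/2-\xi\ge 3D/8$. Here $c$ counts, among the first $D$ disagreements, those where $v_s=y_s$. Because the labels are binary, $u_s=1-v_s$ on these rounds, so $U$ errs on each of them. All of them precede $t$, hence $M(U;S_{1:t-1})\ge 3D/8>D/4$, which contradicts the hypothesis. Therefore $W_t=U_t$.

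For (i), assume $M(V;S_{1:r})<K$ and split the rounds up to $r$. On agreement rounds, $W$ errs only when $V$ does. On disagreement rounds where $W=v_t$, the same holds. It remains to bound disagreement rounds where $W=u_t$.
\begin{itemize}
\item \textbf{Case $\mathrm{acc}=1$.} Here $W=u_t$ only on rounds where Mirror returns $\bot$, and there are at most $D+D'\le 2D$ such disagreement rounds. This gives $M(W;S_{1:r})\le K+2D$.
\item \textbf{Case $\mathrm{acc}=0$.} Then $W=U$ everywhere, and $c<D/2+D/8=5D/8$. Suppose there are at least $D$ disagreements by round $r$. On the first $D$ of them, $V$ is wrong at least $D-c>3D/8$ times, and $3D/8\ge K$ because $D\ge D_0>4K$ (since $3D/8 \ge 3K/2$). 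This contradicts $M(V;S_{1:r})<K$, unless some of those disagreements lie after $r$. So I instead count only disagreements up to $r$. If there are at most $2D$ of them, $W$ errs at most $K+2D$ times, where the $K$ absorbs $V$'s errors on agreement rounds. If there are more than $2D$, the first $D$ lie before $r$, and the contradiction above shows this cannot happen.
\end{itemize}
So in every case $M(W;S_{1:r})\le K+2D$.

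The main obstacle I expect is this $\mathrm{acc}=0$ accounting. One must check carefully that $c$ is defined over the first $D$ disagreements of the whole stream, which all precede $r$ once there are at least $D$ disagreements by round $r$, and that $D_0>4K$ is exactly what turns a rejection into a contradiction. Using the binary-label fact that $U$ is wrong exactly where $V$ is right on disagreement rounds is also essential for (ii).
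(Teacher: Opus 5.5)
Your proof is correct and uses the same ingredients as the paper's argument. These are binary labels, so $U$ errs on every round counted by $c$; Mirror's $\bot$ on the first $D$ active rounds and $\top$ after at most $2D$; $|\xi|\le D/8$; and $D_0>4K$. The difference is only organisational: you argue (ii) by contradiction and split (i) on the value of $\mathrm{acc}$, whereas the paper splits on whether more than $2D$ disagreements occur by round $r$ and deduces $\mathrm{acc}=1$ in that case, which is the contrapositive of your $\mathrm{acc}=0$ case. Your phrase ``unless some of those disagreements lie after $r$'' is unnecessary, since the first $D$ disagreements lie before $r$ whenever at least $D$ occur by $r$.
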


\begin{proof}
At a disagreement round, exactly one of $U$ and $V$ is correct.
In particular, every round counted by $c$ is a mistake of $U$.

For (i), a mistake on a round where $\Delta_t=0$ is also a mistake
of $V$. The first $2D$ disagreement rounds contribute at most $2D$
additional mistakes. If the prefix contains no further disagreements,
this already proves the bound.

Otherwise, among the first $D$ disagreement rounds, fewer than
$K\le D/4$ are mistakes of $V$. Consequently $c>3D/4$ and
$c+\xi>5D/8>D/2$, so $\mathrm{acc}=1$.
By \Cref{fact:online-mirror}, the choice
$D\ge Ca^{-2}\log^2\br{4/b}$, with a sufficiently large constant,
ensures that $\mu_t=\top$ on every disagreement round after the first
$2D$. The replacement then predicts $v_t$, and any further mistake
is a mistake of $V$. Therefore
$M\br{W;S_{1:r}}\le M\br{V;S_{1:r}}+2D<K+2D$.

For (ii), if $\Delta_t=0$, the two predictors agree and the
replacement returns their common bit. If at most $D$ disagreement
rounds have occurred up to and including round $t$, then
$\mu_t=\bot$, so the replacement returns $u_t$.
Otherwise, the first $D$ disagreement rounds all lie in $S_{1:t-1}$.
Since $U$ has made at most $D/4$ mistakes on that prefix, $c\le D/4$.
The noise satisfies $\abs{\xi}\le D/8$, and hence
$c+\xi\le3D/8<D/2$. Thus $\mathrm{acc}=0$, and again the prediction
is $u_t$.
\end{proof}

Run \Cref{alg:private-candidates} with privacy parameters
$(\epsilon/4,\delta/8)$ and failure probability $\beta=\delta$.
First suppose that all candidates are available and define
\[
  \cA_0=h_0,\qquad
  \cA_j=\mathsf{Rep}(\cA_{j-1},h_j,D_j,a,b),\quad 1\le j\le N.
\]
Each replacement uses independent randomness, also independent of
the candidate procedure. Write $\cA_{j,t}$ for the prediction of
$\cA_j$ at round $t$, and use the same random choices in the online
implementation.

\begin{claim}\label{clm:online-agreement}
Let $0\le m<N$. If $B_K\br{h_m,S_{1:t-1}}$ is not full, then
$\cA_m,\cA_{m+1},\ldots,\cA_N$ agree on every round through $t$.
This holds for every stream and every value of the randomness.
\end{claim}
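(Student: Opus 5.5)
The plan is to make the informal ``Computing the predictions online'' argument from Step~2 rigorous by induction on $j$ from $m$ to $N-1$. We will show that, for every $s\le t$, $\cA_{j+1,s}=\cA_{j,s}$ and $M(\cA_j;S_{1:s-1})\le D_{j+1}/4$. Everything is deterministic given the randomness, because \Cref{lem:online-replacement} holds for every value of the replacement randomness and for every fixed pair of prediction sequences. We fix the stream and all randomness, including the candidate sequence, and we treat each $\cA_{j-1}$ and $h_j$ as fixed prediction sequences. This is legitimate since neither predictor receives the replacement's outputs (\Cref{def:online-replacement}).

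\textbf{Base step.} The hypothesis that $B_K(h_m,S_{1:t-1})$ is not full means $M(h_m;S_{1:t-1})<K$, so $M(h_m;S_{1:s-1})<K$ for all $s\le t$. If $m\ge1$, we have $\cA_m=\mathsf{Rep}(\cA_{m-1},h_m,D_m)$, and property (i) of \Cref{lem:online-replacement} with $r=s-1$ gives $M(\cA_m;S_{1:s-1})\le K+2D_m$. If $m=0$, then $\cA_0=h_0$ and $M(\cA_0;S_{1:s-1})<K\le K+2D_0$ directly. Since $D_0>4K$ and $D_{m+1}=9D_m$, we get $K+2D_m\le D_m/4+2D_m\le D_{m+1}/4$. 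Applying (ii) to $\cA_{m+1}=\mathsf{Rep}(\cA_m,h_{m+1},D_{m+1})$ at each round $s\le t$ then gives $\cA_{m+1,s}=\cA_{m,s}$.

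\textbf{Inductive step.} Suppose that for some $j\ge m+1$ the predictors $\cA_j$ and $\cA_{j-1}$ agree on rounds $1,\dots,t$, and that $M(\cA_{j-1};S_{1:s-1})\le D_j/4$ for all $s\le t$. Agreement on rounds up to $s-1\le t-1$ gives $M(\cA_j;S_{1:s-1})=M(\cA_{j-1};S_{1:s-1})\le D_j/4\le D_{j+1}/4$. Property (ii) applied to $\mathsf{Rep}(\cA_j,h_{j+1},D_{j+1})$ then yields $\cA_{j+1,s}=\cA_{j,s}$ for all $s\le t$. Chaining these agreements shows that $\cA_m,\dots,\cA_N$ all agree through round $t$.

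The main obstacle is a subtlety in the base case rather than any hard estimate. Property (i) only controls mistakes on a prefix $S_{1:r}$ where $V=h_m$ has fewer than $K$ mistakes on that same prefix, and property (ii) at round $s$ needs a bound on the prefix ending at $s-1$. Both are consistent here because the hypothesis concerns $S_{1:t-1}$ and we only need rounds $s\le t$. I would check this indexing carefully, together with the trivial case $m=0$, where $\cA_0=h_0$ has no replacement mechanism of its own.
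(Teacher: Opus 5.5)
Your proposal is correct and follows essentially the same induction as the paper. You bound $M(\cA_m;S_{1:s-1})\le K+2D_m\le D_{m+1}/4$ using property (i), or directly when $m=0$, and then propagate agreement up the chain via property (ii) and $D_{j+1}=9D_j$. The only cosmetic difference is that you carry the bound $D_j/4$ explicitly as an inductive invariant, whereas the paper recovers it at each step from agreement with $\cA_m$.
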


\begin{proof}
For $0\le j\le N$ and any prefix with $M\br{h_j;S_{1:r}}<K$,
we have
\begin{equation}\label{eq:online-prefix-bound}
  M\br{\cA_j;S_{1:r}}\le K+2D_j.
\end{equation}
For $j=0$ this follows from $\cA_0=h_0$. For $j\ge1$ it is
Lemma~\ref{lem:online-replacement}(i), since $D_j\ge D_0\ge4K$.

Fix $m,t$ as in the claim. Since the batch is not full,
$M\br{h_m;S_{1:s-1}}<K$ for every $s\le t$.
We prove by induction on $j=m+1,\ldots,N$ that $\cA_j$ and
$\cA_{j-1}$ agree at every round $s\le t$.

For $j=m+1$, \eqref{eq:online-prefix-bound} and $4K\le D_0\le D_m$
give $M\br{\cA_m;S_{1:s-1}}\le K+2D_m\le D_{m+1}/4$.
Lemma~\ref{lem:online-replacement}(ii) now shows agreement at round $s$.

For $j>m+1$, the induction hypothesis says that $\cA_{j-1}$ and
$\cA_m$ agree through $t$, so $\cA_{j-1}$ makes at most
$D_{m+1}/4\le D_j/4$ mistakes before round $s$.
Lemma~\ref{lem:online-replacement}(ii) completes the induction.
\end{proof}

We store the observed examples and the predictions of each $\cA_j$.
When a new candidate $h_j$ becomes available, we run its replacement
rule on the earlier examples, using the stored predictions of
$\cA_{j-1}$ and the random choices fixed above. This determines how
$\cA_j$ would have predicted on those rounds and allows it to continue
on the next round. To compute successive candidates, Algorithm~\ref{alg:private-candidates}
keeps its stored samples and its current value of $\ell$.

\begin{algorithm}[H]
\caption{Using the candidates online}
\label{alg:online-candidates}
\small
\algtext*{EndFor}
\algtext*{EndWhile}
\begin{algorithmic}[1]
\State $m\gets0$; $\cA_0\gets h_0$.
\For{$t=1,\ldots,T$}
  \While{$m<N$ and $M(h_m;S_{1:t-1})\ge K$}
    \State Obtain $h_{m+1}$ from Algorithm~\ref{alg:private-candidates}, resuming it if needed.
    \State Run the replacement defining $\cA_{m+1}$ on $S_{1:t-1}$ as described above.
    \State $m\gets m+1$.
  \EndWhile
  \State Receive $x_t$; compute $\cA_{0,t},\ldots,\cA_{m,t}$ in order
    and predict $\cA_{m,t}$.
  \State Receive $y_t$ and update all initialised replacements.
\EndFor
\end{algorithmic}
\end{algorithm}

All batches count mistakes on the entire stored prefix, including
rounds before the candidate was computed.

At the end of the while loop, either $m=N$ or the batch of $h_m$
is not full. Claim~\ref{clm:online-agreement} therefore shows that
the current prediction equals that of $\cA_N$, and that reconstructing
a replacement preserves its earlier predictions. All computations use
only previous labels and the current instance. Denote this online
predictor by $\cA$.

The high-probability mistake bound
\eqref{eq:prediction-high-probability-bound} follows from
Step~2 of ~\Cref{sec:private-candidates}, using the parameter
choice in~\eqref{eq:online-replacement-parameters}.
We now prove the privacy guarantee.

\begin{lemma}\label{lem:online-candidates-privacy}
$\cA$ is
$\br{\epsilon/2,\delta/4}$-jointly differentially private.
\end{lemma}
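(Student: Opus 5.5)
Since $\cA$ produces exactly the same predictions as $\cA_N$ on every stream (by \Cref{clm:online-agreement} and the discussion after \Cref{alg:online-candidates}), it suffices to prove that $\cA_N$ is $(\epsilon/2,\delta/4)$-jointly differentially private. I would view $\cA_N$ as an adaptive composition of $N+1$ mechanisms. The first is the offline candidate procedure, \Cref{alg:private-candidates}, run with $(\epsilon/4,\delta/8)$, which outputs $h_0,\ldots,h_N$. The remaining $N$ are replacement mechanisms $\mathsf{Rep}(\cA_{j-1},h_j,D_j,a,b)$ with $a=\epsilon/(4N)$ and $b=\delta/(8N)$. Fix $i$ and two streams $S,S'$ that agree outside round $i$. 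The goal is to bound the distribution of $\hat Y_{-i}$.

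\textbf{Step 1: privacy of a single replacement.} Fix the candidate sequence and the predictions of $\cA_{j-1}$. I would show that $\mathsf{Rep}$ is $(a,b)$-jointly private with respect to its input stream $(u_t,v_t,y_t)_t$, where privacy excludes the output at the changed round. The output $W_t$ depends on three things: $\mu_t$, the bit $\mathrm{acc}$, and $(u_t,v_t)$. Mirror is run with $(a/2,b/2)$ on the activity bits $\Delta_t$, and changing one input changes only one $\Delta_t$. For $\mathrm{acc}$, the count $c$ has sensitivity at most $1$ under a change at one round. A change at round $i$ can also shift which disagreements count among the first $D$, and a careful check is needed that this still gives sensitivity bounded by a constant. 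The truncated Laplace with scale $4/a$ then gives $(a/2,b/2)$, where the truncation at $D/8$ costs $\delta$-mass $e^{-aD/32}\le b/2$ by the lower bound on $D$. Basic composition gives $(a,b)$. Outputs at rounds $t\ne i$ are post-processing of $(\mu_t,\mathrm{acc},u_t,v_t)$, and $u_t,v_t$ are the same for $S$ and $S'$ when $t\ne i$.

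\textbf{Step 2: the key obstacle.} The inputs $u_t=\cA_{j-1,t}$ to the $j$th replacement are themselves outputs of an earlier jointly private mechanism. Under $S$ versus $S'$, these inputs may differ at round $i$ and, through randomness, elsewhere as well. The plan is to condition on the entire history $\hat Y^{(j-1)}_{-i}$ of the previous level and on the candidate list $h_{0:N}$. Given these, the $j$th level's inputs at rounds $t\ne i$ are identical under $S$ and $S'$. The two input streams then differ only at round $i$, namely in $(u_i,v_i,y_i)$, and Step 1 applies conditionally. The joint-privacy composition is therefore a standard adaptive composition over the released views $(h_{0:N},\hat Y^{(1)}_{-i},\ldots,\hat Y^{(N)}_{-i})$, with each level's view a conditionally private function of the stream and the previous views. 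The candidate procedure is fully $(\epsilon/4,\delta/8)$-DP, and since $h_j$ is a fixed function, $v_t=h_j(x_t)$ agrees off round $i$. I would verify that internal randomness, such as the hidden value $u_i$, is correctly averaged; this is the main subtlety.

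\textbf{Step 3: summing the budgets.} Adaptive composition gives total budget $\epsilon/4+N\cdot\epsilon/(4N)=\epsilon/2$ and $\delta/8+N\cdot\delta/(8N)=\delta/4$. Since $\hat Y_{-i}(\cA)=\hat Y^{(N)}_{-i}$ is a post-processing of the composed view, the lemma follows.
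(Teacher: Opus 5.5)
Your proposal is correct and follows the paper's proof essentially step for step. You reduce to $\cA_N$ and compose the $(\epsilon/4,\delta/8)$ candidate procedure with $N$ replacements, each costing $(a/2,b/2)$ for the truncated-Laplace acceptance test plus $(a/2,b/2)$ for Mirror, conditioning on earlier releases with coordinate $i$ omitted so that later inputs agree off round $i$; the paper releases the acceptance bits and Mirror responses rather than each level's predictions, which is an inessential difference. The details you left open are settled there as expected: $c$ changes by at most two, the truncation costs $(1+e^{a/2})e^{-aD/32}\le b/2$, and the single-replacement inequality holds for every pair of round-$i$ inputs $(u_i,v_i,y_i)$, so it also holds for their conditional mixtures, which handles the hidden $u_i$.
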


\begin{proof}
Fix neighbouring streams that differ at round $i$. Reveal all candidate
functions and acceptance bits, and each Mirror response sequence with
coordinate $i$ omitted.

We use the following composition argument because each replacement
receives predictions from the preceding one. Suppose earlier
mechanisms have released some functions or bits and some sequences
of answers with coordinate $i$ omitted. Fix these outputs. A later
mechanism forms its input at round $t$ from the original record at
$t$, the released functions or bits, and the earlier mechanisms'
answers at that same round. Its inputs outside $i$ are therefore
fixed and identical on the two streams. The input at $i$ can have
different conditional distributions. If the later mechanism's
privacy inequality holds for every pair of possible inputs at $i$,
it also holds after averaging over those conditional distributions.
Adaptive composition then adds the privacy parameters. For a
jointly private sequence, omit output $i$. The omitted answer
may enter the later mechanism at round $i$, but it cannot be copied
directly to another round.

For one replacement rule, regard $\br{u_t,v_t,y_t}_{t=1}^T$ as its
input stream. Replacing one record changes the number of labels
favouring the second predictor among the first $D$ disagreements by
at most two. The untruncated Laplace noise of scale $4/a$ therefore
makes the acceptance decision $\br{a/2,0}$-differentially private.
Truncating this noise to $\bs{-D/8,D/8}$ changes its distribution in
total variation by $q=\exp\br{-aD/32}$.
The truncated decision is consequently
$\br{a/2,\br{1+e^{a/2}}q}$-differentially private.
For $D\ge Ca^{-2}\log^2\br{4/b}$ with a sufficiently large constant,
$\br{1+e^{a/2}}q\le b/2$.
Mirror independently contributes $\br{a/2,b/2}$-joint privacy.

We apply this argument to the definition of $\cA_N$ with all
candidates available. First obtain the candidate sequence, which is
$\br{\epsilon/4,\delta/8}$-differentially private by
Lemma~\ref{lem:candidate-privacy}. For each $j=1,\ldots,N$, process
the acceptance test and Mirror instance of its replacement rule.
Their input at round $t$ is
$\br{\cA_{j-1,t},h_j\br{x_t},y_t}$.
Once the candidates and earlier acceptance decisions are fixed,
the first component is determined by $x_t$ and the earlier round-$t$ Mirror
responses. Thus the composition argument applies and each replacement
adds $\br{a,b}$.
The predictions outside $i$ are determined by these releases and the
unchanged instances, so their privacy parameters are
$\br{\epsilon/4,\delta/8}+N\br{a,b}=\br{\epsilon/2,\delta/4}$.
The online predictor $\cA$ has exactly the same predictions as
$\cA_N$ on every stream when they use the same random choices, by
Claim~\ref{clm:online-agreement} and the implementation above. It
therefore has the same privacy guarantee.
\end{proof}

\subsection{An expected mistake bound independent of \texorpdfstring{$T$}{T}}
\label{sec:prediction-expectation}

To remove the $\delta T$ term in $\bE M(\cA;S)\le B+\delta T$, we
combine $\cA$ with an independent subsampled SOA predictor, as in Step~3.

\begin{fact}\label{fact:candidate-subsampled-soa}
Fix $0<\gamma\le1$ and start with an empty retained sample $R$.
On round $t$, predict $\mathrm{SOA}_{\cH|_R}(x_t)$; after observing
$y_t$, retain $\br{x_t,y_t}$ independently with probability $\gamma$.
If no hypothesis is consistent with $R$, use a fixed public function.
This predictor is $\br{0,\gamma}$-jointly differentially private and
has at most $d/\gamma$ expected mistakes on every realisable stream.
\end{fact}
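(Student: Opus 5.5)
The proof has two independent parts: a privacy argument and a mistake bound, both for the subsampled SOA predictor of \Cref{fact:candidate-subsampled-soa}.

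\textbf{Privacy.} I would couple the retention coins across the two neighbouring streams $S,S'$, which differ only at round $i$. Let $\xi_i$ be the coin for round $i$. On the event $\xi_i=0$, the example at round $i$ never enters $R$. All predictions at rounds $t\ne i$ depend only on $x_t$, on earlier retained examples, and on the coins. So on this event the output vector $\hat Y_{-i}$ has the same conditional distribution under $S$ and $S'$; the prediction at round $i$ is excluded anyway. The event $\xi_i=1$ has probability $\gamma$. Hence, for any measurable $E$,
\[
\bP\bs{\hat Y_{-i}(S)\in E}\le\bP\bs{\hat Y_{-i}(S')\in E}+\gamma ,
\]
which is $(0,\gamma)$-joint privacy. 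If no hypothesis is consistent with $R$, the predictor uses a fixed public function; this case needs no separate treatment, because the coupling argument only uses that predictions are a deterministic function of the retained set and the current instance.

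\textbf{Mistake bound.} Fix a realisable stream with realiser $f$. Every retained example is labelled by $f$, so $\cH|_R\ni f$ is always nonempty. The key observation is the standard SOA potential: if SOA on $\cH|_R$ errs on $(x_t,y_t)$ and that example is retained, then $\Ldim(\cH|_R)$ strictly decreases. Since $\Ldim$ starts at $d$ and stays $\ge0$ while $f$ remains consistent, at most $d$ mistakes can ever be retained. The prediction at round $t$ is determined before the coin $\xi_t$ is flipped, and $\xi_t$ is independent of the past, so
\[
\bE\bs{\#\text{retained mistakes}}=\gamma\,\bE\bs{M_T}.
\]
This is an optional-stopping/linearity argument: sum $\bE[\mathbf 1\{\text{mistake at }t\}\xi_t]=\gamma\bP(\text{mistake at }t)$. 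Combining with the bound of $d$ retained mistakes gives $\gamma\bE M_T\le d$, that is, $\bE M_T\le d/\gamma$.

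The only step needing care is the SOA decrease. If SOA predicts $\hat y$ and the label is $1-\hat y$, then by definition of $\mathrm{SOA}_G$, we have $\Ldim(G|_{(x,1-\hat y)})<\Ldim(G)$. This follows from the usual fact that both restrictions cannot have full dimension, and the definition breaks ties towards label zero exactly when $G|_{(x,0)}$ has full dimension. I expect this to be routine; the main subtlety overall is just making the independence of $\xi_t$ from the round-$t$ prediction explicit.
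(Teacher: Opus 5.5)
Your proof is correct and is the standard subsampled-SOA (name-and-shame) argument that the paper does not prove itself but cites from \cite{DSS24}. Coupling on the retention coin $\xi_i$ gives $(0,\gamma)$-joint privacy, and the strict drop of $\Ldim(\cH|_R)$ on each retained mistake, together with the independence of $\xi_t$ from the round-$t$ prediction, gives $\gamma\,\bE M_T\le d$.
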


This is the subsampled-SOA argument
of~\cite[discussion following Theorem~1]{DSS24}.

Run the Name and Shame SOA predictor $\cF$ from
\Cref{fact:candidate-subsampled-soa} independently of $\cA$, retaining
each labelled example with probability $\delta/4$. It is
$(0,\delta/4)$-jointly differentially private and satisfies
$\bE M(\cF;S)\le4d/\delta$.

We combine $\cA$ and $\cF$ using Algorithm~\ref{alg:prediction-expectation}, with
$D=\ceil{C\epsilon^{-2}\log(2/\delta)}$ for a sufficiently
large absolute constant $C$. Write
$\cC=\mathsf{Combine}_D(\cA,\cF)$ for the resulting predictor.

\begin{algorithm}[H]
\caption{Combining $\cA$ with the independent predictor $\cF$}
\label{alg:prediction-expectation}
\small
\begin{algorithmic}[1]
\State $c_1,c_2\gets0$.
\For{$t=1,\ldots,T$}
  \State Receive $x_t$ and predictions $u_t,v_t$ from $\cA,\cF$.
  \State $p_t\gets
    \dfrac{(D+c_1)^2}{(D+c_1)^2+(D+c_2)^2}$.
  \State Predict $u_t$ with probability $p_t$, and $v_t$ otherwise.
  \State Receive $y_t$ and supply it to both $\cA$ and $\cF$.
  \State $c_1\gets c_1+\mathbf1\bc{u_t\ne v_t,\ u_t=y_t}$.
  \State $c_2\gets c_2+\mathbf1\bc{u_t\ne v_t,\ v_t=y_t}$.
\EndFor
\end{algorithmic}
\end{algorithm}

Each choice uses a fresh uniform draw, independent across rounds
and of both predictors.
When their predictions agree, the rule returns their common bit.
The same rule defines $\mathsf{Combine}_D(U,V)$ for arbitrary binary
prediction sequences $U=(u_t)$ and $V=(v_t)$, with $u_t,v_t$ supplied
before predicting and $y_t$ afterwards. When these sequences are
generated by predictors, both receive the true labels but never
the comparison rule's outputs.

\begin{lemma}\label{lem:prediction-combination}
For the chosen $D$, $\mathsf{Combine}_D$ is
$(\epsilon/2,\delta/2)$-jointly private on streams of triples
$(u_t,v_t,y_t)$, receiving $u_t,v_t$ before predicting and $y_t$
afterwards.
For every fixed pair of prediction sequences $U,V$ and labels,
\[
  \bE M(\mathsf{Combine}_D(U,V);S)
  \le3\min\bc{M(U;S),M(V;S)}+2D,
\]
where the expectation is over the comparison rule's randomness.
\end{lemma}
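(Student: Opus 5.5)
The plan is to prove the two parts of \Cref{lem:prediction-combination} separately: the mistake bound by a deterministic potential-style count, and privacy by a privacy-loss martingale. Throughout, write $a_t=D+c_1$ and $b_t=D+c_2$ for the values used at round $t$. On agreement rounds the rule outputs the common bit, so it errs exactly when both $U$ and $V$ err. On a disagreement round exactly one of $u_t,v_t$ is correct.

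\textbf{Mistake bound.} Let $c_1^\star,c_2^\star$ be the final counts and let $E_0$ be the number of agreement rounds on which both predictors err. Then $M(U;S)=E_0+c_2^\star$ and $M(V;S)=E_0+c_1^\star$. The expected mistakes of $\mathsf{Combine}_D(U,V)$ are $E_0$ plus a sum over disagreement rounds, which I split according to which predictor was correct.
\begin{itemize}
\item Rounds where $v_t$ is correct increment $c_2$, so there are $c_2^\star$ of them, and each costs at most $1$.
\item Rounds where $u_t$ is correct increment $c_1$. Each costs $b_t^2/(a_t^2+b_t^2)\le\min\bc{1,(D+c_2^\star)^2/a_t^2}$. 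Along these rounds $a_t$ takes each value $\ge D$ at most once.
\end{itemize}
With $\Lambda=D+c_2^\star$, the second group therefore costs at most $\sum_{a\ge1}\min\bc{1,\Lambda^2/a^2}\le\Lambda+\Lambda^2\sum_{a>\Lambda}a^{-2}\le2\Lambda$. The disagreement cost is then at most $c_2^\star+2(D+c_2^\star)=3c_2^\star+2D$, which gives $\bE M\le3M(U;S)+2D$. The roles of $U$ and $V$ are symmetric, so the same bound holds with $c_1^\star$ and $M(V;S)$, and taking the minimum proves the claim.

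\textbf{Privacy.} Fix round $i$ and two triple streams differing only there. Outputs before $i$ have identical distributions, and the output at $i$ is excluded by joint privacy. After round $i$, the pair $(c_1,c_2)$ differs between the two streams by a shift of at most one in each coordinate. The mechanism uses independent fresh coins per round and the inputs after $i$ are fixed, so the outputs after $i$ are independent Bernoulli choices. Only disagreement rounds carry information. At such a round the log-odds of choosing $u_t$ is $2\log(a_t/b_t)$, and the shift changes it by at most $2/\min(a_t,b_t)\le2/D$. The privacy loss is therefore a sum of independent terms, each bounded by $O(1/D)$ and with conditional variance $O\br{p_t(1-p_t)/\min(a_t,b_t)^2}$. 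I would bound it by a Freedman/Bernstein-type martingale inequality, so the loss is at most $\epsilon/2$ except with probability $\delta/2$. The choice $D\asymp\epsilon^{-2}\log(2/\delta)$ is designed to make this work once the total variance is $O(1/D)$.

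\textbf{Main obstacle.} The hard step is bounding that total variance, i.e.\ $\sum_t p_t(1-p_t)/\min(a_t,b_t)^2$, independently of $T$. The counts here evolve deterministically given the fixed triples. Using $p_t(1-p_t)\le\min^2/\max^2$, the rounds sharing a fixed value $m$ of the minimum contribute at most $\sum_{x\ge m}x^{-2}\approx1/m$. Summing this over all $m\ge D$ naively diverges logarithmically, so this crude bound is not enough. I would first try a sharper accounting that charges each unit increase of the minimum count to the rounds where $p_t$ is bounded away from $0$ and $1$. If that fails, I would exploit the random output structure itself: the event where one predictor is heavily favoured has tiny per-round loss. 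That lets a Rényi-style per-round bound of order $p_t(1-p_t)(1/a_t-1/b_t)^2$ telescope. Alternatively, I would fall back on a coarser but $T$-independent bound of order $1/D$ obtained by pairing increments of $c_1$ and $c_2$.
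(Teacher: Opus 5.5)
Your mistake bound is correct and is essentially the paper's argument. You charge the rounds where the better sequence errs directly, and bound its $k$-th correct disagreement round by $\min\{1,(D+m)^2/k^2\}$.

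The privacy half, however, stops short of a proof. You set up the right framework. After round $i$ the outputs are independent Bernoulli draws, each per-round log-likelihood ratio is $O(1/D)$, and Bernstein with $\lambda=\log(2/\delta)$ should finish. But you leave the decisive step, a $T$-independent bound on the accumulated variance, unresolved and only list candidate strategies. Your diagnosis that this step needs a new idea is also mistaken. The logarithmic divergence comes only from grouping rounds by $\min(a_t,b_t)$. Your own estimate already gives
\[
\frac{p_t(1-p_t)}{\min(a_t,b_t)^2}=\frac{\max(a_t,b_t)^2}{(a_t^2+b_t^2)^2}\le\frac{1}{\max(a_t,b_t)^2}\le\frac{4}{(a_t+b_t)^2}.
\]
Moreover $a_t+b_t=2D+n_t$, where $n_t$ is the number of disagreement rounds before $t$. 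Since $n_t$ takes distinct values on distinct disagreement rounds, the total is at most $4\sum_{n\ge0}(2D+n)^{-2}\le4/D$. This is the paper's key step. The paper bounds the per-round $\chi^2$-divergence directly, $(p-q)^2/(q(1-q))\le64/(a+b)^2$, using $|ab'-ba'|\le a+b$, $a/a'+b/b'\le4$ and $a^2+b^2\ge(a+b)^2/2$, and then sums over $n$.

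There is a second, smaller omission. Under the first stream the privacy loss $L=\sum_tZ_t$ is not centred. Its mean is the sum of the per-round KL divergences, and this drift must also be shown to be $O(1/D)$ before a concentration bound for $L-\bE[L]$ says anything about $L$ itself. The same computation handles it: $\bE[Z_t]=\mathrm{KL}\le\chi^2$, and $x(\log x)^2\le(x-1)^2$ gives $\bE[Z_t^2]\le\chi^2$. With $\bE[L],\Var(L)\le64/D$ and $|Z_t-\bE[Z_t]|\le8/D$, Bernstein gives $L\le64/D+\sqrt{128\lambda/D}+16\lambda/(3D)$ except with probability $\delta/2$. This threshold is at most $\epsilon/2$ for $D=\lceil C\epsilon^{-2}\log(2/\delta)\rceil$ with $C$ large, which completes the argument you sketched.
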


\begin{proof}
Fix $U,V$ and the labels, and let $m$ be the total number of mistakes of either
prediction sequence. Rounds on which this sequence is wrong contribute
at most $m$ mistakes. On the $k$-th disagreement round on which it is
correct, its counter is $k-1$ and the other counter is at most $m$.
The probability of a mistake is therefore at most
$\min\bc{1,\br{D+m}^2/k^2}$. Thus
\[
  \bE M(\mathsf{Combine}_D(U,V);S)
  \le m+\sum_{k\ge1}\min\bc{1,\frac{\br{D+m}^2}{k^2}}
  \le3m+2D,
\]
where the last inequality follows by splitting the sum at $D+m$
and using $\sum_{k>D+m}k^{-2}\le1/(D+m)$.
Choosing the sequence with fewer mistakes proves the mistake bound.

For privacy, fix two neighbouring streams of triples that differ at
round $i$. Let $P,Q$ be the laws of their outputs with coordinate $i$
omitted. Changing one triple changes each counter by at most one, and
all later updates agree. Since the counters depend only on the input
triples, the outputs are independent on each fixed stream. Only
disagreement rounds after $i$ contribute to privacy loss.

On such a round, write $a=D+c_1$, $b=D+c_2$, and use \(a',b'\) for the
neighbouring stream. The probabilities of choosing the first predictor
are $p=a^2/(a^2+b^2)$ and $q=\br{a'}^2/(\br{a'}^2+\br{b'}^2)$.  We
have $a,b,a',b'\ge D$ and $\abs{a-a'},\abs{b-b'}\le1$.  Each squared
count, and their sum, changes in logarithm by at most $2/D$. Hence
\(
  \abs{\log\frac pq}\le\frac4D,
  \abs{\log\frac{1-p}{1-q}}\le\frac4D.
\)
Also,
\[
  \frac{\br{p-q}^2}{q\br{1-q}}
  =\br{
    \frac{\br{ab'-ba'}\br{ab'+ba'}}{\br{a^2+b^2}a'b'}
  }^2
  \le\frac{64}{\br{a+b}^2},
\]
using $\abs{ab'-ba'}\le a+b$, $a/a'+b/b'\le4$, and
$a^2+b^2\ge\br{a+b}^2/2$. Let $Z_t$ be the log-likelihood ratio of
this round's output under the first stream. If $n$ disagreements
precede this round, then $a+b=2D+n$. The inequalities $\log x\le x-1$
and $x\br{\log x}^2\le\br{x-1}^2$ for $x>0$ give
\(
  \abs{Z_t}\le\frac4D,
  \bE\bs{Z_t}\le\frac{64}{\br{2D+n}^2},
  \bE\bs{Z_t^2}\le\frac{64}{\br{2D+n}^2}.
\)
Distinct disagreement rounds have distinct values of $n$, and because
$\sum_{n\ge0}\br{2D+n}^{-2}\le1/D$, the full log-likelihood ratio
$L=\sum_t Z_t$ satisfies
\(
  \bE\bs{L}\le\frac{64}{D},
  \Var\br{L}\le\frac{64}{D},
  \abs{Z_t-\bE\bs{Z_t}}\le\frac8D.
\). Bernstein's inequality, with $\lambda=\log\br{2/\delta}$, gives
\[
  \bP\bs{
    L>\frac{64}{D}+\sqrt{\frac{128\lambda}{D}}
      +\frac{16\lambda}{3D}
  }\le\frac{\delta}{2}.
\]
For the chosen $D$, the threshold is at most $\epsilon/2$ when $C$
is sufficiently large. Therefore, every output event $E$ satisfies
$P(E)\le e^{\epsilon/2}Q(E)+\delta/2$.
Exchanging the two streams proves $\br{\epsilon/2,\delta/2}$-joint
privacy.
\end{proof}

The expected mistake bound for $\cC$ was proved in Step~3 of
Section~\ref{sec:private-candidates}. We finish by proving privacy.

\begin{lemma}\label{lem:prediction-expectation}
The predictor $\cC$ in Algorithm~\ref{alg:prediction-expectation} is
$\br{\epsilon,\delta}$-jointly differentially private.
\end{lemma}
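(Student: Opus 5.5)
The plan is to view $\cC$ as an adaptive composition of three jointly private pieces: the predictor $\cA$, the independent predictor $\cF$, and the comparison rule $\mathsf{Combine}_D$. Their privacy parameters should then add to at most $(\epsilon/2,\delta/4)+(0,\delta/4)+(\epsilon/2,\delta/2)=(\epsilon,\delta)$. The budgets come from three earlier results: \Cref{lem:online-candidates-privacy} gives $(\epsilon/2,\delta/4)$ for $\cA$, \Cref{fact:candidate-subsampled-soa} with $\gamma=\delta/4$ gives $(0,\delta/4)$ for $\cF$, and \Cref{lem:prediction-combination} gives $(\epsilon/2,\delta/2)$ for the comparison rule on triple streams.

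\textbf{Setup.} Fix streams $S,S'$ that agree outside round $i$. The mechanism is run as follows.
\begin{enumerate}[label=(\arabic*)]
\item Release the predictions of $\cA$ with coordinate $i$ omitted, $\hat U_{-i}$.
\item Independently, release the predictions of $\cF$ with coordinate $i$ omitted, $\hat V_{-i}$.
\item Run $\mathsf{Combine}_D$ on the triples $(u_t,v_t,y_t)$ and release its outputs with coordinate $i$ omitted.
\end{enumerate}
The released output of $\cC$ outside $i$ is a post-processing of the transcript of step~(3). It is therefore enough to show that the joint law of the three releases satisfies the $(\epsilon,\delta)$ inequality.

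\textbf{Reusing the conditioning argument.} I would follow the composition argument already used in the proof of \Cref{lem:online-candidates-privacy}. Condition on the realised values of $\hat U_{-i}$ and $\hat V_{-i}$. The input triples to $\mathsf{Combine}_D$ at every round $t\neq i$ are then determined and identical on $S$ and $S'$, because $x_t,y_t$ agree and $u_t,v_t$ are fixed. Only the triple at round $i$ can differ, and it may have different conditional distributions under $S$ and $S'$.

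Since \Cref{lem:prediction-combination} holds for every pair of neighbouring triple streams, it holds for every pair of possible round-$i$ triples. Averaging over the conditional laws of the round-$i$ triple preserves the inequality. Adaptive composition of approximate DP then sums the parameters.

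Two further points complete the argument. First, $\cF$ and the comparison rule's coin flips are independent of $\cA$, so composing $\cA$ and $\cF$ is legitimate in either order. Second, neither $\cA$ nor $\cF$ receives the combiner's outputs, so there is no feedback loop and no output of a later stage enters an earlier one.

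\textbf{Main obstacle.} The delicate step is justifying that the omitted coordinates $u_i,v_i$ leak nothing beyond what joint privacy permits. They feed into $\mathsf{Combine}_D$ only at round $i$: they affect the counters $c_1,c_2$, and the triple at round $i$ is precisely the record that $\mathsf{Combine}_D$'s joint privacy is allowed to protect. Because the counters change by at most one, this is covered by the neighbouring-triple guarantee, so the hidden coordinates never appear directly in any released output. I would state this explicitly, mirroring the sentence in \Cref{lem:online-candidates-privacy} that ``the omitted answer may enter the later mechanism at round $i$, but it cannot be copied directly to another round.''
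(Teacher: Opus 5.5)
Your proposal is correct and takes the same approach as the paper's proof. Both first combine the $(\epsilon/2,\delta/4)$ guarantee for $\cA$ and the $(0,\delta/4)$ guarantee for $\cF$ into an $(\epsilon/2,\delta/2)$ guarantee for the pair of prediction sequences with coordinate $i$ omitted. Both then condition on these releases so that only the round-$i$ triple differs, average the $(\epsilon/2,\delta/2)$ guarantee of $\mathsf{Combine}_D$ over that triple's conditional law, and add the parameters by adaptive composition.
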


\begin{proof}
Fix neighbouring labelled streams differing at round $i$.
By~\Cref{lem:online-candidates-privacy}
and~\Cref{fact:candidate-subsampled-soa}, the two underlying
prediction sequences, with coordinate $i$ removed, together satisfy
$\br{\epsilon/2,\delta/2}$ privacy.  Conditional on these outputs,
every comparison triple outside $i$ is fixed and identical on the
neighbouring streams. The guarantee of
\Cref{lem:prediction-combination} holds for every two choices of the
missing triple, and hence for their conditional distributions.  The
composition argument in the proof of
Lemma~\ref{lem:online-candidates-privacy} therefore adds
$\br{\epsilon/2,\delta/2}$, giving $\br{\epsilon,\delta}$-joint
privacy for the final predictor.
\end{proof}

\par\endgroup

\section*{Acknowledgement}
The authors acknowledge Daniil Dmitriev, Vikrant Singhal, and
Prathamesh Dharangutte for discussions in various stages of this work.
The author claims all responsibility for the content of this paper.

Repeated interactions withAI~(ChatGPT 5.6) along with  prior
incomplete results from the authors were used to derive an initial
weaker version of~\Cref{thm:main} with~\(\sqrt{\log T}\) dependence.
The current results used AI for completing algebraic arguments,
drawing tikz figures, checking for missing definitions, reading and
understanding  prior work, particularly~\cite{Lyu26,CLNSS24}, and for
proof reading proofs.\clearpage
\bibliographystyle{alphaurl}
\bibliography{references}\end{document}